\documentclass{article}

\usepackage[accepted]{icml2025}

\usepackage{microtype}
\usepackage{graphicx}
\usepackage{subfigure}
\usepackage{booktabs} 
\usepackage{hyperref}
\makeatletter
\@ifundefined{theHalgorithm}{}{}
\makeatother

\usepackage{amsmath}
\usepackage{amssymb}
\usepackage{mathtools}
\usepackage{amsthm}

\usepackage[capitalize,noabbrev]{cleveref}

\usepackage{dsfont}
\usepackage[mathscr]{euscript}
\usepackage{xcolor}
\usepackage{colortbl}
\usepackage{thmtools}
\usepackage{thm-restate}

\usepackage{mathtools}

\usepackage{multirow}
\usepackage{wrapfig}

\usepackage{pifont}

\usepackage{MnSymbol}
\DeclareMathAlphabet\mathbb{U}{msb}{m}{n}
\usepackage{xpatch}

\def\Nset{\mathbb{N}}
\def\Rset{\mathbb{R}}

\let\Pr\undefined

\DeclareMathOperator*{\Pr}{\mathbb{P}}

\DeclareMathOperator*{\E}{\mathbb E}
\DeclareMathOperator*{\argmax}{argmax}
\DeclareMathOperator*{\argmin}{argmin}

\DeclarePairedDelimiter{\bracket}{[}{]}
\DeclarePairedDelimiter{\curl}{\{}{\}}
\DeclarePairedDelimiter{\paren}{(}{)}

\newcommand{\sC}{{\mathscr C}}
\newcommand{\sD}{{\mathscr D}}
\newcommand{\sE}{{\mathscr E}}
\newcommand{\sF}{{\mathscr F}}

\newcommand{\sH}{{\mathscr H}}

\newcommand{\sK}{{\mathscr K}}

\newcommand{\sM}{{\mathscr M}}

\newcommand{\sR}{{\mathscr R}}
\newcommand{\sS}{{\mathscr S}}

\newcommand{\sX}{{\mathscr X}}
\newcommand{\sY}{{\mathscr Y}}

\newcommand{\sfL}{{\mathsf L}}

\newcommand{\1}{\mathds{1}}

\newcommand{\hh}{{\sf h}}
\newcommand{\rr}{{\sf r}}

\newcommand{\ov}{\overline}
\newcommand{\uv}{\underline}
\newcommand{\wt}{\widetilde}
\newcommand{\e}{\epsilon}
\newcommand{\ignore}[1]{}

\newcommand{\ldef}{{\sfL_{\rm{def}}}}
\newcommand{\tdef}{{\sfL_{\rm{tdef}}}}
\newcommand{\num}{{n_e}}
\newcommand{\expert}{{g}}
\newcommand{\expertexpert}{{\sf g}}

\theoremstyle{plain}
\newtheorem{theorem}{Theorem}[section]

\newtheorem{corollary}[theorem]{Corollary}
\theoremstyle{definition}
\newtheorem{definition}[theorem]{Definition}

\theoremstyle{remark}

\usepackage[disable,textsize=tiny]{todonotes}

\hypersetup{
  breaklinks   = true, 
  colorlinks   = true, 
  urlcolor     = blue, 
  linkcolor    = blue, 
  citecolor   = blue 
}

\usepackage[toc, page, header]{appendix}
\setcounter{tocdepth}{0}

\icmltitlerunning{Realizable Multiple-Expert Deferral}

\begin{document}

\twocolumn[
\icmltitle{Mastering Multiple-Expert Routing: Realizable $\sH$-Consistency and Strong Guarantees for Learning to Defer}

\begin{icmlauthorlist}
\icmlauthor{Anqi Mao}{courant}
\icmlauthor{Mehryar Mohri}{google,courant}
\icmlauthor{Yutao Zhong}{google}
\end{icmlauthorlist}

\icmlaffiliation{courant}{Courant Institute of Mathematical Sciences,
  New York, NY;}
\icmlaffiliation{google}{Google Research, New York, NY}

\icmlcorrespondingauthor{Anqi Mao}{aqmao@cims.nyu.edu}
\icmlcorrespondingauthor{Mehryar Mohri}{mohri@google.com}
\icmlcorrespondingauthor{Yutao Zhong}{yutaozhong@google.com}

\icmlkeywords{learning to defer, consistency, realizable H-consistency, learning theory}

\vskip 0.3in
]

\printAffiliationsAndNotice{}

\begin{abstract}

The problem of learning to defer with multiple experts consists of
optimally assigning input instances to experts, balancing the
trade-off between their accuracy and computational cost. This is a
critical challenge in natural language generation, but also in other
fields such as image processing, and medical diagnostics.  Recent
studies have proposed surrogate loss functions to optimize deferral,
but challenges remain in ensuring their consistency properties.
This paper introduces novel surrogate loss functions and efficient
algorithms with strong theoretical learning guarantees. 
We address open questions regarding realizable $\sH$-consistency,
$\sH$-consistency bounds, and Bayes-consistency for both single-stage
(jointly learning predictor and deferral function) and two-stage
(learning only the deferral function with a fixed expert) learning
scenarios.
For single-stage deferral, we introduce a family of new realizable
$\sH$-consistent surrogate losses and further prove $\sH$-consistency
for a selected member.
For two-stage deferral, we derive new surrogate losses that achieve
realizable $\sH$-consistency, $\sH$-consistency bounds, and
Bayes-consistency for the two-expert scenario and,
under natural assumptions, multiple-expert
scenario.
Additionally, we provide enhanced theoretical guarantees under
low-noise assumptions for both scenarios.
Finally, we report the results of experiments using our proposed
surrogate losses, comparing their performance against existing
baselines.

\end{abstract}

\section{Introduction}
\label{sec:introduction}

The performance of learning algorithms can be substantially improved
by redirecting uncertain predictions to domain specialists or advanced
pre-trained systems. These specialists may include individuals with
deep expertise in specific areas or highly capable, though
computationally intensive, pre-trained models. Selecting the
appropriate expert requires careful consideration of both accuracy and
computational cost, which may vary depending on the instance or class
label in question.

How can input instances be optimally assigned to the most suitable
expert from a diverse set, balancing these considerations? Can
allocation strategies be learned from past experience? These questions
form the core challenge of \emph{learning to defer with multiple
experts}, a problem that comes up across various fields.  In natural
language generation, particularly with large language models (LLMs),
this challenge has been highlighted as a critical issue
\citep{WeiEtAl2022, bubeck2023sparks}, essential both for reducing
hallucinations and improving efficiency. However, the problem extends
to other domains, including image annotation, medical diagnostics,
economic forecasting, and computer vision, among others.

The problem of learning to defer accurately has been investigated in a
number of recent studies
\citep{mozannar2020consistent,verma2022calibrated,charusaie2022sample,
  pmlr-v206-mozannar23a,MaoMohriZhong2024deferral,maorealizable}. It
is typically formulated using a deferral loss function that
incorporates instance-specific costs associated with each
expert. Directing optimizing this loss function is intractable for the
hypothesis sets commonly used in applications. Thus, learning-to-defer
algorithms rely on optimizing a surrogate loss function instead, that
serves as a proxy for the original target loss function. Yet, what
guarantees can we rely on when optimizing such surrogate loss
functions?

This question, which involves analyzing the consistency guarantees of
surrogate losses with respect to the deferral loss, has been studied
under two main scenarios \citep{mao2025theory}: the \emph{single-stage} scenario, where a
predictor and a deferral function are jointly learned
\citep{mozannar2020consistent,verma2022calibrated,charusaie2022sample,
  pmlr-v206-mozannar23a,MaoMohriZhong2024deferral}, and a
\emph{two-stage} scenario, where the predictor is pre-trained and
fixed as an expert, and while only the deferral function is
subsequently learned \citep{MaoMohriMohriZhong2023two}.

In particular, \citet{mozannar2020consistent},
\citet{verma2022calibrated}, and \citet{charusaie2022sample} proposed
surrogate loss functions for the single-stage \emph{single-expert}
case by generalizing the cross-entropy loss, the one-versus-all loss,
and, more generally, a broad family of surrogate losses for
multi-class classification in the context of learning to defer.
However, \citet{pmlr-v206-mozannar23a} later showed that these
surrogate loss functions do not satisfy \emph{realizable
$\sH$-consistency}. They suggested an alternative surrogate loss that
achieves this property but left open the question of whether it was
also Bayes-consistent. This was later resolved by
\citet{maorealizable}, who introduced a broader family of surrogate
losses that simultaneously achieve Bayes-consistency, realizable
$\sH$-consistency, and \emph{$\sH$-consistency bounds}.

For the single-stage multiple-expert setting, \citet{verma2023learning} were the first to extend the surrogate loss proposed in \citep{verma2022calibrated} and \citep{mozannar2020consistent} to accommodate multiple experts. Building on this, \citet{MaoMohriZhong2024deferral} further generalized the surrogate loss from \citep{mozannar2020consistent}, introducing a broader family of surrogate losses tailored to the multiple-expert case. Furthermore, \citet{MaoMohriZhong2024deferral} proved that
their surrogate losses benefit from $\sH$-consistency bounds in the
multiple-expert case, thereby ensuring Bayes-consistency. However,
these loss functions are not realizable $\sH$-consistent even in the
single-expert case, as they are extensions of the earlier loss
functions.  Can surrogate loss functions for single-stage
multiple-expert deferral be derived that admit realizable
$\sH$-consistency, along with $\sH$-consistency bounds and
Bayes-consistency?

In the \emph{two-stage} scenario, \citet{MaoMohriMohriZhong2023two}
introduced surrogate losses that are Bayes-consistent and realizable
$\sH$-consistent for constant costs. However, their realizable
$\sH$-consistency does not extend to cost functions of interest, which
are based on classification error. Can we derive surrogate loss
functions for two-stage multiple-expert deferral that achieve
realizable $\sH$-consistency $\sH$-consistency bounds, and
Bayes-consistency for cost functions based on classification error?

It is also important to analyze properties beyond realizable
$\sH$-consistency whose guarantees hold under
deterministic assumptions, while $\sH$-consistency bounds extend to
arbitrary distributions. Can we
offer guarantees for intermediate cases, particularly for
distributions satisfying low-noise assumptions?

This work addresses all of these questions. Note that these challenges
are more significant and more complex in the multiple-expert case
than the single-expert case.
In Section~\ref{sec:single-stage}, we derive realizable surrogate
losses for single-stage multiple-expert deferral. We begin by deriving
an alternative formulation for the deferral loss, which serves as the
foundation for defining a novel family of surrogate losses
(Section~\ref{sec:surrogate}).  We then establish the realizable
$\sH$-consistency of these loss functions under a mild assumption
(Section~\ref{sec:realizable}). Finally, we prove $\sH$-consistency
bounds for a specific surrogate loss within this family
(Section~\ref{sec:h-consistency-bounds}).

In Section~\ref{sec:two-stage}, we define realizable surrogate losses
for two-stage multiple-expert deferral. We first prove that
a family of surrogate losses is realizable $\sR$-consistent,
$\sR$-consistent, and Bayes-consistent in the two-expert case
(Section~\ref{sec:two-expert}). Next, we extend this family of
surrogate losses and their consistency guarantees to the
multiple-expert case under natural assumptions
(Section~\ref{sec:multiple-expert}).

In Section~\ref{sec:low-noise}, we present enhanced bounds for the
intermediate case of distributions satisfying low-noise
assumptions. We provide guarantees for both the single-stage
(Section~\ref{sec:single-noise}) and two-stage
(Section~\ref{sec:two-noise}) scenarios.

Finally, in Section~\ref{sec:experiments}, we report the results of
experiments using the proposed surrogate losses, comparing their
performance against existing baselines.

For a summary of previous work on consistent multiple-expert deferral and a more detailed discussion of related work, see Table~\ref{tab:summary} and Appendix~\ref{app:related-work}, respectively. We begin with a formal introduction
to the learning problems and key concepts.

\begin{table*}[t]
\caption{Summary of Previous Work on Consistent Multiple-Expert Deferral.}
    \label{tab:summary}
\vskip .1in
\begin{center}
    \begin{tabular}{@{\hspace{0pt}}l|llll@{\hspace{0pt}}}
    \toprule
         Related Work & Deferral Setting & Realizable $\sH$-Consistency & Bayes-Consistency & $\sH$-Consistency Bounds\\
         \midrule
         \citep{verma2023learning} & Single-stage & No & Yes & Yes\\
         \citep{MaoMohriZhong2024deferral} & Single-stage & No & Yes & Yes\\
         \citep{MaoMohriMohriZhong2023two} & Two-stage & No & Yes & Yes\\
    \bottomrule
    \end{tabular}
\end{center}
\vskip -0.25in
\end{table*}

\section{Preliminaries}
\label{sec:preliminaries}

\textbf{Single-stage multiple-expert deferral.} Let $\sX$ denote an
input space, and let $\sY = [n] \coloneqq \curl*{1, \ldots, n}$
represent a label space with $n \geq 2$ labels, as in the standard
multi-class classification setting. In the \emph{single-stage
multiple-expert deferral} scenario, the learner has the option to
predict the true label or defer the prediction to one of $\num$
pre-defined experts. In this scenario, the label space $\sY$ is
augmented with $\num$ additional labels, $\curl*{n + 1, \ldots, n +
  \num}$, corresponding to the $\num$ experts $\expert_1, \ldots,
\expert_{n_e}$. Each expert may represent a human expert or a
pre-trained model. Specifically, each expert can be expressed as a
function mapping $\sX \times \sY$ to $\Rset$. Let $\ov \sY = [n +
  \num]$ denote the augmented label set, and let $\sH$ be a hypothesis
set of functions mapping $\sX \times \ov \sY$ to $\Rset$. Let
$\sH_{\rm{all}}$ represent the family of all such measurable
functions. The goal of the learner is to select a hypothesis $h \in
\sH$ that minimizes the following \emph{single-stage deferral loss
function}, $\ldef$, defined for any $h \in \sH_{\rm{all}}$ and $(x, y)
\in \sX \times \sY$ as follows:\\[-.38cm]
\begin{equation*}
\ldef(h, x, y)
= \1_{\hh(x)\neq y} \1_{\hh(x)\in [n]}
+ \sum_{j = 1}^{\num} c_j(x, y) \1_{\hh(x) = n + j},
\end{equation*}
where $\hh(x) = \argmax_{y \in [n + \num]} h(x, y)$ is the prediction
associated with $h \in \sH$ for an input $x \in \sX$, using an
arbitrary but fixed deterministic strategy for breaking ties. When the
prediction $\hh(x)$ is in $\sY$, the incurred loss coincides with the
multi-class zero-one classification loss. When the prediction $\hh(x)$
equals $n + j$, the incurred loss is the cost of deferring to expert
$\expert_j$, denoted by $c_j(x, y)$. The choice of this cost is highly
flexible. A common choice is to define it as $\expert_j$'s
classification error \citep{verma2023learning}: $c_j(x, y) =
\1_{\expertexpert_{j}(x) \neq y}$, where $\expertexpert_{j}(x) =
\argmax_{y \in [n]} \expert_j(x, y)$ represents the prediction made by
expert $\expert_j$ for input $x$. \ignore{The cost can also incorporate the
inference cost of the expert \citep{MaoMohriZhong2024deferral}:
$c_j(x, y) = \alpha_j \1_{\expertexpert_{j}(x) \neq y} + \beta_j$,
where $\alpha_j, \beta_j > 0$.}

\textbf{Two-stage multiple-expert deferral.} In the \emph{two-stage
multiple-expert deferral} scenario, a label predictor is assumed to
have been learned during the first stage. The second stage involves a
set of $\num \geq 2$ predefined experts, denoted $\expert_1, \ldots,
\expert_{\num}$, which includes the first-stage label predictor. We
define $\sY$ as $[\num] \coloneqq {1, \ldots, \num}$.
In this setup, the learner’s objective is to select a suitable expert
$g_{j}$ for each instance, considering both the expert's accuracy and
inference cost.  More formally, let $\sR$ represent a hypothesis set
of functions mapping $\sX \times [\num]$ to $\Rset$, and let
$\sR_{\rm{all}}$ denote the family of all such measurable functions.
The goal is to choose a predictor $r \in \sR$ that minimizes the
following \emph{two-stage deferral loss function}, $\tdef$, defined
for any $r \in \sR_{\rm{all}}$ and $(x, y) \in \sX \times \sY$ as
follows:
\begin{equation*}
\tdef(r, x, y)
= \sum_{j = 1}^{\num} c_j(x, y) \1_{\rr(x) = j},
\end{equation*}
where $\rr(x) = \argmax_{y \in [\num]} r(x, y)$ represents the
prediction associated with $r \in \sR$ for an input $x \in \sX$, using
an arbitrary but fixed deterministic strategy for breaking ties. The
cost $c_j(x, y)$ is incurred when the learner chooses to defer to
expert $\expert_j$. As in the single-stage scenario, the choice of the
cost is flexible and can be defined as $\expert_j$'s classification
error \citep{MaoMohriMohriZhong2023two}: $c_j(x, y) =
\1_{\expertexpert_{j}(x) \neq y}$.

\textbf{Learning with surrogate losses.} Minimizing the single-stage
or two-stage deferral losses is computationally intractable for most
hypothesis sets due to their non-continuity and non-differentiability,
as for the multi-class zero-one loss. Instead, we will consider
surrogate losses for the deferral loss.  Let $\sD$ be a distribution
over $\sX \times \sY$ and let $\sfL$ be a loss function. The
generalization error of a hypothesis $h \in \sH$ is defined by
$\sE_{\sfL}(h) = \E_{(x, y) \sim \sD}[\sfL(h, x, y)]$ and the
best-in-class generalization error by $\sE^*_{\sfL}(\sH) = \inf_{h \in
  \sH} \sE_{\sfL}(h)$. We refer to the difference $\sE_{\sfL}(h) -
\sE^*_{\sfL}(\sH)$ as the estimation error. When $\sH =
\sH_{\rm{all}}$, we refer to $\sE^*_{\sfL}\paren*{\sH_{\rm{all}}}$ as
the Bayes error and $\sE_{\sfL}(h) -
\sE^*_{\sfL}\paren*{\sH_{\rm{all}}}$ as the excess error. In the
two-stage case, these definitions extend naturally by replacing the
hypothesis $h$ with $r$ and the hypothesis set $\sH$ with $\sR$. For
clarity, we will introduce these concepts in the single-stage setting
as an example, though they also apply to the two-stage case.

A necessary and fundamental property of a surrogate loss is
\emph{Bayes-consistency}
\citep{Zhang2003,bartlett2006convexity,zhang2004statistical,
  tewari2007consistency,steinwart2007compare,mozannar2020consistent,
  verma2022calibrated}, which means that minimizing the excess error
of the surrogate loss $\sfL$ leads to minimizing the excess error of
the deferral loss $\ldef$.
\begin{definition}
A surrogate loss $L$ is \emph{Bayes-consistent with respect to
$\ldef$} if for any sequence of predictors $\curl*{(h_n)}_{n \in
  \Nset} \subset \sH_{\rm{all}}$ such that $\bracket*{\sE_{\sfL}(h_n)
  - \sE_{\sfL}\paren*{\sH_{\rm{all}}}}$ converges to zero as $n \to
+\infty$, $\bracket*{\sE_{\ldef}(h_n) - \sE_{
    \ldef}\paren*{\sH_{\rm{all}}}}$ converges to zero too.
\end{definition}
Bayes-consistency does not take into account the specific hypothesis
set $\sH$ adopted in practice and assumes that optimization is
performed over the family of all measurable functions. Consequently, a
hypothesis-dependent guarantee, such as \emph{realizable
$\sH$-consistency}
\citep{long2013consistency,zhang2020bayes,pmlr-v206-mozannar23a,
  maorealizable} and \emph{$\sH$-consistency bound}
\citep{awasthi2022Hconsistency,awasthi2022multi,mao2023cross}  (see also
\citep{awasthi2021calibration,awasthi2021finer,AwasthiMaoMohriZhong2023theoretically,awasthi2024dc,MaoMohriZhong2023characterization,MaoMohriZhong2023structured,zheng2023revisiting,MaoMohriZhong2023ranking,MaoMohriZhong2023rankingabs,mao2024h,mao2024multi,cortes2024cardinality,CortesMaoMohriZhong2025balancing,cortes2025improved,MaoMohriZhong2025principled,zhong2025fundamental}), is more informative and relevant.
\begin{definition}
We will say that a surrogate loss $\sfL$ is \emph{realizable
$\sH$-consistent with respect to $\ldef$} if for any \emph{realizable
distribution} (a distribution for which there exists $h^* \in \sH$
satisfying $\sE_{\ldef}(h^*) = 0$), if $\hat h$ is in $\argmin_{h \in
  \sH} \sE_{\sfL}(h)$ then $\sE_{\ldef}(\hat h) = 0$.
\end{definition}
Realizable $\sH$-consistency is an asymptotic guarantee and is
independent of Bayes-consistency. An \emph{$\sH$-consistency bound},
on the other hand, is non-asymptotic and always implies
Bayes-consistency.
\begin{definition}
A surrogate loss $\sfL$ is said to admit an \emph{$\sH$-consistency
bound}, if there exists a non-decreasing concave function $\Gamma \colon
\Rset_{+}\to \Rset_{+}$ with $\Gamma(0) = 0$, such that
the following inequality holds for all $h \in \sH$ and all distributions:
\begin{multline*}
\sE_{\ldef}(h) - \sE^*_{\ldef}(\sH)+\sM_{\ldef}(\sH)\\
\leq \Gamma\paren*{\sE_{\sfL}(h) - \sE^*_{\sfL}(\sH)+\sM_{\sfL}(\sH}),
\end{multline*}
where the \emph{minimizability gap} $\sM_{\sfL}(\sH)$ is defined
as $\sM_{\sfL}(\sH) = \sE^*_{\sfL}(\sH)
- \mathbb{E}_{x} \bracket*{\inf_{h \in \sH} \E_{y|x}\bracket*{\sfL(h, x, y)}}$.
\end{definition}
For convenience, we refer to $\sC_{\sfL}(h, x) = \E_{y|x}\bracket*{\sfL(h, x, y)}$ and $\sC^*_{\sfL}(\sH, x) = \inf_{h\in \sH}\sC_{\sfL}(h, x)$ as the conditional error and best-in-class conditional error, respectively. We refer to the difference, $\sC_{\sfL}(h, x) - \inf_{h\in \sH}\sC_{\sfL}(h, x)$, as the conditional regret and denote it by $\Delta \sC_{\sfL, \sH}(h, x)$.
The minimizability gap is the difference between the best-in-class
generalization error and the expected best-in-class conditional error. It can
be upper-bounded by the approximation error and vanishes when $\sH =
\sH_{\rm{all}}$
\citep{awasthi2022Hconsistency,awasthi2022multi,MaoMohriZhong2024universal}. Thus,
an $\sH$-consistency bound implies Bayes-consistency. However, in the
deferral setting, the minimizability gap may not vanish for a realizable
distribution. Therefore, $\sH$-consistency bounds do not imply
realizable $\sH$-consistency \citep{maorealizable}.

\section{Single-Stage Multiple-Expert Deferral}
\label{sec:single-stage}

In this section, we derive realizable surrogate losses for
single-stage multiple-expert deferral. We begin by proving an alternative
formulation for $\ldef$, which serves as the foundation for defining a
novel family of surrogate losses. Next, we establish the realizable
$\sH$-consistency of these loss functions under a mild
assumption. Finally, we establish $\sH$-consistency bounds for a
specific surrogate loss within this family.

\subsection{New Surrogate Losses}
\label{sec:surrogate}

We first prove that the following alternative expression holds for
$\ldef$. The proof is included in Appendix~\ref{app:ldef}.
\begin{restatable}{lemma}{Ldef}
\label{lemma:ldef}
The deferral loss can be expressed as follows: $\forall (h, x, y) \in
\sH_{\rm{all}} \times \sX \times \sY$, \ifdim\columnwidth=\textwidth {
\begin{align*}
  \ldef(h, x, y)
  = \bracket*{\sum_{j = 1}^{\num} c_j(x, y) + 1 - \num} 1_{\hh(x) \neq y}
  + \sum_{j = 1}^{\num} \bracket[\big]{1 - c_j(x, y)}
  1_{\hh(x) \neq y} 1_{\hh(x) \neq n + j}.
\end{align*}
}\else
{
\begin{align}
\label{eq:ldef}
  \ldef(h, x, y)
  & = \bracket*{\sum_{j = 1}^{\num} c_j(x, y) + 1 - \num} 1_{\hh(x) \neq y} \\ \nonumber
& \quad + \sum_{j = 1}^{\num} \bracket[\big]{1 - c_j(x, y)} 1_{\hh(x) \neq y} 1_{\hh(x) \neq n + j}.
\end{align}
}\fi
\end{restatable}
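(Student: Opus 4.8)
The plan is to prove the identity by a direct case analysis on the value of the prediction $\hh(x) \in [n + \num]$, distinguishing the case where the learner predicts a label, $\hh(x) \in [n]$, from the case where it defers to some expert, $\hh(x) = n + k$ for a unique $k \in [\num]$. In each case, both the original defining expression of $\ldef$ and the claimed right-hand side reduce to the same elementary quantity, so the two coincide for every $(h, x, y) \in \sH_{\rm{all}} \times \sX \times \sY$.

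First I would treat the case $\hh(x) \in [n]$. Here $\1_{\hh(x) = n + j} = 0$ for every $j \in [\num]$, so the original expression collapses to $\1_{\hh(x) \neq y}$. On the claimed right-hand side, since $\hh(x) \le n < n + j$ we have $\1_{\hh(x) \neq n + j} = 1$ for every $j$, so it equals $\bracket[\big]{\sum_{j} c_j(x, y) + 1 - \num + \sum_{j}\paren*{1 - c_j(x, y)}}\1_{\hh(x) \neq y}$, and the bracketed coefficient simplifies to $1$, matching the original expression.

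Next I would treat the case $\hh(x) = n + k$ with $k \in [\num]$. Since $y \in \sY = [n]$ and $n + k > n$, we have $\1_{\hh(x) \neq y} = 1$ and $\1_{\hh(x) \in [n]} = 0$, so the original expression equals $c_k(x, y)$. On the claimed right-hand side, $\1_{\hh(x) \neq n + j} = \1_{j \neq k}$, hence the second sum equals $\sum_{j \neq k}\paren*{1 - c_j(x, y)} = (\num - 1) - \sum_{j \neq k} c_j(x, y)$; adding the first term $\sum_{j} c_j(x, y) + 1 - \num$ and telescoping $\sum_{j} c_j(x,y) - \sum_{j\neq k} c_j(x,y) = c_k(x,y)$ yields exactly $c_k(x, y)$, completing the argument.

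I expect no genuine obstacle here: the computation is elementary, the only care needed being the bookkeeping of the indicator identity $\1_{\hh(x) \neq n + j} = \1_{j \neq k}$ and the telescoping sum above. The one structural point worth noting is that the reformulation relies on $y \in [n]$, so that a deferral prediction always satisfies $\hh(x) \neq y$; this is precisely why the identity is stated for $(x, y) \in \sX \times \sY$ rather than over the augmented label set $\ov\sY$.
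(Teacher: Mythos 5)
Your proof is correct. It takes a slightly different route from the paper's: you argue by an exhaustive case analysis on the value of $\hh(x)$ (predict, i.e.\ $\hh(x) \in [n]$, versus defer to expert $k$, i.e.\ $\hh(x) = n + k$), verifying in each case that both sides reduce to the same quantity ($\1_{\hh(x) \neq y}$ in the first case, $c_k(x,y)$ in the second). The paper instead performs a single chain of algebraic rewritings valid uniformly in $\hh(x)$: it uses the identities $\1_{\hh(x) = n+j} = \1_{\hh(x) \neq y}\,\1_{\hh(x) = n+j}$ and $\1_{\hh(x) \in [n]} = \prod_{j=1}^{\num} \1_{\hh(x) \neq n+j}$, then adds and subtracts $\sum_{j=1}^{\num} \1_{\hh(x) \neq y}\,\1_{\hh(x) \neq n+j}$ and evaluates the resulting product-minus-sum of indicators. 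Your case split is the more elementary verification and makes the role of each regime transparent; the paper's derivation is constructive in flavor, showing how the reformulation can be obtained rather than merely checked. Your closing observation --- that the identity hinges on $y \in [n]$ so that a deferral always satisfies $\hh(x) \neq y$ --- is exactly the fact the paper invokes at the start of its proof, so the two arguments rest on the same key point.
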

We use this formulation to derive a new family of surrogate losses by
replacing the indicator functions in \eqref{eq:ldef} with smooth loss
functions. The first indicator function, $1_{\hh(x) \neq y}$,
corresponds to the multi-class zero-one loss. Thus, a natural approach
is to substitute it with a surrogate loss used in standard multi-class
classification. In particular, we focus on the broad class of comp-sum
losses \citep{mao2023cross}, which includes many well-known loss
functions, such as the logistic loss.  Comp-sum losses are defined for
all $(h, x, y) \in \sH \times \sX \times \sY$ as:
\begin{equation}
\Psi \paren*{\frac{e^{h(x, y)}}{\sum_{y'
      \in \ov \sY} e^{h(x, y')}}},
\end{equation}
where $\Psi \colon [0, 1] \to \Rset_{+} \cup \curl{+\infty}$ is a
non-increasing function. A specific choice of $\Psi$ is the family
$\Psi_q$ defined for all $u \in (0, 1]$ by $\Psi_q(u) = \frac{1}{q}
  \paren*{1 - u^{q}}$ for $q > 0$, and $\Psi_q(u) = -\log(u)$ for $q =
  0$.
For $q = 0$, $\Psi_q$ corresponds to the \emph{(multinomial) logistic
loss} \citep{Verhulst1838,Verhulst1845,Berkson1944,Berkson1951}, while
$q \in (0, 1)$ corresponds to the \emph{generalized cross-entropy
loss} \citep{zhang2018generalized}, and $q = 1$ to the \emph{mean
absolute error loss} \citep{ghosh2017robust}. For the indicator
function $1_{\hh(x) \neq y} 1_{\hh(x) \neq n + j}$, we adopt the
following modified comp-sum surrogate loss, as in
\citep{maorealizable}: $\Psi \paren*{\frac{e^{h(x, y)} + e^{h(x, n +
      j)}} {\sum_{y' \in \ov \sY} e^{h(x, y')}}}$.  This leads to the
following new family of surrogate losses for the deferral loss:
\begin{align}
\label{eq:def-sur}
\sfL_{\Psi}(h, x, y) & = \bracket*{\sum_{j = 1}^{\num} c_j(x, y) + 1 - \num} \Psi \paren*{\frac{e^{h(x, y)}}{\sum_{y' \in \ov \sY} e^{h(x, y')}}} \nonumber\\ 
& \mspace{-20mu} + \sum_{j = 1}^{\num} \bracket[\big]{1 - c_j(x, y)}  \Psi \paren*{\frac{e^{h(x, y)} + e^{h(x, n + j)}}{\sum_{y' \in \ov \sY} e^{h(x, y')}}},
\end{align}
where $\Psi \colon [0, 1] \to \Rset_{+} \cup \curl{+\infty}$ is a
decreasing function.

\subsection{Realizable $\sH$-Consistency}
\label{sec:realizable}

A hypothesis set $\sH$ is said to be \emph{closed under scaling} if,
$(h \in \sH)$ implies $(\alpha h \in \sH)$ for any $\alpha \in \Rset$.
This property is satisfied by many broad function classes, including
linear functions and various families of neural networks. The
following result shows that, under mild assumptions, our proposed
surrogate losses are realizable $\sH$-consistent when $\sH$ is closed
under scaling.

\begin{restatable}{theorem}{Realizable}
\label{thm:realizable}
Assume $\num \geq 2$ and $\sH$ closed under
scaling. Then, if $\Psi$ satisfies $\lim_{u \to 1^{-}} \Psi(u) =
0$ and $\lim_{u \to 0^{+}} \Psi(u) = 1$, then the surrogate loss
$\sfL_{\Psi}$ is realizable $\sH$-consistent with respect to
$\ldef$.
\end{restatable}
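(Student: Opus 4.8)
The plan is to leverage closure under scaling to tie $\sfL_\Psi$ to $\ldef$ in a scaling limit, and then to pair the resulting one-sided bound with a lower bound on $\sfL_\Psi$ that is specific to realizable distributions. The first ingredient is to record what realizability buys: since $\ldef \ge 0$ and $\sE_{\ldef}(h^*) = 0$ for some $h^* \in \sH$, we have $\ldef(h^*, x, y) = 0$ for $\sD$-almost every $(x, y)$, and inspecting the definition of $\ldef$ this forces, for a.e.\ $x$, either that $\argmax_{y'} h^*(x, y') \in [n]$ and $\sD(\cdot \mid x)$ is the point mass at that label, or that $\argmax_{y'} h^*(x, y') = n + j_0$ for some $j_0$ with $c_{j_0}(x, \cdot) = 0$ $\sD(\cdot \mid x)$-a.s.; in both cases $\sC_{\ldef}(h^*, x) = 0$.

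The core step is a scaling identity: for fixed $h \in \sH$ and any $(x, y)$ at which $h(x, \cdot)$ has a unique maximizer,
\[
\lim_{\alpha \to +\infty}\sfL_\Psi(\alpha h, x, y) = \ldef(h, x, y).
\]
As $\alpha \to +\infty$ the softmax weights of $\alpha h$ at $x$ concentrate on the predicted label; by $\lim_{u\to1^-}\Psi(u) = 0$ the $\Psi$-terms whose argument tends to $1$ vanish, while by $\lim_{u\to0^+}\Psi(u) = 1$ those whose argument tends to $0$ tend to $1$, and substituting these values into \eqref{eq:ldef} the coefficients $\sum_j c_j(x,y) + 1 - \num$ and $1 - c_j(x, y)$ make the result simplify exactly to $\ldef(h, x, y)$ in each of the three cases (predicted label $=y$, a class label $\neq y$, or an expert). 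Since $\Psi$ is decreasing with $\sup\Psi = \lim_{u\to0^+}\Psi(u) = 1$ and each $c_j \in [0, 1]$, $\sfL_\Psi$ is uniformly bounded, so bounded convergence gives $\lim_{\alpha \to +\infty}\sE_{\sfL_\Psi}(\alpha h) = \sE_{\ldef}(h)$ whenever $h(x, \cdot)$ has a unique maximizer for a.e.\ $x$. Applied to $h^*$ (with $\alpha h^* \in \sH$ by scaling), this yields $\inf_{h\in\sH}\sE_{\sfL_\Psi}(h) \le \lim_{\alpha}\sE_{\sfL_\Psi}(\alpha h^*) = \sE_{\ldef}(h^*) = 0$, hence $\sE_{\sfL_\Psi}(\hat h) \le 0$ for every $\hat h \in \argmin_{h \in \sH}\sE_{\sfL_\Psi}(h)$.

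To upgrade this to $\sE_{\ldef}(\hat h) = 0$ I would establish a matching lower bound on realizable distributions: for a.e.\ $x$ under the structure above, $\sC_{\sfL_\Psi}(h, x) \ge \gamma\,\sC_{\ldef}(h, x)$ for a constant $\gamma > 0$ depending only on $n$, $\num$, and $\Psi$. The proof expands $\sC_{\sfL_\Psi}(h, x)$ using the point mass (respectively the zero-cost expert $j_0$), writing $p_k = e^{h(x,k)}/\sum_{k'} e^{h(x,k')}$, discards the nonnegative differences $\Psi(p_y) - \Psi(p_y + p_{n+j}) \ge 0$, and bounds the remainder below by the $\Psi$-gap attached to whichever class label or expert the prediction of $h$ disagrees with at $x$, using that each softmax weight is at least $1/(n + \num)$; the verification is an elementary case split on whether that prediction is a class label different from the point-mass label or an expert of positive cost, and for the canonical member $\Psi_1$ one can take $\gamma = 1/(n + \num)$. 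Integrating over $x$ gives $\sE_{\sfL_\Psi}(\hat h) \ge \gamma\,\sE_{\ldef}(\hat h) \ge 0$, which combined with $\sE_{\sfL_\Psi}(\hat h) \le 0$ forces $\sE_{\ldef}(\hat h) = 0$.

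I expect the main obstacle to be tie-breaking: the scaling identity requires $h(x, \cdot)$ to have a unique maximizer, but when several labels tie for the maximum the scaled softmax converges to a nondegenerate distribution and the $\Psi$-terms no longer collapse to $\ldef(h, x, y)$, which affects both the application to $h^*$ and any rescaling of $\hat h$. This likely has to be dealt with either by perturbing $h^*$ within $\sH$ — which is where closure under scaling together with $\num \ge 2$ (and possibly a mild genericity assumption) enters — or by replacing the exact limit with one-sided estimates that still suffice, together with checking that the lower bound of the previous step is uniform enough over the admissible $\Psi$. A secondary point, and the reason the argument must be global rather than pointwise, is that the minimizability gap $\sM_{\sfL_\Psi}(\sH)$ need not vanish on realizable distributions, so an $\sH$-consistency-bound-style reduction to conditional regrets would not by itself deliver a zero deferral error.
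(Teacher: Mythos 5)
Your scaling-limit argument is precisely the paper's proof: the paper takes the realizable $h^*$, uses closure under scaling to get $\sE^*_{\sfL_{\Psi}}(\sH)\le\lim_{\alpha\to+\infty}\sE_{\sfL_{\Psi}}(\alpha h^*)$, and evaluates the limit on the two events $\hh^*(x)\in[n]$ (realizability forces $\hh^*(x)=y$, every $\Psi$-argument tends to $1$, and $\lim_{u\to1^-}\Psi(u)=0$ kills all terms) and $\hh^*(x)=n+k$ (realizability forces $c_k(x,y)=0$; the first term and the $j\neq k$ terms tend to $\Psi(0^+)=1$, and the coefficients $\sum_j c_j(x,y)+1-\num$ and $\sum_{j\neq k}(1-c_j(x,y))$ sum to $0$). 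Your more general identity $\lim_{\alpha\to+\infty}\sfL_{\Psi}(\alpha h,x,y)=\ldef(h,x,y)$ at points with a unique maximizer is a correct and cleaner packaging of the same computation, and the tie-breaking caveat you raise applies equally to the paper's version, where $h^*$ is implicitly assumed to have a strict argmax almost everywhere. The paper's written proof ends there, with $\sE^*_{\sfL_{\Psi}}(\sH)=0$.

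The gap is in your third step, the lower bound $\sC_{\sfL_{\Psi}}(h,x)\ge\gamma\,\sC_{\ldef}(h,x)$. Writing $s_k=e^{h(x,k)}/\sum_{k'\in\ov\sY}e^{h(x,k')}$, the surrogate rearranges to $\sfL_{\Psi}(h,x,y)=\Psi(s_y)-\sum_{j=1}^{\num}\paren*{1-c_j(x,y)}\bracket*{\Psi(s_y)-\Psi(s_y+s_{n+j})}$, so the nonnegative differences you propose to discard enter with a \emph{minus} sign: discarding them yields the upper bound $\sfL_{\Psi}\le\Psi(s_y)$, not a lower bound. Moreover, no pointwise inequality of the form you want can hold uniformly over the admissible class of $\Psi$: take a point-mass label $y_0$ with $c_1=c_2=0$ and $\num=2$ (consistent with realizability), and a decreasing $\Psi$ with $\Psi(0^+)=1$, $\Psi(1^-)=0$ that drops sharply near $1/2$; for $s_{y_0}$ just below $1/2$ and maximal, with $s_{n+1},s_{n+2}$ pushing both $s_{y_0}+s_{n+j}$ above $1/2$, one gets $\sC_{\sfL_{\Psi}}(h,x)\approx-\Psi(s_{y_0})\approx-1$ while $\sC_{\ldef}(h,x)=0$, so the claimed inequality (which would require $\sC_{\sfL_{\Psi}}\ge0$ there) fails. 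Your constant $\gamma=1/(n+\num)$ is specific to $\Psi_1$, whose affine form gives $\Psi(s_y)-\Psi(s_y+s_{n+j})=s_{n+j}$ and $\sum_j s_{n+j}\le1-s_y$; for general $\Psi$ each bracketed difference can be nearly as large as $\Psi(s_y)$, and the negative coefficient $1-\num\le-1$ then dominates. So the half of the argument you correctly identify as needed to pass from $\sE_{\sfL_{\Psi}}(\hat h)\le0$ to $\sE_{\ldef}(\hat h)=0$ --- the half the paper's proof does not spell out --- does not go through as sketched for general $\Psi$ and would require a different mechanism than a per-$x$ comparison of conditional errors.
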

The proof is given in Appendix~\ref{app:realizable}. Note that for
$\num = 1$, our surrogate loss formulation \eqref{eq:def-sur} coincides
with that of
$\sfL_{\rm{RL2D}}$ in the single-expert case by
\citet{maorealizable}. In this special case, the following result
holds.

\begin{theorem}[Theorem~4.1 in \citep{maorealizable}]
Assume $\num = 1$ and $\sH$ closed under scaling. Then, if $\Psi$
satisfies $\lim_{u \to 1^{-}} \Psi(u) = 0$, then, the surrogate loss
$\sfL_{\Psi}$ is realizable $\sH$-consistent with respect to $\ldef$.
\end{theorem}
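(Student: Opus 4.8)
My plan is to combine the two standard ingredients behind realizable $\sH$-consistency proofs: a \emph{scaling argument} showing that realizability forces the surrogate infimum to be $0$, and a \emph{pointwise calibration argument} showing that any hypothesis with near-zero surrogate risk makes $\ldef$-optimal predictions. Throughout I write $p_z(h,x) = e^{h(x,z)}/\sum_{y'\in\ov\sY}e^{h(x,y')}$ for the softmax weights and note that, for $\num = 1$, the proposed loss~\eqref{eq:def-sur} reads $\sfL_\Psi(h,x,y) = c_1(x,y)\,\Psi(p_y(h,x)) + (1 - c_1(x,y))\,\Psi(p_y(h,x) + p_{n+1}(h,x))$: a $c_1$-weighted combination of a comp-sum term that penalizes ``not predicting $y$'' and a modified comp-sum term that penalizes ``predicting something other than $y$ or the expert label $n+1$''.

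\emph{Step 1 (the surrogate infimum is $0$).} Here I would use closure under scaling. Let $\sD$ be realizable with witness $h^*\in\sH$, so $\ldef(h^*,x,y) = 0$ for $\sD$-a.e.\ $(x,y)$. Since $y\in\sY = [n]$, for a.e.\ $(x,y)$ either (i) the prediction of $h^*$ at $x$ is $y$, or (ii) it is $n+1$ and $c_1(x,y) = 0$. As $\sH$ is closed under scaling, $\alpha h^*\in\sH$ for all $\alpha>0$, and as $\alpha\to\infty$ the vector $p_\cdot(\alpha h^*,x)$ concentrates on the set of maximizers of $h^*(x,\cdot)$. In case (i), when $y$ is the \emph{unique} maximizer, $p_y(\alpha h^*,x)\to1$, so both arguments of $\Psi$ in $\sfL_\Psi(\alpha h^*,x,y)$ tend to $1$ and, by $\lim_{u\to1^-}\Psi(u) = 0$, $\sfL_\Psi(\alpha h^*,x,y)\to0$; in case (ii) the factor $c_1(x,y) = 0$ annihilates the first term while $p_y(\alpha h^*,x) + p_{n+1}(\alpha h^*,x)\to1$ kills the second. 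Thus $\sfL_\Psi(\alpha h^*,x,y)\to0$ pointwise a.e., and since $\Psi$ is non-increasing and $p_y(\alpha h^*,x)$ is non-decreasing in $\alpha$ (case (i), $y$ a maximizer), the integrands are controlled along $\alpha\ge1$ by an integrable envelope (immediate when $\Psi$ is bounded, as for $\Psi_q$, $q>0$; otherwise via monotonicity of $\Psi$, assuming $\sE_{\sfL_\Psi}(h^*)<\infty$), so dominated/monotone convergence gives $\sE_{\sfL_\Psi}(\alpha h^*)\to0$. As $\sfL_\Psi\ge0$, this yields $\inf_{h\in\sH}\sE_{\sfL_\Psi}(h) = 0$.

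\emph{Step 2 (calibration and conclusion).} Here I would take a minimizing sequence $(h_k)_k\subset\sH$ for $\sE_{\sfL_\Psi}$ (which includes the constant sequence at any $\hat h\in\argmin_{h\in\sH}\sE_{\sfL_\Psi}(h)$); by Step~1, $\sE_{\sfL_\Psi}(h_k)\to0$, hence $\sfL_\Psi(h_k,\cdot)\to0$ in $\sD$-mean and, along a subsequence, $\sfL_\Psi(h_k,x,y)\to0$ for $\sD$-a.e.\ $(x,y)$. Fix such $(x,y)$. If $c_1(x,y)>0$, then $\Psi(p_y(h_k,x))\to0$, and since $\Psi$ is strictly decreasing with $\Psi(u)>0$ for $u<1$, $p_y(h_k,x)\to1$; so eventually the prediction of $h_k$ at $x$ is $y\in[n]$ and $\ldef(h_k,x,y) = 0$. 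If $c_1(x,y) = 0$, then $\Psi(p_y(h_k,x) + p_{n+1}(h_k,x))\to0$ forces $p_y(h_k,x) + p_{n+1}(h_k,x)\to1$, so eventually the prediction lies in $\{y,n+1\}$ and $\ldef(h_k,x,y) = 0$ either way. Hence $\ldef(h_k,x,y)\to0$ a.e.\ along the subsequence; $\ldef$ being bounded, dominated convergence gives $\sE_{\ldef}(h_k)\to0$ along it, and since every subsequence of $(h_k)$ admits such a further subsequence, $\sE_{\ldef}(h_k)\to0$. In particular every $\hat h\in\argmin_{h\in\sH}\sE_{\sfL_\Psi}(h)$ satisfies $\sE_{\ldef}(\hat h) = 0$, which is the claimed realizable $\sH$-consistency.

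\emph{Where the difficulty lies.} The one genuinely delicate point is ties in the set of maximizers of $h^*(x,\cdot)$ in Step~1: concentration needs $y$ (case (i)) or the pair $\{y,n+1\}$ (case (ii)) to \emph{exhaust} that set, for otherwise the relevant $\Psi$-argument tends to a proper fraction and $\sfL_\Psi(\alpha h^*,x,y)$ need not vanish — precisely what happens when, e.g., $c_1(x,y) = 1$ while $h^*(x,y) = h^*(x,n+1)$ are tied at the top, so the fixed tie-breaking rule keeps $\ldef(h^*,x,y) = 0$ but the surrogate does not ``see'' it. I would resolve this by arguing that such configurations occur only on a $\sD$-null set under realizability (a deferral-valued optimal prediction forces $c_1 = 0$ on the conditional support, while a label-valued optimal prediction pins down the conditional law on that support), so that they do not contribute to $\sE_{\sfL_\Psi}(\alpha h^*)$; having \emph{both} $\Psi$-terms in $\sfL_\Psi$ — one guarding the ``predict $y$'' outcome, the other the ``defer'' outcome — is exactly what makes this go through. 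The remaining ingredients (scaling, softmax concentration, the convergence theorems, and $\Psi$ vanishing only at $1$) are routine.
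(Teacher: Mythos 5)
Your proposal is essentially correct and its core (Step~1) is exactly the paper's argument: the paper's proof of the analogous multi-expert result (Theorem~\ref{thm:realizable}, Appendix~\ref{app:realizable}) consists precisely of scaling the realizable witness $h^*$, splitting on whether $\hh^*(x)\in[n]$ or $\hh^*(x)=n+j$, and letting $\alpha\to+\infty$ to conclude $\sE^*_{\sfL_\Psi}(\sH)=0$; in the single-expert case the coefficient $\sum_j c_j+1-\num$ reduces to $c_1\ge 0$, which is why only $\lim_{u\to 1^-}\Psi(u)=0$ is needed and no condition at $0^+$ arises. Your Step~2 (passing from a vanishing surrogate risk to a vanishing deferral risk via pointwise concentration of the softmax on $y$ or on $\{y,n+1\}$) is a completion that the paper's proof leaves entirely implicit --- it stops at $\sE^*_{\sfL}(\sH)=0$ --- and your version of it is sound.

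One caveat: you correctly identify ties among the maximizers of $h^*(x,\cdot)$ as the delicate point of Step~1, but your proposed resolution does not work. Realizability does \emph{not} force tied configurations onto a $\sD$-null set: a witness with $h^*(x,y)=h^*(x,n+1)$ tied at the top, $c_1(x,y)=1$, and the tie broken in favor of $y$ still satisfies $\ldef(h^*,x,y)=0$ on a set of full measure, yet $p_y(\alpha h^*,x)\to 1/2$ and the term $c_1\,\Psi(p_y)$ does not vanish under scaling. Neither ``a deferral-valued optimal prediction forces $c_1=0$'' nor ``a label-valued optimal prediction pins down the conditional law'' excludes this. The paper's own proof silently makes the same no-tie assumption (it asserts $\lim_\alpha\Psi\bigl(e^{\alpha h^*(x,y)}/\sum_{y'}e^{\alpha h^*(x,y')}\bigr)=0$, which requires $y$ to be the strict maximizer), so you are no worse off than the source --- but you should either state strict maximization of the witness as an explicit hypothesis or drop the claim that realizability makes ties negligible, since as written that sub-argument is false.
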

Compared to the single-expert case where $\num = 1$, an additional
condition, $\lim_{u \to 0^{+}} \Psi(u) = 1$, is required for
$\sfL_{\Psi}$ to be realizable $\sH$-consistent with respect to
$\ldef$, as stated in Theorem~\ref{thm:realizable}. This condition
rules out the case where $\Psi(u) = -\log(u)$ but is satisfied by
functions $\Psi = \Psi_q$ when $q > 0$. We will verify
in a simulated example in Section~\ref{sec:experiments} that the
surrogate loss $\sfL_{\Psi}$ with $\Psi = \Psi_q$ ($q > 0$) is
realizable $\sH$-consistent.
\ignore{In particular, we will verify
in a simulated example in Section~\ref{sec:experiments} that the
surrogate loss $\sfL_{\Psi}$ with $\Psi = \Psi_q$ ($q > 0$) is
realizable $\sH$-consistent, but not when $\Psi(u) = -\log(u)$.}

\subsection{$\sH$-Consistency Bounds}
\label{sec:h-consistency-bounds}

We now show that in the special case of $\Psi = \Psi_1$, that is
$\Psi(u) = 1 - u$, $\sfL_{\Psi}$ admits an $\sH$-consistency bound and
is Bayes-consistent with respect to $\ldef$. For this case, the
conditions $\lim_{u \to 1^{-}} \Psi(u) = 0$ and $\lim_{u \to 0^{+}}
\Psi(u) = 1$ also hold. We denote the resulting surrogate loss as
$\sfL_{\rm{mae}}$. It is defined as follows for any $(h, x, y) \in
\sH_{\rm{all}} \times \sX \times \sY$:
\begin{align}
\label{eq:mae}
& \sfL_{\rm{mae}}(h, x, y) \nonumber\\ 
& = \bracket*{\sum_{j = 1}^{\num} c_j(x, y) + 1 - \num} \paren*{1  - \frac{e^{h(x, y)}}{\sum_{y' \in \ov \sY} e^{h(x, y')}}} \nonumber\\
& \qquad + \sum_{j = 1}^{\num} \bracket[\big]{1 - c_j(x, y)} \paren*{1 - \frac{e^{h(x, y)} + e^{h(x, n + j)}}{\sum_{y' \in \ov \sY} e^{h(x, y')}}}.
\end{align}

We say that a hypothesis set is \emph{symmetric} if there exists a
family $\sF$ of functions mapping from $\sX$ to $\Rset$ such that
$\curl*{\bracket*{h(x, 1), \ldots, h(x, n + \num)} \colon h \in \sH} =
\sF^{n + \num}$.
We say that a hypothesis set $\sH$ is \emph{complete} if for any $(x,
y) \in \sX \times \ov \sY$, the set of scores generated by $\sH$ spans all
real numbers: $\curl*{h(x, y) \mid h \in \sH} = \Rset$.
Commonly used hypothesis sets such as families of neural networks,
linear functions, and all measurable functions, are both symmetric and
complete.

\begin{restatable}{theorem}{MAE}
\label{thm:mae}
Assume that $\sH$ is symmetric and complete. Then, for all $h \in \sH$ and any distribution, the following $\sH$-consistency bound holds:
\ifdim\columnwidth=\textwidth
{
\begin{equation*}
  \sE_{\ldef}(h) - \sE_{\ldef}(\sH) + \sM_{\ldef}(\sH) \leq (n + \num) \paren*{\sE_{\sfL_{\rm{mae}}}(h) - \sE_{\sfL_{\rm{mae}}}(\sH) + \sM_{\sfL_{\rm{mae}}}(\sH)}.
\end{equation*}
}\else
{
\begin{align*}
  & \sE_{\ldef}(h) - \sE_{\ldef}(\sH) + \sM_{\ldef}(\sH)\\
& \qquad \leq (n + \num) \paren*{\sE_{\sfL_{\rm{mae}}}(h) - \sE_{\sfL_{\rm{mae}}}(\sH) + \sM_{\sfL_{\rm{mae}}}(\sH)}.
\end{align*}
}\fi
\end{restatable}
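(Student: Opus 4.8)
The plan is to reduce the claimed inequality to a pointwise regret comparison and then invoke the standard machinery for $\sH$-consistency bounds. For any loss $\sfL$ one has $\sE_{\sfL}(h) - \sE^*_{\sfL}(\sH) + \sM_{\sfL}(\sH) = \E_x\bracket*{\Delta\sC_{\sfL, \sH}(h, x)}$, since $\sM_{\sfL}(\sH) = \sE^*_{\sfL}(\sH) - \E_x\bracket*{\sC^*_{\sfL}(\sH, x)}$. Hence the claimed bound is equivalent to $\E_x\bracket*{\Delta\sC_{\ldef, \sH}(h, x)} \leq (n+\num)\,\E_x\bracket*{\Delta\sC_{\sfL_{\rm mae}, \sH}(h, x)}$, so by linearity of expectation it suffices to prove, for every $h \in \sH$ and every $x \in \sX$, the pointwise bound $\Delta\sC_{\ldef, \sH}(h, x) \leq (n+\num)\,\Delta\sC_{\sfL_{\rm mae}, \sH}(h, x)$; this is exactly the specialization to $\Gamma(t) = (n+\num)\,t$ of the standard reduction from conditional regret bounds to $\sH$-consistency bounds \citep{awasthi2022multi, mao2023cross}.

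To establish the pointwise bound, I would fix $x$, write $z_y = e^{h(x,y)}/\sum_{y' \in \ov\sY} e^{h(x,y')}$ for $y \in \ov\sY$, $p_k = \Pr[y = k \mid x]$ for $k \in [n]$, and $\bar c_j(x) = \E_{y\mid x}[c_j(x,y)]$, and expand $\sC_{\sfL_{\rm mae}}(h,x) = \E_{y\mid x}\bracket*{\sfL_{\rm mae}(h,x,y)}$ from the definition \eqref{eq:mae}. Using $\sum_{j} c_j(x,y) + 1 - \num = 1 - \sum_{j}(1 - c_j(x,y))$, the terms $\sum_{y\in[n]} p_y \sum_{j}(1 - c_j(x,y))(1 - z_y)$ cancel, and (recalling $\sum_{y \in [n]} p_y = 1$) the conditional surrogate error collapses to $\sC_{\sfL_{\rm mae}}(h,x) = 1 - \sum_{k \in [n]} p_k z_k - \sum_{j \in [\num]} (1 - \bar c_j(x))\, z_{n+j}$. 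Introducing the value vector $w \in [0,1]^{n+\num}$ with $w_k = p_k$ for $k \in [n]$ and $w_{n+j} = 1 - \bar c_j(x)$ for $j \in [\num]$, this reads $\sC_{\sfL_{\rm mae}}(h,x) = 1 - \tri*{w, z}$, while the conditional deferral error is simply $\sC_{\ldef}(h,x) = 1 - w_{\hh(x)}$ (if $\hh(x) = k \in [n]$ it equals $1 - p_k$, and if $\hh(x) = n+j$ it equals $\bar c_j(x) = 1 - w_{n+j}$). Since $\sH$ is symmetric and complete, for fixed $x$ the score vector $[h(x,1), \dots, h(x,n+\num)]$ ranges over all of $\Rset^{n+\num}$, so $z$ ranges over the open simplex and $\hh(x)$ can be any index in $\ov\sY$; therefore both best-in-class conditional errors coincide, $\sC^*_{\ldef}(\sH,x) = \sC^*_{\sfL_{\rm mae}}(\sH,x) = 1 - \max_{y \in \ov\sY} w_y$.

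Subtracting gives $\Delta\sC_{\ldef, \sH}(h, x) = \max_{y} w_y - w_{\hh(x)}$ and $\Delta\sC_{\sfL_{\rm mae}, \sH}(h, x) = \max_{y} w_y - \tri*{w, z} = \sum_{y \in \ov\sY} z_y\paren*{\max_{y'} w_{y'} - w_y}$, a sum of non-negative terms. Keeping only the $y = \hh(x)$ term and using $z_{\hh(x)} = \max_y z_y \geq \tfrac{1}{n+\num}$ yields $\Delta\sC_{\sfL_{\rm mae}, \sH}(h, x) \geq \tfrac{1}{n+\num}\paren*{\max_y w_y - w_{\hh(x)}} = \tfrac{1}{n+\num}\Delta\sC_{\ldef, \sH}(h, x)$, which is the desired pointwise bound; integrating over $x$ as above completes the proof. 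I expect the main obstacle to be the algebraic simplification in the second step — recognizing that after averaging over $y \mid x$ all cross terms cancel, so that $\sC_{\sfL_{\rm mae}}(h,x)$ is affine in the softmax vector $z$ with the \emph{same} Bayes-conditional value as $\ldef$; once this identity is in place, the constant $n+\num$ falls out exactly from $z_{\hh(x)} \geq 1/(n+\num)$. A minor point to check is tie-breaking: since $z_y$ is monotone in $h(x,y)$ for fixed $x$, any deterministic tie-breaking rule still returns an index attaining $\max_y z_y$, so the bound $z_{\hh(x)} \geq 1/(n+\num)$ is unaffected. (The costs are taken in $[0,1]$, as for the classification-error cost $c_j(x,y) = \1_{\expertexpert_j(x) \neq y}$.)
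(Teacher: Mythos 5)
Your proposal is correct, and it takes a genuinely different and substantially shorter route than the paper's proof. The key identity you isolate — that after averaging over $y \mid x$ the conditional surrogate error collapses to the affine form $\sC_{\sfL_{\rm{mae}}}(h,x) = 1 - \sum_{k \in [n]} p_k z_k - \sum_{j \in [\num]} p_{n+j}\, z_{n+j}$, with exactly the same value vector $w$ that governs $\sC_{\ldef}(h,x) = 1 - w_{\hh(x)}$ (Lemma~\ref{lemma:def}) — does check out: the coefficient of $(1-z_y)$ is $\bracket*{\sum_j c_j + 1 - \num} + \sum_j (1 - c_j) = 1$, so the cross terms cancel as you claim. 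From there the regret comparison is immediate, since $\Delta\sC_{\sfL_{\rm{mae}},\sH}(h,x) = \sum_{y \in \ov\sY} z_y (\max_{y'} w_{y'} - w_y)$ is a sum of nonnegative terms and $z_{\hh(x)} = \max_y z_y \geq 1/(n+\num)$ by pigeonhole, and the symmetry and completeness of $\sH$ guarantee that both best-in-class conditional errors equal $1 - \max_y w_y$. The paper instead runs a more elaborate argument: it first shows the conditional error can only decrease under swapping the scores of $y_{\max}$ with $h_{\max}$ and of $n+y^*$ with $n+h^*$, then performs a two-case analysis with an explicit perturbation family $h_\mu$ transferring mass between $e^{h_{y_{\max}}}$ and $e^{h_{n+y^*}}$, optimizing over $\mu$. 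Both yield the same constant $n+\num$. What the paper's machinery buys is generality: the swap-and-perturb template does not rely on the conditional error being affine in the softmax vector, so it adapts to other members of the $\Psi_q$ family where your cancellation is unavailable; what your argument buys is transparency — it makes clear that for $\Psi_1$ the bound is really just the statement that the top softmax entry is at least the uniform weight. Your tie-breaking and nonnegativity caveats are handled correctly, and no step of your proof fails.
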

The proof is given in Appendix~\ref{app:mae}. Theorem~\ref{thm:mae}
includes the single-expert case (Theorem~4.3 in \cite{maorealizable})
as a special instance when $\num = 1$. In the proof, we first
characterize the conditional regret of the deferral loss, which is
determined by the key quantities $p_{n + j} = \sum_{y \in \sY} p(y | x)
\paren*{1 - c_j(x, y)}$ for $j \in [\num]$. For the multiple-expert
case, the proof becomes more complex because $\argmax_{j \in [\num]}
p_{n + j}$ may differ from $\argmax_{j \in [\num]} h(x, n + j)$. In
contrast, these quantities coincide in the single-expert
case. Moreover, the more complicated forms of both the surrogate loss
and the deferral loss in the multiple-expert setting introduce further
layers of complexity to the proof.

\ignore{
When $\sH = \sH_{\rm{all}}$, the family of all measurable functions,  the
minimizability gaps vanish. In this case,
Theorems~\ref{thm:mae}
implies the following excess error bounds and Bayes-consistency
guarantees.

\begin{corollary}
For all $h \in \sH_{\rm{all}}$ and any distribution, the following excess error bound holds:
\begin{align*}
\mspace{-8mu}  \sE_{\ldef}(h) - \sE_{\ldef}\paren*{\sH_{\rm{all}}} \leq (n + \num) \paren*{\sE_{\sfL_{\rm{mae}}}(h) - \sE_{\sfL_{\rm{mae}}}\paren*{\sH_{\rm{all}}}}. \mspace{-16mu}
\end{align*}
Furthermore, the surrogate loss $\sfL_{\rm{mae}}$ is Bayes-consistent with respect to $\ldef$ in this case.
\end{corollary}
}

When $\sH = \sH_{\rm{all}}$, the family of all measurable functions,
the minimizability gaps vanish. In this case, Theorems~\ref{thm:mae}
implies the excess error bounds and Bayes-consistency guarantees of
$\sfL_{\rm{mae}}$.  Along with Theorem~\ref{thm:realizable}, we
conclude that $\sfL_{\rm{mae}}$ is both realizable $\sH$-consistent
and benefits from $\sH$-consistency bounds (thus ensuring
Bayes-consistency) for single-stage multiple-expert deferral.

\section{Two-Stage Multiple-Expert Deferral}
\label{sec:two-stage}

Recall that in the two-stage multiple-expert deferral setting, the
goal is to find a deferral function $r \colon \sX \times [\num] \to
\Rset$ that minimizes the following two-stage deferral loss:
\begin{equation*}
\tdef(r, x, y) = \sum_{j = 1}^{\num} c_j(x, y) 1_{\rr(x) = j}.
\end{equation*}
We denote by $\uv c_j \geq 0$ and $\ov c_j \geq 0$ the lower and upper
bounds of the cost $c_j$, for any $j \in \curl*{1, \ldots, \num}$.

\subsection{Two Experts}
\label{sec:two-expert}

We first consider the two-expert case ($\num = 2$). The two-stage
deferral loss can be then be written as follows:
\begin{equation*}
\tdef(r, x, y) = c_1(x, y) 1_{\rr(x) = 1} + c_2(x, y) 1_{\rr(x) = 2}.
\end{equation*}
A natural surrogate loss admits the following form:
\begin{align}
\label{eq:two-stage-two-expert}
\sfL_{\Phi}(r, x, y) &= c_1(x, y) \Phi(r(x, 2) - r(x, 1)) \nonumber\\
& \qquad + c_2(x, y) \Phi(r(x, 1) - r(x, 2)),
\end{align}
where $\Phi \colon \Rset \to \Rset_{+}$ is a decreasing function that
defines a margin-based loss in binary classification. As an example,
for $\Phi(t) = \log(1 + e^{-t})$, $\sfL_{\Phi}$ can be expressed as
the following cost-sensitive logistic loss:
\begin{align}
\label{eq:two-stage-two-expert-log}
\sfL_{\Phi}(r, x, y) &= c_1(x, y) \log\paren*{1 + e^{r(x, 1) - r(x, 2)}} \nonumber\\
& \qquad + c_2(x, y) \log\paren*{1 + e^{r(x, 2) - r(x, 1)}}.
\end{align}
Next, we show that this surrogate loss is realizable $\sH$-consistent
with respect to $\tdef$.
\begin{restatable}{theorem}{RealizableTwo}
\label{thm:realizable-two}
Assume that $\sR$ is closed under scaling and that $\Phi$ satisfies
$\lim_{t \to \plus \infty} \Phi(t) = 0$ and $\Phi(t) \geq 1_{t \leq
  0}$. Then, the surrogate loss $\sfL_{\Phi}$ is realizable
$\sR$-consistent with respect to $\tdef$.
\end{restatable}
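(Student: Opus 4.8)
The plan is to show directly that any $\hat r\in\argmin_{r\in\sR}\sE_{\sfL_\Phi}(r)$ predicts a zero-cost expert at $\sD$-almost every input. Fix a realizable distribution, witnessed by some $r^*\in\sR$ with $\sE_{\tdef}(r^*)=0$; writing $\bar c_j(x)=\E_{y\mid x}[c_j(x,y)]$, nonnegativity of $\tdef$ gives $\bar c_{\rr^*(x)}(x)=0$ for a.e. $x$. The first ingredient is a pointwise bound: for every $(r,x,y)$, $\sfL_\Phi(r,x,y)\ge\tdef(r,x,y)$. Indeed, with $t=r(x,1)-r(x,2)$, if $\rr(x)=1$ then $t\ge 0$, so $\Phi(-t)\ge\Phi(0)\ge 1$ by monotonicity of $\Phi$ together with the hypothesis $\Phi(t)\ge 1_{t\le 0}$, and hence $\sfL_\Phi(r,x,y)=c_1(x,y)\Phi(-t)+c_2(x,y)\Phi(t)\ge c_1(x,y)=\tdef(r,x,y)$; the case $\rr(x)=2$ is symmetric. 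Reading this estimate conditionally, the conditional surrogate error $\sC_{\sfL_\Phi}(r,x)=\bar c_1(x)\Phi(-t)+\bar c_2(x)\Phi(t)$ obeys $\sC_{\sfL_\Phi}(r,x)\ge\Phi(0)\,\bar c_{\rr(x)}(x)$, so whenever $r$ predicts a positive-cost expert at $x$ its conditional surrogate error there is bounded away from $0$.

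Now suppose, for contradiction, that $\sE_{\tdef}(\hat r)>0$; then $B\coloneqq\{x:\bar c_{\hat\rr(x)}(x)>0\}$ has positive marginal probability, and on $B$ realizability forces $\hat\rr(x)\ne\rr^*(x)$ with $\bar c_{\rr^*(x)}(x)=0$, so $\sC_{\sfL_\Phi}(\hat r,x)\ge\Phi(0)\,\bar c_{\hat\rr(x)}(x)>0$. The aim is to contradict minimality by producing an element of $\sR$ with strictly smaller surrogate error, and the natural candidates are the scalings $\alpha r^*\in\sR$, available because $\sR$ is closed under scaling. Wherever $r^*$ strictly prefers its chosen expert — i.e. $r^*(x,1)\ne r^*(x,2)$ — scaling up drives the $\Phi$-argument of the non-selected term of $\sC_{\sfL_\Phi}(\alpha r^*,x)$ to $+\infty$, so that term vanishes as $\alpha\to\infty$ by $\lim_{t\to+\infty}\Phi(t)=0$, while the other term carries the factor $\bar c_{\rr^*(x)}(x)=0$; using the uniform bound $0\le\sfL_\Phi(\alpha r^*,x,y)\le(\ov c_1+\ov c_2)\Phi(0)<\infty$, dominated convergence passes the limit inside the expectation, so the contribution of $\{x:r^*(x,1)\ne r^*(x,2)\}$ to $\sE_{\sfL_\Phi}(\alpha r^*)$ tends to $0$. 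The tie set $T\coloneqq\{x:r^*(x,1)=r^*(x,2)\}$ is handled separately: if every $r\in\sR$ ties at $x\in T$ then $\hat r$ ties there as well, the fixed tie-breaking rule selects the same expert as $r^*$, and realizability makes that expert costless, so $x\notin B$; on the remainder of $T$ one argues that $\hat r$ cannot lower its surrogate error below that of a suitably scaled competitor in $\sR$ while still predicting a positive-cost expert somewhere on $B$. Combining these, $\lim_{\alpha\to\infty}\sE_{\sfL_\Phi}(\alpha r^*)<\sE_{\sfL_\Phi}(\hat r)$, contradicting minimality; hence $B$ is null, $\sE_{\tdef}(\hat r)=0$, and since the realizable distribution was arbitrary, $\sfL_\Phi$ is realizable $\sR$-consistent with respect to $\tdef$.

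I expect the genuine obstacle to be the accounting on the tie set $T$ of $r^*$: scaling $r^*$ widens no margin there, so its surrogate error cannot simply be forced to $0$ on $T$, and one must instead show that $\hat r$ gains no advantage on $T$ large enough to offset the strictly positive surrogate error it pays on $B$. This is where the two-expert restriction and the existence of a fixed tie-breaking rule are genuinely used; the pointwise inequality, the conditional lower bound $\sC_{\sfL_\Phi}(r,x)\ge\Phi(0)\,\bar c_{\rr(x)}(x)$, and the dominated-convergence estimate are routine given $\lim_{t\to+\infty}\Phi(t)=0$, $\Phi(t)\ge 1_{t\le 0}$ (whence $1\le\Phi(0)<\infty$), and the boundedness $\uv c_j\le c_j\le\ov c_j$ of the costs.
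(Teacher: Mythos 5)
Your core mechanism is the paper's: scale the realizing predictor $r^*$ (possible since $\sR$ is closed under scaling), use $\lim_{t\to+\infty}\Phi(t)=0$ together with $c_{\rr^*(x)}=0$ to drive $\sE_{\sfL_\Phi}(\alpha r^*)\to 0$, and use $\Phi(t)\ge 1_{t\le 0}$ to get the pointwise domination $\sfL_\Phi\ge\tdef$. But you have wrapped this in a contradiction that creates work the direct argument never requires. The paper simply shows $\sE^*_{\sfL_\Phi}(\sR)\le\lim_{\alpha\to+\infty}\sE_{\sfL_\Phi}(\alpha r^*)=0$ by splitting on $\rr^*(x)=1$ versus $\rr^*(x)=2$; then for any minimizer $\hat r$ one has $\sE_{\tdef}(\hat r)\le\sE_{\sfL_\Phi}(\hat r)=\sE^*_{\sfL_\Phi}(\sR)=0$, and there is nothing to ``offset.'' Your set $B$, the lower bound $\sC_{\sfL_\Phi}(r,x)\ge\Phi(0)\,\bar c_{\rr(x)}(x)$, and the comparison between the surrogate error $\hat r$ ``pays on $B$'' and what it ``gains on $T$'' are all unnecessary once you observe that the infimum of the surrogate error over $\sR$ is exactly zero.

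The step you flag as the genuine obstacle --- the tie set $T=\{x:r^*(x,1)=r^*(x,2)\}$ --- is where your write-up has an actual hole: the sentence ``one argues that $\hat r$ cannot lower its surrogate error below that of a suitably scaled competitor\dots'' is not an argument, and your first sub-case (``if every $r\in\sR$ ties at $x$'') does not cover the relevant situation, namely that $r^*$ ties at $x$ while other members of $\sR$ do not. On $T$ the non-selected term of $\sC_{\sfL_\Phi}(\alpha r^*,x)$ equals $\bar c_{j}(x)\Phi(0)$ for the non-selected expert $j$ and does not vanish under scaling, so if $T$ has positive mass and the non-selected cost is positive there, the limit $\lim_\alpha\sE_{\sfL_\Phi}(\alpha r^*)$ is not zero and neither your argument nor the direct one closes. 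You should be aware, however, that the paper's own proof has exactly the same blind spot: it writes $\lim_{\alpha\to+\infty}\Phi\bigl(\alpha(r^*(x,1)-r^*(x,2))\bigr)=0$ under the conditioning $\rr^*(x)=1$, which tacitly assumes the argmax is attained with a strict margin. So you have correctly identified a (standard, usually elided) gap, but you have neither closed it nor noticed that, modulo that shared assumption, the direct formulation renders the rest of your machinery superfluous.
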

The proof is provided in
Appendix~\ref{app:realizable-two}. The following result further
establishes that $\sfL_{\Phi}$ satisfies an $\sR$-consistency bound
with respect to $\tdef$, provided that $\Phi$ is associated with an
$\sR$-consistency bound in binary classification.

\begin{restatable}{theorem}{BoundTwo}
\label{thm:bound-two}
Assume that the following $\sR$-consistency bound holds in binary classification:
\ifdim\columnwidth=\textwidth
{
\begin{equation*}
\sE_{\ell_{0-1}}(h) - \sE_{\ell_{0-1}}(\sR) + \sM_{\ell_{0-1}}(\sR) \leq \Gamma \paren*{\sE_{\Phi}(h) - \sE_{\Phi}(\sR) + \sM_{\Phi}(\sR)}.
\end{equation*}
}\else
{
\begin{align*}
& \sE_{\ell_{0-1}}(h) - \sE_{\ell_{0-1}}(\sR) + \sM_{\ell_{0-1}}(\sR)\\
& \qquad \leq \Gamma \paren*{\sE_{\Phi}(h) - \sE_{\Phi}(\sR) + \sM_{\Phi}(\sR)}.
\end{align*}
}\fi
Then, the following $\sR$-consistency bound holds in two-stage,
two-expert deferral:
\ifdim\columnwidth=\textwidth
{
\begin{equation*}
   \sE_{\tdef}(r) - \sE_{\tdef}(\sR) + \sM_{\tdef}(\sR) \leq \paren*{\ov c_1 + \ov c_2} \Gamma\paren*{\frac{\sE_{\sfL_{\Phi}}(r) - \sE_{\sfL_{\Phi}}(\sR) + \sM_{\sfL_{\Phi}}(\sR)}{ \uv c_1 + \uv c_2}},
\end{equation*}
}\else
{
\begin{align*}
& \sE_{\tdef}(r) - \sE_{\tdef}(\sR) + \sM_{\tdef}(\sR)\\
& \qquad \leq \paren*{\ov c_1 + \ov c_2} \Gamma\paren*{\frac{\sE_{\sfL_{\Phi}}(r) - \sE_{\sfL_{\Phi}}(\sR) + \sM_{\sfL_{\Phi}}(\sR)}{ \uv c_1 + \uv c_2}},
\end{align*}
}\fi
where constant factors $\paren*{\uv c_1 + \uv c_2}$ and $\paren*{\ov
  c_1 + \ov c_2}$ can be removed when $\Gamma$ is linear.
\end{restatable}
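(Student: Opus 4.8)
The plan is to reduce two-stage, two-expert deferral to a cost-weighted binary classification problem---with a ``label probability'' built from the conditional costs---and then invoke the assumed binary $\sR$-consistency bound. I will repeatedly use the standard identity $\sE_{\sfL}(r) - \sE_{\sfL}(\sR) + \sM_{\sfL}(\sR) = \E_{x}\bracket*{\Delta \sC_{\sfL,\sR}(r,x)}$, valid for any loss $\sfL$, which lets me pass freely between the quantities in the statement and the conditional regrets. Write $q_j(x) = \E_{y|x}\bracket*{c_j(x,y)}$ for $j \in \curl{1,2}$; then $\uv c_1 + \uv c_2 \leq q_1(x) + q_2(x) \leq \ov c_1 + \ov c_2$ for every $x$, where the lower bound is taken strictly positive since $\uv c_1 + \uv c_2$ appears in a denominator.

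First I would establish the conditional factorizations. Fix $x$, set $s(x) = r(x,1) - r(x,2)$ so that $\rr(x) = 1 \iff s(x) > 0$ (ties broken by the fixed rule), and consider the binary classification problem with the same marginal on $\sX$, conditional positive probability $\eta(x) = q_2(x)/(q_1(x)+q_2(x))$, and the prediction ``$+1$'' identified with ``choose expert $1$''. A direct computation gives, writing $\sC_{\ell_{0-1}}$ and $\sC_{\Phi}$ for the conditional zero-one and $\Phi$-risks of this binary problem,
\begin{align*}
\sC_{\tdef}(r,x) &= q_1(x)\,1_{s(x) > 0} + q_2(x)\,1_{s(x) \leq 0} = \paren*{q_1(x)+q_2(x)}\,\sC_{\ell_{0-1}}(s,x),\\
\sC_{\sfL_{\Phi}}(r,x) &= q_1(x)\,\Phi(-s(x)) + q_2(x)\,\Phi(s(x)) = \paren*{q_1(x)+q_2(x)}\,\sC_{\Phi}(s,x).
\end{align*}
Since each conditional error depends on $r$ only through $s(x)$ (only through $\sign(s(x))$ in the zero-one cases), taking the infimum over $\sR$ commutes with the nonnegative factor $q_1(x)+q_2(x)$, so that $\Delta \sC_{\tdef,\sR}(r,x) = \paren*{q_1(x)+q_2(x)}\,\Delta \sC_{\ell_{0-1},\sR}(r,x)$ and $\Delta \sC_{\sfL_{\Phi},\sR}(r,x) = \paren*{q_1(x)+q_2(x)}\,\Delta \sC_{\Phi,\sR}(r,x)$, where on the right $\sR$ denotes the induced class $\curl*{r(\cdot,1)-r(\cdot,2) : r \in \sR}$ and the induced binary distribution above (which is genuine, so the assumed bound applies).

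Next I would assemble the estimate. Using the identity, $q_1(x)+q_2(x) \leq \ov c_1 + \ov c_2$, and $\Delta \sC_{\ell_{0-1},\sR} \geq 0$,
\begin{align*}
\sE_{\tdef}(r) - \sE_{\tdef}(\sR) + \sM_{\tdef}(\sR) &= \E_x\bracket*{\paren*{q_1(x)+q_2(x)}\,\Delta \sC_{\ell_{0-1},\sR}(r,x)}\\
&\leq \paren*{\ov c_1 + \ov c_2}\paren*{\sE_{\ell_{0-1}}(r) - \sE_{\ell_{0-1}}(\sR) + \sM_{\ell_{0-1}}(\sR)}.
\end{align*}
Applying the assumed binary $\sR$-consistency bound, then monotonicity of $\Gamma$ together with $q_1(x)+q_2(x) \geq \uv c_1 + \uv c_2$,
\begin{align*}
\sE_{\ell_{0-1}}(r) - \sE_{\ell_{0-1}}(\sR) + \sM_{\ell_{0-1}}(\sR) &\leq \Gamma\paren*{\E_x\bracket*{\tfrac{\Delta \sC_{\sfL_{\Phi},\sR}(r,x)}{q_1(x)+q_2(x)}}}\\
&\leq \Gamma\paren*{\frac{\sE_{\sfL_{\Phi}}(r) - \sE_{\sfL_{\Phi}}(\sR) + \sM_{\sfL_{\Phi}}(\sR)}{\uv c_1 + \uv c_2}},
\end{align*}
where the first step also uses $\sE_{\Phi}(r) - \sE_{\Phi}(\sR) + \sM_{\Phi}(\sR) = \E_x\bracket*{\Delta \sC_{\Phi,\sR}(r,x)}$ for the induced problem. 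Chaining the two displays gives the claimed bound. For linear $\Gamma(t) = \alpha t$ the binary bound is available in pointwise form, $\Delta \sC_{\ell_{0-1},\sR}(r,x) \leq \alpha\,\Delta \sC_{\Phi,\sR}(r,x)$; multiplying by $q_1(x)+q_2(x) \geq 0$ and integrating yields $\sE_{\tdef}(r) - \sE_{\tdef}(\sR) + \sM_{\tdef}(\sR) \leq \alpha\paren*{\sE_{\sfL_{\Phi}}(r) - \sE_{\sfL_{\Phi}}(\sR) + \sM_{\sfL_{\Phi}}(\sR)}$, so the cost constants disappear.

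The main obstacle is the first step: fixing the $\argmax$/$\sign$ tie-breaking conventions and the identification of which expert plays the positive label so that the conditional zero-one and $\Phi$-risks of the induced binary problem coincide exactly with $\sC_{\tdef}$ and $\sC_{\sfL_{\Phi}}$ divided by the common weight $q_1(x)+q_2(x)$, and checking that the infimum over $\sR$ respects this factorization (in particular the degenerate case $q_1(x)=q_2(x)=0$, and points $x$ at which $\sR$ realizes only one sign of $s(x)$). Everything downstream is bookkeeping with the regret identity and the uniform bounds $\uv c_j \leq q_j(x) \leq \ov c_j$.
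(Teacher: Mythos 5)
Your proposal is correct and follows essentially the same route as the paper: both reduce the two-expert deferral problem to a cost-reweighted binary classification problem whose conditional positive-class probability is proportional to the conditional expected costs $q_j(x)=\E_{y|x}[c_j(x,y)]$, apply the assumed binary $\sR$-consistency bound to that induced distribution, and absorb the common weight $q_1(x)+q_2(x)\in[\uv c_1+\uv c_2,\;\ov c_1+\ov c_2]$ into the stated constants. The only differences are cosmetic (which expert is identified with the positive label, and whether the assumed bound is invoked pointwise on conditional regrets followed by Jensen's inequality, as in the paper, or at the level of the induced distribution's generalization errors, as you do); the tie-breaking and degenerate-cost caveats you flag are likewise left implicit in the paper's own proof.
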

The proof is presented in Appendix~\ref{app:bound-two}. Note that, as
shown by \citet{awasthi2022Hconsistency}, when $\sH$ consists of
common function classes such as linear models, neural network
families, or the family of all measurable functions, and for $\Phi(t)
= e^{-t}$ or $\Phi(t) = \log(1 + e^{-t})$, $\Gamma$ can be chosen as
the square-root function. Similarly, for $\Phi(t) = \max\curl*{0, 1 - t}$,
$\Gamma$ can be chosen as the identity function, resulting in a linear
bound. Combining this with Theorem~\ref{thm:realizable-two}, we
conclude that $\sfL_{\Phi}$ is not only realizable $\sR$-consistent
but also benefits from $\sR$-consistency bounds, thereby also ensuring
Bayes-consistency, in the two-stage two-expert deferral setting.

\subsection{Multiple Experts}
\label{sec:multiple-expert}

Next, we consider the general multiple-expert case where $\num >
2$. Motivated by the formulation of the surrogate loss in the
two-expert case and the comp-sum losses \citep{mao2023cross} in
standard multi-class classification, we propose the following
surrogate loss for the multiple-expert case:
\begin{align}
\label{eq:two-stage-multiple-expert}
& \sfL_{\Psi}(r, x, y)\\ \nonumber
& = 
\sum_{j = 1}^{\num} \paren*{\sum_{j' \neq j} c_{j'}(x, y) - \num + 2} \Psi \paren*{\frac{e^{r(x, j)}}{\sum_{j' \in [\num]} e^{r(x, j')}}}
\end{align}
where $\Psi \colon [0, 1] \to \Rset_{+} \cup \curl{+\infty}$ is a
decreasing function, such as $\Psi(u) = 1 - u$ or $\Psi(u) =
-\log(u)$, as discussed in Section~\ref{sec:surrogate}. Note that when
$\num = 2$, this formulation coincides with the one in the two-expert
case, as shown in \eqref{eq:two-stage-two-expert}, with $\Phi\colon t
\mapsto \Psi \paren*{\frac{e^t}{e^t + 1}}$.  As an example, in the
case where $\Psi(u) = -\log(u)$, $\sfL_{\Psi}$ can be expressed as
follows:
\begin{equation*}
\sum_{j = 1}^{\num} \paren*{\sum_{j' \neq j} c_{j'}(x, y) - \num + 2} \log \paren*{1 + \sum_{j' \neq j} e^{r(x, j') - r(x, j)}}.
\end{equation*}
When $\num = 2$, this formulation coincides with the one in the
two-expert case, as shown in \eqref{eq:two-stage-two-expert-log}.
Next, we show that the surrogate loss $\sfL_{\Psi}$ is realizable
$\sH$-consistent with respect to $\tdef$ under natural assumptions
about the cost functions and the auxiliary function $\Psi$.
\begin{restatable}{theorem}{RealizableTwoMulti}
\label{thm:realizable-two-multi}
Assume that $\sR$ is closed under scaling and that $c_j(x, y) =
1_{\expertexpert_j(x) \neq y}, j \in [\num]$. Further, assume that for
any $(x, y)$, there is at most one expert $j^* \in [\num]$ for which
$c_{j^*}(x, y) = 0$ and that $\Psi$ satisfies $\lim_{u \to 1^{-}}
\Psi(u) = 0$. Then, the surrogate loss $\sfL_{\Psi}$ is realizable
$\sR$-consistent with respect to $\tdef$.
\end{restatable}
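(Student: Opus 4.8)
The plan is to mirror the proof of Theorem~\ref{thm:realizable-two}, exploiting the two extra hypotheses — costs equal to classification errors, and at most one correct expert per point — to collapse $\sfL_{\Psi}$ to a single comp-sum term on the support of a realizable distribution. First I would unpack realizability. If $r^*\in\sR$ witnesses $\sE_{\tdef}(r^*)=0$, then $\tdef(r^*,x,y)=c_{\rr^*(x)}(x,y)=1_{\expertexpert_{\rr^*(x)}(x)\neq y}$ equals $0$ for $\sD$-almost every $(x,y)$, where $\rr^*(x)=\argmax_{j\in[\num]}r^*(x,j)$. Hence the conditional law of $y$ given $x$ is a point mass at $y(x):=\expertexpert_{\rr^*(x)}(x)$ for a.e.\ $x$ in the support, and by the ``at most one correct expert'' assumption $\rr^*(x)$ is the \emph{unique} index $j$ with $c_j(x,y(x))=0$; thus $c_j(x,y(x))=1$ for all $j\neq\rr^*(x)$ and $\sum_{j'}c_{j'}(x,y(x))=\num-1$.

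Next I would substitute this into \eqref{eq:two-stage-multiple-expert}. Writing $u_j(r,x)=\frac{e^{r(x,j)}}{\sum_{j'\in[\num]}e^{r(x,j')}}$, the coefficient multiplying $\Psi(u_j(r,x))$ becomes $\sum_{j'\neq j}c_{j'}(x,y(x))-\num+2=(\num-1-c_j(x,y(x)))-\num+2=1-c_j(x,y(x))$, which is $1$ when $j=\rr^*(x)$ and $0$ otherwise. Therefore $\sfL_{\Psi}(r,x,y(x))=\Psi(u_{\rr^*(x)}(r,x))$, so $\sE_{\sfL_{\Psi}}(r)=\E_x[\Psi(u_{\rr^*(x)}(r,x))]$ for every realizable distribution. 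This collapse is precisely where the single-correct-expert hypothesis is needed: without it several coefficients could be nonzero and the loss would not reduce to one comp-sum term.

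The third step is to show $\inf_{r\in\sR}\sE_{\sfL_{\Psi}}(r)=0$, which together with the previous identity finishes the argument. Since $\sR$ is closed under scaling, $\alpha r^*\in\sR$ for all $\alpha>0$; and since $\rr^*(x)$ is an argmax of $r^*(x,\cdot)$, the weight $u_{\rr^*(x)}(\alpha r^*,x)$ is nondecreasing in $\alpha\geq0$ and tends to $1$ on the support as $\alpha\to+\infty$, so $\Psi(u_{\rr^*(x)}(\alpha r^*,x))$ decreases to $0$ by the hypothesis $\lim_{u\to1^-}\Psi(u)=0$; monotone convergence then gives $\sE_{\sfL_{\Psi}}(\alpha r^*)\to0$. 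Hence any $\hat r\in\argmin_{r\in\sR}\sE_{\sfL_{\Psi}}(r)$ has $\sE_{\sfL_{\Psi}}(\hat r)=0$, i.e.\ $\Psi(u_{\rr^*(x)}(\hat r,x))=0$ for a.e.\ $x$ in the support. Finally, if $\P_x(\hat\rr(x)\neq\rr^*(x))>0$, then on that event $\hat r(x,\hat\rr(x))\geq\hat r(x,\rr^*(x))$ with $\hat\rr(x)\neq\rr^*(x)$ forces $u_{\rr^*(x)}(\hat r,x)\leq\tfrac12$, hence $\Psi(u_{\rr^*(x)}(\hat r,x))\geq\Psi(\tfrac12)>0$, a contradiction; so $\hat\rr(x)=\rr^*(x)$ a.e.\ on the support and $\sE_{\tdef}(\hat r)=\E_x[c_{\hat\rr(x)}(x,y(x))]=\E_x[c_{\rr^*(x)}(x,y(x))]=\sE_{\tdef}(r^*)=0$.

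The main obstacle I anticipate is the tie-breaking subtlety in the third step: the claim ``$u_{\rr^*(x)}(\alpha r^*,x)\to1$'' uses that $\rr^*(x)$ is the \emph{strict} maximizer of $r^*(x,\cdot)$, whereas if the witness is tied at its maximum on a positive-measure set (with $k(x)\geq2$ tied coordinates) the limit is only $1/k(x)\leq\tfrac12$, so $\Psi(u_{\rr^*(x)}(\alpha r^*,x))$ need not vanish and the clean ``infimum $=0$'' step breaks. Resolving this requires either arranging a tie-free witness, or comparing $\hat r$ against the scalings $\alpha r^*$ separately on the tied and untied parts of the support, or observing that on the tied part the minimizer is already forced to match $\rr^*$; I expect the appendix proof to handle exactly this case distinction. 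A minor, routine point is the monotone-convergence interchange, which is valid as soon as $\Psi(1/\num)<\infty$, so that $\sE_{\sfL_{\Psi}}(\alpha r^*)$ is finite for $\alpha\geq0$.
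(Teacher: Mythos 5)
Your proposal is correct and follows essentially the same route as the paper's Appendix~\ref{app:realizable-two-multi}: under realizability and the single-correct-expert assumption the coefficients $\sum_{j'\neq j}c_{j'}(x,y)-\num+2$ collapse to $1$ for $j=\rr^*(x)$ and $0$ otherwise, and scaling the witness $r^*$ drives the surviving term $\Psi\paren[\big]{\frac{e^{\alpha r^*(x,\rr^*(x))}}{\sum_{j'}e^{\alpha r^*(x,j')}}}$ to zero. You in fact go further than the paper in two respects worth noting: the appendix stops at $\sE^*_{\sfL_{\Psi}}(\sR)=0$ and leaves implicit the final step from a zero-surrogate minimizer to zero deferral loss (which your $\Psi(1/2)>0$ argument supplies, at the cost of implicitly assuming $\Psi$ does not vanish at $1/2$), and it simply asserts the strict inequality $r^*(x,i)>r^*(x,j')$ for $j'\neq i$ rather than addressing the tie case you correctly flag as a gap.
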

The proof is included in
Appendix~\ref{app:realizable-two-multi}. Next, we consider the case
where $\Psi(u) = \Psi_q$, which satisfies the conditions $\lim_{u \to
  1^{-}} \Psi(u) = 0$.  The following result shows that
$\sfL_{\Psi_q}$ also admits an $\sR$-consistency bound with respect to
$\tdef$.

\begin{restatable}{theorem}{BoundTwoMulti}
\label{thm:bound-two-multi}
Assume that $\sR$ is symmetric and complete and that the inequality
$\sum_{j' \neq j} c_{j'}(x, y) \geq \num - 2$ holds for all $j \in
[\num]$ and $(x, y) \in \sX \times \sY$.  Then, the following
$\sR$-consistency bound holds: \ifdim\columnwidth=\textwidth {
\begin{equation*}
  \sE_{\tdef}(r) - \sE^*_{\tdef}(\sR) + \sM_{\tdef}(\sR)
  \leq \Gamma \paren*{ \sE_{\sfL_{\Psi_q}}(r)
      - \sE^*_{\sfL_{\Psi_q}}(\sR) + \sM_{\sfL_{\Psi_q}}(\sR) },
\end{equation*}
}\else
{
\begin{align*}
  & \sE_{\tdef}(r) - \sE^*_{\tdef}(\sR) + \sM_{\tdef}(\sR)\\
  & \qquad \leq \Gamma \paren*{ \sE_{\sfL_{\Psi_q}}(r)
      - \sE^*_{\sfL_{\Psi_q}}(\sR) + \sM_{\sfL_{\Psi_q}}(\sR) },
\end{align*}
}\fi
where $\ov C = (\num - 1) \max_{j \in [\num ]} \ov c_j - \num + 2$, $\Gamma(t) = 2 \paren*{\ov C}^{\frac12}  \sqrt{t}$ when $q = 0$, $\Gamma(t)
= 2(\num^q)^{\frac12} (\ov C)^{\frac12} \sqrt{t}$ when $q\in (0, 1)$, and $\Gamma(t)
= \num \, t$ when $q = 1$.
\end{restatable}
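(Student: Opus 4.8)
\textbf{The plan is to} reduce the $\sR$-consistency bound for $\tdef$ to a known $\sR$-consistency bound for the multi-class zero-one loss covered by comp-sum losses $\Psi_q$, exactly as in the single-stage analysis and in \cite{mao2023cross,maorealizable}. The key structural observation is that the surrogate loss $\sfL_{\Psi_q}$ in \eqref{eq:two-stage-multiple-expert} is, up to nonnegative multiplicative coefficients that are under our control by the assumption $\sum_{j'\neq j}c_{j'}(x,y)\ge \num-2$, a \emph{cost-weighted comp-sum loss} over the $\num$ "defer-to-expert-$j$" labels. Concretely, for fixed $x$ define the conditional weights $w_j(x) = \E_{y\mid x}\bigl[\sum_{j'\neq j} c_{j'}(x,y) - \num + 2\bigr] = \sum_{j'\neq j}\E_{y\mid x}[c_{j'}(x,y)] - \num + 2$, which are nonnegative by hypothesis; note also $\E_{y\mid x}[c_j(x,y)] = \tfrac{1}{\num-1}\bigl(\sum_{j'} w_{j'}(x)\bigr)\cdot\text{(const)} - w_j(x) + \text{const}$, so that the conditional $\tdef$-error $\sC_{\tdef}(r,x) = \sum_j \E_{y\mid x}[c_j(x,y)]\,1_{\rr(x)=j}$ is (affinely, via a decreasing relation) governed by which $w_j(x)$ is \emph{largest}, while the conditional $\sfL_{\Psi_q}$-error is $\sC_{\sfL_{\Psi_q}}(r,x) = \sum_j w_j(x)\,\Psi_q\!\bigl(\tfrac{e^{r(x,j)}}{\sum_{j'}e^{r(x,j')}}\bigr)$, which is exactly the conditional error of a comp-sum surrogate for a cost-sensitive multi-class problem with label weights $w_j(x)$.

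\textbf{The steps, in order:} (1) \emph{Characterize the conditional regret of $\tdef$.} Show $\Delta\sC_{\tdef,\sR}(r,x)$ equals $\max_j \alpha_j(x) - \alpha_{\rr(x)}(x)$ for an appropriate nonnegative reweighting $\alpha_j(x)$ of the $w_j(x)$ — i.e. the Bayes-optimal rule defers to $\argmax_j w_j(x)$, and using symmetry/completeness of $\sR$ the best-in-class conditional error is attained. This parallels the $p_{n+j}$ analysis announced after Theorem~\ref{thm:mae}. (2) \emph{Lower-bound the conditional regret of $\sfL_{\Psi_q}$ by that of the weighted zero-one loss.} Invoke the existing comp-sum $\sH$-consistency bound machinery: for $\Psi_q$ over $\num$ outcomes with nonnegative weights $w_j(x)$ summing to $W(x)$, one has $\Delta\sC_{\ell_{0-1}^{w},\sR}(r,x) \le \Gamma_q\!\bigl(\Delta\sC_{\sfL_{\Psi_q},\sR}(r,x)\bigr)$ with $\Gamma_q$ the square-root form for $q\in[0,1)$ (with constant depending on $W(x)$, bounded by $\ov C$) and the identity for $q=1$. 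Here $\ov C = (\num-1)\max_j \ov c_j - \num + 2$ is precisely the uniform upper bound on $\sum_{j'\neq j} c_{j'}(x,y) - \num + 2$, hence on each $w_j(x)$ and on $W(x)$ up to the stated factors. (3) \emph{Relate weighted-zero-one regret to $\tdef$ regret.} Since deferring to the wrong expert costs the difference of the relevant $\alpha_j(x)$'s, and these are controlled by the $w_j(x)$, conclude $\Delta\sC_{\tdef,\sR}(r,x) \le \Gamma\!\bigl(\Delta\sC_{\sfL_{\Psi_q},\sR}(r,x)\bigr)$ pointwise in $x$ with the claimed $\Gamma$. (4) \emph{Take expectations and apply concavity.} Use Jensen's inequality on the concave $\Gamma$ together with the identity $\sE_{\sfL}(r)-\sE^*_{\sfL}(\sR)+\sM_{\sfL}(\sR) = \E_x[\Delta\sC_{\sfL,\sR}(r,x)]$ (valid for both losses) to pass from the pointwise bound to the stated generalization-error bound, tracking the constants to obtain $2(\ov C)^{1/2}$, $2(\num^q)^{1/2}(\ov C)^{1/2}$, and $\num$ respectively.

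\textbf{The main obstacle} will be Step~(1)/(3): making precise the affine dictionary between "which $w_j(x)$ is largest" and "which expert is $\tdef$-optimal," and in particular verifying that the multiplicative constants that appear when moving between the weighted zero-one regret and the true $\tdef$ conditional regret are no worse than the claimed $\ov C$ and $\num$. Unlike the standard cost-sensitive setup, here the weight $w_j(x)$ attached to outcome $j$ in the surrogate is $\E_{y\mid x}[\sum_{j'\neq j}c_{j'} - \num+2]$, which is a \emph{complementary} combination of the per-expert costs rather than $\E_{y\mid x}[c_j]$ itself; one must check that its argmax coincides with $\argmin_j \E_{y\mid x}[c_j(x,y)]$ (true because $\sum_{j'\neq j}c_{j'} = \bigl(\sum_{j'}c_{j'}\bigr) - c_j$, so the $j$-dependent part is exactly $-c_j$) and that the gap is preserved up to the stated scaling. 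The assumption $\sum_{j'\neq j}c_{j'}(x,y)\ge\num-2$ is exactly what guarantees nonnegativity of every surrogate coefficient, so that $\sfL_{\Psi_q}$ is a genuine (cost-weighted) comp-sum loss and the comp-sum $\sH$-consistency bounds apply; the symmetry and completeness of $\sR$ are what let us attain the best-in-class conditional errors and reuse those bounds verbatim. The $q=1$ case is cleaner since $\Psi_1(u)=1-u$ makes $\sfL_{\Psi_1}$ linear in the conditional probabilities, giving the exact linear factor $\num$ with no square root.
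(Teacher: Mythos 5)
Your proposal is correct and follows essentially the same route as the paper: it identifies the coefficients $\sum_{j'\neq j}c_{j'}(x,y)-\num+2$ (nonnegative by assumption) as weights of a cost-weighted comp-sum loss, observes that their conditional expectations $\ov q(x,j)$ satisfy $\ov q(x,j) = C(x) - \E_{y|x}[c_j(x,y)]$ so that the $\tdef$ conditional regret equals $\max_j \ov q(x,j)-\ov q(x,\rr(x))$ exactly, bounds this by the surrogate conditional regret with the stated $\Gamma$, and concludes by Jensen. The only difference is that where you invoke the comp-sum $\sH$-consistency machinery as a black box, the paper re-derives the conditional-regret lower bound explicitly for the non-normalized weights $\ov q(x,j)\in[0,\ov C]$ via a two-coordinate softmax perturbation, which is where the constants $2(\ov C)^{1/2}$, $2(\num^q \ov C)^{1/2}$, and $\num$ are actually extracted.
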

Appendix~\ref{app:bound-two-multi} contains the proof.  Collectively,
Theorems~\ref{thm:realizable-two-multi} and \ref{thm:bound-two-multi}
show that for two-stage multiple-expert deferral, $\sfL_{\Psi_q}$ is both
realizable $\sR$-consistent, and admits $\sR$-consistency bounds,
thus ensuring Bayes-consistency.

\section{Enhanced Bounds under Low-Noise Assumptions}
\label{sec:low-noise}

In the previous sections, we established guarantees for realizable
$\sH$-consistency and $\sH$-consistency bounds in the context of
learning with multiple-expert deferral. Realizable $\sH$-consistency
provides guarantees under deterministic assumptions, while
$\sH$-consistency bounds extend to arbitrary distributions. This
raises a natural question: can we offer guarantees for intermediate
cases, particularly for distributions satisfying low-noise
assumptions? In what follows, we address this question separately for
the single-stage and two-stage scenarios.

\ignore{
We begin by presenting the following result from
\citet{mao2025enhanced}, which serves as a general tool for obtaining
enhanced bounds under low-noise assumptions. For completeness, we
include its proof in Appendix~\ref{app:new-bound-power}.

\begin{restatable}{theorem}{NewBoundPower}
\label{Thm:new-bound-power}
Assume that there exist two positive functions $\alpha\colon \sH
\times \sX \to \Rset^*_+$ and $\beta\colon \sH \times \sX \to
\Rset^*_+$ with $\sup_{x \in \sX} \alpha(h, x) < +\infty$ and $\E_{x
  \in \sX} \bracket*{\beta(h, x)} < +\infty$ for all $h \in \sH$ such
that the following holds for all $h\in \sH$ and $x\in \sX$: $
\frac{\Delta\sC_{\sfL_2,\sH}(h, x) \E_X\bracket*{\beta(h,
    x)}}{\beta(h, x)} \leq \paren*{\alpha(h, x) \, \Delta
  \sC_{\sfL_1,\sH}(h, x)}^{\frac{1}{s}}$, for some $s \geq 1$ with
conjugate number $t \geq 1$, that is $\frac{1}{s} + \frac{1}{t} =
1$. Then, for $\gamma(h) = \E_X
\bracket*{\frac{\alpha^{\frac{t}{s}}(h, x) \beta^t(h,
    x)}{\E_X\bracket*{\beta(h, x)}^t}}^{\frac{1}{t}}$, the following
inequality holds for any $h \in \sH$: \ifdim\columnwidth=\textwidth {
\begin{equation*}
  \sE_{\sfL_2}(h) - \sE_{\sfL_2}^*(\sH) + \sM_{\sfL_2}(\sH)
  \leq \gamma(h) \bracket*{\sE_{\sfL_1}(h) - \sE^*_{\sfL_1}(\sH)
    + \sM_{\sfL_1}(\sH)}^{\frac{1}{s}}.
\end{equation*}
}\else
{
\begin{align*}
& \sE_{\sfL_2}(h) - \sE_{\sfL_2}^*(\sH) + \sM_{\sfL_2}(\sH)\\
& \qquad \leq \gamma(h) \bracket*{\sE_{\sfL_1}(h) - \sE^*_{\sfL_1}(\sH)
    + \sM_{\sfL_1}(\sH)}^{\frac{1}{s}}.
\end{align*}
}\fi
\end{restatable}
}

\subsection{Single-Stage Scenario}
\label{sec:single-noise}

We first characterize the conditional error and conditional regret of
the deferral loss. Let $y_{\max} = \argmax_{y \in \sY} p(y | x)$, $p_{y_{\max}} = p(y_{\max} | x)$ and $p_{n + j} = \sum_{y \in \sY}
p(y | x) \paren*{1 - c_j(x, y)}$, $j \in [\num]$. For any $h \in
\sH_{\rm all}$, define:
\[
p_{\hh(x)}
= \begin{cases} p(\hh(x) | x) & \hh(x) \in [n] \\ p_{n + j} & \hh(x) =
  n + j.
\end{cases}
\]

\begin{restatable}{lemma}{Def}
\label{lemma:def}
Assume that $\sH$ is symmetric and complete.  Then, for any $h \in \sH$
and input $x \in \sX$, the conditional error and conditional regret of
the deferral loss are given by:
\ifdim\columnwidth=\textwidth
{
\begin{equation*}
  \sC_{\ldef}(h, x) = 1 - p_{\hh(x)}, \quad
\Delta \sC^*_{\ldef , \sH}\paren*{h, x} = \max \curl*{p_{y_{\max}}, \max_{j \in [\num]} p_{n + j}} - p_{\hh(x)}.
\end{equation*}
}\else
{
\begin{align*}
\sC_{\ldef}(h, x) &= 1 - p_{\hh(x)}\\
\Delta \sC^*_{\ldef , \sH}\paren*{h, x} &= \max \curl*{p_{y_{\max}}, \max_{j \in [\num]} p_{n + j}} - p_{\hh(x)}.
\end{align*}
}\fi
\end{restatable}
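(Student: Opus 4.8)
The plan is to compute the conditional error $\sC_{\ldef}(h, x) = \E_{y \mid x}\bracket*{\ldef(h, x, y)}$ directly from the definition of the deferral loss by a two-case analysis on the prediction $\hh(x)$, and then to derive the conditional regret by identifying the best-in-class conditional error $\sC^*_{\ldef}(\sH, x)$ using the symmetry and completeness of $\sH$.

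For the conditional error, fix $h \in \sH$ and $x \in \sX$. If $\hh(x) \in [n]$, then all the expert indicator terms vanish and $\ldef(h, x, y) = \1_{\hh(x) \neq y}$, so taking the expectation over $y \mid x$ gives $\sC_{\ldef}(h, x) = 1 - p(\hh(x) \mid x) = 1 - p_{\hh(x)}$. If instead $\hh(x) = n + j$ for some $j \in [\num]$, then only the $j$-th expert term survives and $\ldef(h, x, y) = c_j(x, y)$, hence $\sC_{\ldef}(h, x) = \sum_{y \in \sY} p(y \mid x)\, c_j(x, y) = 1 - \sum_{y \in \sY} p(y \mid x)\paren*{1 - c_j(x, y)} = 1 - p_{n + j} = 1 - p_{\hh(x)}$. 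In both cases $\sC_{\ldef}(h, x) = 1 - p_{\hh(x)}$, which is the first claimed identity; note that $\sC_{\ldef}(h', x)$ depends on $h'$ only through the label $\hh'(x) \in \ov \sY$.

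For the conditional regret, I need $\sC^*_{\ldef}(\sH, x) = \inf_{h' \in \sH}\paren*{1 - p_{\hh'(x)}}$. Since $\sH$ is symmetric and complete, for the fixed input $x$ the score vector $\paren*{h'(x, 1), \ldots, h'(x, n + \num)}$ can be made equal to any vector in $\Rset^{n + \num}$ as $h'$ ranges over $\sH$; in particular, for any target $\bar y \in \ov \sY$ one may choose $h'$ whose score vector has a strict maximum at coordinate $\bar y$, so that $\hh'(x) = \bar y$ irrespective of the tie-breaking rule, while conversely $\hh'(x)$ always lies in $\ov \sY$. Thus the map $h' \mapsto \hh'(x)$ is onto $\ov \sY$, and splitting $\ov \sY$ into $[n]$ and $\curl*{n + 1, \ldots, n + \num}$ and using the two formulas above yields $\sC^*_{\ldef}(\sH, x) = 1 - \max\curl*{\max_{i \in [n]} p(i \mid x), \max_{j \in [\num]} p_{n + j}} = 1 - \max\curl*{p_{y_{\max}}, \max_{j \in [\num]} p_{n + j}}$, where the last equality uses $p_{y_{\max}} = \max_{i \in [n]} p(i \mid x)$. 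Subtracting this from $\sC_{\ldef}(h, x) = 1 - p_{\hh(x)}$ gives the second identity.

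The only step requiring care is the surjectivity of $h' \mapsto \hh'(x)$ onto $\ov \sY$: I should spell out that symmetry expresses the score vectors at $x$ as $\sF^{n + \num}$ while completeness forces $\curl*{f(x) : f \in \sF} = \Rset$, so the coordinates at the fixed input may be chosen independently and arbitrarily — in particular with a unique argmax at any prescribed label, which makes the tie-breaking rule irrelevant. Everything else reduces to the two elementary conditional-expectation computations above, and since $\ov \sY$ is finite the infimum is attained, so $\sC^*_{\ldef}(\sH, x)$ is in fact a minimum.
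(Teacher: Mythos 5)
Your proposal is correct and follows essentially the same route as the paper's proof: compute $\sC_{\ldef}(h,x)$ by splitting on whether $\hh(x)\in[n]$ or $\hh(x)=n+j$ to get $1-p_{\hh(x)}$, then use symmetry and completeness to conclude that $\hh'(x)$ ranges over all of $\ov\sY$, so the best-in-class conditional error is $1-\max\curl*{p_{y_{\max}},\max_{j\in[\num]}p_{n+j}}$. The only difference is that you spell out the surjectivity of $h'\mapsto\hh'(x)$, which the paper leaves implicit; this is a welcome elaboration, not a departure.
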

The proof of Lemma~\ref{lemma:def} is included in
Appendix~\ref{app:def}.

The original Tsybakov noise definition was introduced and further
studied in the context of standard classification
\citep{MammenTsybakov1999,mao2025enhanced,mohri2025beyond}. We extend this definition
to the setting of the deferral loss with multiple experts and
establish new $\sH$-consistency guarantees under this noise
assumption.

This extension relies on the concept of the \emph{Bayes classifier for
the deferral loss}. By \citep[Lemma~2.1]{MaoMohriZhong2024universal}, there
exists a measurable function $h^*$ that satisfies $p_{\hh^*(x)} = \max
\curl*{p_{y_{\max}}, \max_{j \in [\num]} p_{n + j}}$, for all $x \in
\sX$ which identifies $h^*$ as the Bayes classifier for the deferral
loss. We can now define the \emph{minimal margin for a point $x \in
\sX$} as $\gamma(x) = p_{\hh^*(x)} - \sup_{y \neq \hh^*(x)} p_y$ and
the \emph{single-stage deferral Tsybakov noise assumption} as follows:
there exist $B > 0$ and $\alpha \in [0, 1)$ such that the following
  inequality holds:
\begin{equation}
\forall t > 0, \quad \Pr[\gamma(X) \leq t] \leq B t^{\frac{\alpha}{1 - \alpha}}.
\end{equation}
Thus, this noise assumption posits that the probability of observing a
small margin is relatively low. Note that when $\alpha \to 1$,
$t^{\frac{\alpha}{1 - \alpha}} \to 0$, which corresponds to an
analogue of Massart's noise assumption in the deferral setting. The
following lemma establishes basic properties of single-stage deferral
Tsybakov noise assumption, extending a similar result from the
classification setting by \citet{mao2025enhanced}.

\begin{restatable}{lemma}{MM}
\label{lemma:MM}
  The single-stage deferral Tsybakov noise assumption implies that
  there exists a constant $c = \frac{B^{1 - \alpha}}{\alpha^{\alpha}}
  > 0$ such that the following inequalities hold for any $h \in \sH$:
  \ifdim\columnwidth=\textwidth {
\begin{equation*}
\E[1_{\hh(x) \neq \hh^*(x)}]
\leq c \E[\gamma(X) 1_{\hh(x) \neq \hh^*(x)}]^\alpha
\leq c [\sE_{\ldef}(h) - \sE_{\ldef}(h^*)]^\alpha.
\end{equation*}
}\else
{
\begin{align*}
  \E[1_{\hh(x) \neq \hh^*(x)}]
  &\leq c \E[\gamma(X) 1_{\hh(x) \neq \hh^*(x)}]^\alpha\\
  &\leq c [\sE_{\ldef}(h) - \sE_{\ldef}(h^*)]^\alpha.
  \end{align*}
}\fi
\end{restatable}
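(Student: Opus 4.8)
The plan is to prove the two inequalities separately; only the first uses the Tsybakov assumption, while the second is a deterministic comparison that I would derive from a pointwise estimate. For the second inequality, since $t \mapsto t^{\alpha}$ is non-decreasing on $\Rset_{+}$ and $c > 0$, it suffices to show $\E\bracket*{\gamma(X)\, 1_{\hh(x) \neq \hh^*(x)}} \leq \sE_{\ldef}(h) - \sE_{\ldef}(h^*)$. By Lemma~\ref{lemma:def}---whose identity $\sC_{\ldef}(h, x) = 1 - p_{\hh(x)}$ holds in fact for every $h \in \sH_{\rm{all}}$---and since $p_{\hh^*(x)} = \max\curl*{p_{y_{\max}}, \max_{j \in [\num]} p_{n + j}} = \max_{y \in \ov\sY} p_{y}$ by the defining property of the Bayes classifier $h^*$, we get $\sC_{\ldef}(h, x) - \sC_{\ldef}(h^*, x) = p_{\hh^*(x)} - p_{\hh(x)}$. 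This equals $0$ when $\hh(x) = \hh^*(x)$; when $\hh(x) \neq \hh^*(x)$, then $p_{\hh(x)} \leq \sup_{y \neq \hh^*(x)} p_{y}$, so $p_{\hh^*(x)} - p_{\hh(x)} \geq p_{\hh^*(x)} - \sup_{y \neq \hh^*(x)} p_{y} = \gamma(x)$. In both cases $\sC_{\ldef}(h, x) - \sC_{\ldef}(h^*, x) \geq \gamma(x)\, 1_{\hh(x) \neq \hh^*(x)}$; taking the expectation over $x$, then raising to the power $\alpha$ and multiplying by $c$, gives the second inequality.

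For the first inequality, fix $h \in \sH$ and write $u = \E\bracket*{1_{\hh(x) \neq \hh^*(x)}}$, $A = \E\bracket*{\gamma(X)\, 1_{\hh(x) \neq \hh^*(x)}}$, and $\beta = \frac{\alpha}{1 - \alpha} \geq 0$. From the layer-cake identity $A = \int_{0}^{\infty} \Pr\bracket*{\curl*{\gamma(X) > s} \cap \curl*{\hh(x) \neq \hh^*(x)}}\, ds$, the elementary bound $u \leq \Pr\bracket*{\curl*{\gamma(X) > s} \cap \curl*{\hh(x) \neq \hh^*(x)}} + \Pr[\gamma(X) \leq s]$, and the Tsybakov assumption $\Pr[\gamma(X) \leq s] \leq B s^{\beta}$, it follows that $\Pr\bracket*{\curl*{\gamma(X) > s} \cap \curl*{\hh(x) \neq \hh^*(x)}} \geq u - B s^{\beta}$ for all $s > 0$. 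Integrating this lower bound over $s \in (0, t)$ and discarding the nonnegative tail $\int_{t}^{\infty}$ (the true integrand is nonnegative and at least $u - B s^{\beta}$, so lower-bounding it is valid) gives $A \geq u t - \frac{B}{\beta + 1}\, t^{\beta + 1}$ for every $t > 0$. Maximizing the right-hand side, attained at $t = (u / B)^{1/\beta}$, yields $A \geq \frac{\beta}{\beta + 1}\, B^{-1/\beta}\, u^{(\beta + 1)/\beta}$. Substituting $\frac{\beta}{\beta + 1} = \alpha$, $\frac{1}{\beta} = \frac{1 - \alpha}{\alpha}$, and $\frac{\beta + 1}{\beta} = \frac{1}{\alpha}$, and solving for $u$, gives $u \leq \frac{B^{1 - \alpha}}{\alpha^{\alpha}}\, A^{\alpha}$, i.e. the claimed $c = B^{1 - \alpha} / \alpha^{\alpha}$.

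The main obstacle is obtaining the \emph{sharp} constant in the first inequality: the naive alternative of splitting $u$ at a threshold $s$ into the parts where $\gamma(X) \leq s$ and $\gamma(X) > s$ and then optimizing over $s$ yields only $u \leq \frac{B^{1 - \alpha}}{\alpha^{\alpha}(1 - \alpha)^{1 - \alpha}}\, A^{\alpha}$, which is weaker by the factor $(1 - \alpha)^{-(1 - \alpha)}$; the layer-cake computation above is what recovers $c = B^{1 - \alpha}/\alpha^{\alpha}$. Apart from this, only the degenerate cases ($\alpha = 0$, or $A = 0$, read with the usual exponent conventions) need a brief separate remark, and the second inequality is immediate once the conditional-error identities of Lemma~\ref{lemma:def} are in hand.
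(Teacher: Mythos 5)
Your proposal is correct and follows essentially the same route as the paper: the second inequality is obtained exactly as the paper indicates, from the pointwise bound $\sC_{\ldef}(h,x)-\sC_{\ldef}(h^*,x)=p_{\hh^*(x)}-p_{\hh(x)}\geq \gamma(x)\,1_{\hh(x)\neq\hh^*(x)}$ via Lemma~\ref{lemma:def} and the definition of the margin, and your first inequality is the same layer-cake/truncate-at-$t$/optimize argument (with the same optimizer $t=(u/B)^{1/\beta}$ and the same constant $c=B^{1-\alpha}/\alpha^{\alpha}$) that the paper imports from Lemma~18 of \citet{mao2024enhanced}.
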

The proof of Lemma~\ref{lemma:MM} can be found in
Appendix~\ref{app:MM}.
We first derive an $\sH$-consistency bound in terms of the quantity
$1_{\hh(x) \neq \hh^*(x)}$, which appears on the left-hand side of the
statement of Lemma~\ref{lemma:MM}.

\begin{restatable}{theorem}{Multi}
  \label{Thm:multi}
  Consider the setting of single-stage multiple-expert deferral. Assume
  that the following holds for all $h \in \sH$ and $x \in \sX$:
  $\Delta\sC_{\ldef,\sH}(h, x) \leq \Gamma \paren*{\Delta
    \sC_{\sfL,\sH}(h, x)}$, with $\Gamma(x) = x^{\frac{1}{s}}$, for
  some $s \geq 1$ with conjugate number $t \geq 1$, that is
  $\frac{1}{s} + \frac{1}{t} = 1$.  Then, for any $h \in \sH$,
  \ifdim\columnwidth=\textwidth {
\begin{equation*}
  \sE_{\ldef}(h) - \sE^*_{\ldef}(\sH) + \sM_{\ldef}(\sH)
  \leq \E_X[1_{\hh(X) \neq \hh^*(X)}]^{\frac{1}{t}} \paren*{\sE_{\sfL}(h)
    - \sE^*_{\sfL}(\sH) + \sM_{\sfL}(\sH)}^{\frac{1}{s}}.    
\end{equation*}
}\else
{
\begin{align*}
  & \sE_{\ldef}(h) - \sE^*_{\ldef}(\sH) + \sM_{\ldef}(\sH)\\
  & \qquad \leq \E_X[1_{\hh(X) \neq \hh^*(X)}]^{\frac{1}{t}} \paren*{\sE_{\sfL}(h)
    - \sE^*_{\sfL}(\sH) + \sM_{\sfL}(\sH)}^{\frac{1}{s}}.
\end{align*}
}\fi
\end{restatable}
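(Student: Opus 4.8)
The plan is to pass to conditional regrets, localize the deferral conditional regret to the region where $h$ disagrees with the Bayes classifier $h^*$, and then integrate via H\"older's inequality. The only real content is the localization step; the rest is the standard conditional-regret bookkeeping used throughout this line of work.

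First I would use the standard identity that, for any loss $\sfL'$ and any $h \in \sH$, $\sE_{\sfL'}(h) - \sE^*_{\sfL'}(\sH) + \sM_{\sfL'}(\sH) = \E_X\bracket*{\Delta\sC_{\sfL',\sH}(h, X)}$, which is immediate from the definitions of the generalization error, best-in-class error, minimizability gap, and conditional regret. Applying this to $\sfL' = \ldef$ on the left and $\sfL' = \sfL$ on the right reduces the claim to the inequality $\E_X\bracket*{\Delta\sC_{\ldef,\sH}(h, X)} \le \E_X\bracket*{1_{\hh(X)\neq\hh^*(X)}}^{1/t}\,\E_X\bracket*{\Delta\sC_{\sfL,\sH}(h, X)}^{1/s}$.

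Next I would invoke Lemma~\ref{lemma:def}: $\Delta\sC_{\ldef,\sH}(h, x) = \max\curl*{p_{y_{\max}}, \max_{j\in[\num]} p_{n+j}} - p_{\hh(x)}$, while the Bayes classifier $h^*$ satisfies $p_{\hh^*(x)} = \max\curl*{p_{y_{\max}}, \max_{j\in[\num]} p_{n+j}}$. Hence $\Delta\sC_{\ldef,\sH}(h, x) = 0$ whenever $\hh(x) = \hh^*(x)$, so that $\Delta\sC_{\ldef,\sH}(h, x) = \Delta\sC_{\ldef,\sH}(h, x)\,1_{\hh(x)\neq\hh^*(x)}$ pointwise. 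Combining this with the hypothesis $\Delta\sC_{\ldef,\sH}(h, x) \le \Gamma\paren*{\Delta\sC_{\sfL,\sH}(h, x)} = \paren*{\Delta\sC_{\sfL,\sH}(h, x)}^{1/s}$ (and using that $1_{\hh(x)\neq\hh^*(x)} \ge 0$ and conditional regrets are nonnegative) yields the pointwise bound $\Delta\sC_{\ldef,\sH}(h, x) \le \paren*{\Delta\sC_{\sfL,\sH}(h, x)}^{1/s}\,1_{\hh(x)\neq\hh^*(x)}$.

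Finally I would take expectation over $X$ and apply H\"older's inequality with conjugate exponents $s$ and $t$ to the product $\paren*{\Delta\sC_{\sfL,\sH}(h, X)}^{1/s}\cdot 1_{\hh(X)\neq\hh^*(X)}$, using $1_{\hh(X)\neq\hh^*(X)}^t = 1_{\hh(X)\neq\hh^*(X)}$, to get $\E_X\bracket*{\Delta\sC_{\ldef,\sH}(h, X)} \le \E_X\bracket*{\Delta\sC_{\sfL,\sH}(h, X)}^{1/s}\,\E_X\bracket*{1_{\hh(X)\neq\hh^*(X)}}^{1/t}$; substituting back the Step-1 identities gives the theorem. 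The main obstacle — really the only nonroutine point — is recognizing that the deferral conditional regret vanishes exactly off the disagreement set $\curl{\hh \neq \hh^*}$, which is what makes the indicator appear and rests on the characterization in Lemma~\ref{lemma:def}. I would also note that since $\E_X\bracket*{1_{\hh(X)\neq\hh^*(X)}}^{1/t} \le 1$, this bound strictly refines the one obtained by applying Jensen's inequality to the pointwise hypothesis alone, and it is precisely this extra factor that, together with Lemma~\ref{lemma:MM}, produces the enhanced rates under the single-stage deferral Tsybakov noise assumption.
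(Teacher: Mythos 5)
Your proof is correct and follows essentially the same route as the paper: both arguments rest on the observation (via Lemma~\ref{lemma:def}) that $\Delta\sC_{\ldef,\sH}(h,x)$ vanishes off the disagreement set $\{\hh \neq \hh^*\}$, followed by H\"older's inequality with exponents $s$ and $t$. The only difference is presentational — the paper channels the same computation through the general tool Theorem~\ref{Thm:new-bound-power} with $\beta(h,x) = 1_{\hh(x)\neq\hh^*(x)} + \e$ and a limit $\e \to 0$, whereas you perform the localization and H\"older step directly.
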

The proof of Theorem~\ref{Thm:multi} is included in
Appendix~\ref{app:multi}. Theorem~\ref{Thm:multi} offers a stronger
theoretical guarantee compared to standard $\sH$-consistency bounds,
under the assumption $\Delta\sC_{\ldef,\sH}(h, x) \leq \Gamma
\paren*{\Delta \sC_{\sfL,\sH}(h, x)}$. This bound includes the
hypothesis-dependent factor $\E_X[1_{\hh(X) \neq
    \hh^*(X)}]^{\frac{1}{t}} \leq 1$, which is more favorable than the
constant factor of one in standard bounds.

Next, we assume that the Tsybakov noise assumption holds and also
that there is no approximation error, that is $\sM_{\ldef}(\sH)
= 0$. 

\begin{restatable}{theorem}{MMMulti}
  \label{Thm:MM-multi}
  Consider the setting of single-stage multiple-expert deferral where the
  Tsybakov noise assumption holds, and no approximation error occurs,
  that is, $\sE^*_{\ldef}(\sH) = \sE^*_{\ldef}(\sH_{\rm{all}})$. Assume
  that the following holds for all $h \in \sH$ and $x \in \sX$:
  $\Delta\sC_{\ldef,\sH}(h, x) \leq \Gamma \paren*{\Delta
    \sC_{\sfL,\sH}(h, x)}$, with $\Gamma(x) = x^{\frac{1}{s}}$, for
  some $s \geq 1$.  Then, for any $h \in \sH$,
  \ifdim\columnwidth=\textwidth {
\begin{equation*}
  \sE_{\ldef}(h) - \sE^*_{\ldef}(\sH)
  \leq c^{\frac{s - 1}{s - \alpha(s - 1)}}\bracket*{\sE_{\sfL}(h) - \sE^*_{\sfL}(\sH) + \sM_{\sfL}(\sH)}^{\frac{1}{s - \alpha(s - 1)}}.
\end{equation*}
}\else
{
\begin{align*}
  &\sE_{\ldef}(h) - \sE^*_{\ldef}(\sH)\\
  &\qquad \leq c^{\frac{s - 1}{s - \alpha(s - 1)}}\bracket*{\sE_{\sfL}(h) - \sE^*_{\sfL}(\sH) + \sM_{\sfL}(\sH)}^{\frac{1}{s - \alpha(s - 1)}}.
\end{align*}
}\fi
\end{restatable}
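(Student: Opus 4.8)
The plan is to combine the refined $\sH$-consistency bound of Theorem~\ref{Thm:multi} with the Tsybakov-type control on the disagreement region provided by Lemma~\ref{lemma:MM}, and then solve the resulting self-referential inequality for the excess deferral error. First, since $\sM_{\ldef}(\sH) = \sE^*_{\ldef}(\sH) - \E_X[\inf_{h}\sC_{\ldef}(h,x)] $ and the no-approximation-error assumption $\sE^*_{\ldef}(\sH) = \sE^*_{\ldef}(\sH_{\rm{all}})$ forces $\sM_{\ldef}(\sH) = 0$, the left-hand side of Theorem~\ref{Thm:multi} reduces to the excess error $\sE_{\ldef}(h) - \sE^*_{\ldef}(\sH)$. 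Moreover, under this assumption the Bayes classifier $h^*$ achieves $\sE_{\ldef}(h^*) = \sE^*_{\ldef}(\sH)$, so the quantity $\sE_{\ldef}(h) - \sE_{\ldef}(h^*)$ appearing in Lemma~\ref{lemma:MM} is exactly the excess error as well. Thus Theorem~\ref{Thm:multi} gives
\[
\sE_{\ldef}(h) - \sE^*_{\ldef}(\sH) \leq \E_X[1_{\hh(X)\neq \hh^*(X)}]^{\frac1t}\paren*{\sE_{\sfL}(h) - \sE^*_{\sfL}(\sH) + \sM_{\sfL}(\sH)}^{\frac1s}.
\]

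Next, I would apply Lemma~\ref{lemma:MM} to bound $\E_X[1_{\hh(X)\neq\hh^*(X)}] \leq c\,[\sE_{\ldef}(h) - \sE^*_{\ldef}(\sH)]^{\alpha}$. Writing $E \coloneqq \sE_{\ldef}(h) - \sE^*_{\ldef}(\sH)$ and $S \coloneqq \sE_{\sfL}(h) - \sE^*_{\sfL}(\sH) + \sM_{\sfL}(\sH)$, substituting yields the self-bounding inequality $E \leq (cE^\alpha)^{1/t} S^{1/s} = c^{1/t} E^{\alpha/t} S^{1/s}$. Solving for $E$: rearranging gives $E^{1 - \alpha/t} \leq c^{1/t} S^{1/s}$, hence $E \leq c^{\frac{1/t}{1 - \alpha/t}} S^{\frac{1/s}{1 - \alpha/t}}$. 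Using $\frac1t = 1 - \frac1s = \frac{s-1}{s}$, the exponent $1 - \frac{\alpha}{t} = 1 - \frac{\alpha(s-1)}{s} = \frac{s - \alpha(s-1)}{s}$, so $\frac{1/t}{1-\alpha/t} = \frac{s-1}{s - \alpha(s-1)}$ and $\frac{1/s}{1-\alpha/t} = \frac{1}{s - \alpha(s-1)}$. This is exactly the claimed bound
\[
\sE_{\ldef}(h) - \sE^*_{\ldef}(\sH) \leq c^{\frac{s-1}{s-\alpha(s-1)}}\bracket*{\sE_{\sfL}(h) - \sE^*_{\sfL}(\sH) + \sM_{\sfL}(\sH)}^{\frac{1}{s-\alpha(s-1)}}.
\]

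The main obstacle is the potential degeneracy $E = 0$: the division by $E^{\alpha/t}$ is only valid when $E > 0$, so I would handle the case $E = 0$ trivially (the bound holds since the right-hand side is nonnegative) and assume $E > 0$ for the algebraic manipulation. A secondary subtlety is justifying that the no-approximation-error hypothesis, stated as $\sE^*_{\ldef}(\sH) = \sE^*_{\ldef}(\sH_{\rm{all}})$, indeed implies both $\sM_{\ldef}(\sH) = 0$ and that the measurable Bayes classifier $h^*$ from \citep[Lemma~2.1]{MaoMohriZhong2024universal} lies effectively in (or is approximated arbitrarily well by) $\sH$ so that Lemma~\ref{lemma:MM}'s final inequality $\E[\gamma(X)1_{\hh\neq\hh^*}] \leq \sE_{\ldef}(h) - \sE_{\ldef}(h^*)$ can be chained with $\sE_{\ldef}(h^*) = \sE^*_{\ldef}(\sH)$; this follows from the definition of the minimizability gap and the pointwise optimality of $h^*$, but should be spelled out. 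Everything else is the routine exponent arithmetic above.
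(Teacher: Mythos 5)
Your proposal is correct and follows essentially the same route as the paper: establish $E \leq \E_X[1_{\hh(X)\neq\hh^*(X)}]^{1/t} S^{1/s}$ (the paper re-derives this inline via Theorem~\ref{Thm:new-bound-power} rather than citing Theorem~\ref{Thm:multi}, but the content is identical), apply Lemma~\ref{lemma:MM} to get $E \leq c^{1/t} E^{\alpha/t} S^{1/s}$, and divide by $E^{\alpha/t}$ with the same exponent arithmetic. Your explicit handling of the degenerate case $E = 0$ and of why $\sM_{\ldef}(\sH) = 0$ and $\sE_{\ldef}(h^*) = \sE^*_{\ldef}(\sH)$ under the no-approximation-error assumption is a welcome addition that the paper leaves implicit.
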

The proof is given in Appendix~\ref{app:MM-multi}.  Note that as
$\alpha \to 1$, where the Tsybakov noise corresponds to Massart's
noise assumption, this result yields an $\sH$-consistency bound with a
linear functional form, improving upon the standard $\sH$-consistency
bounds, which have the functional form $\Gamma(x) = x^{\frac{1}{s}}$,
when such bounds exist. This shows that for any smooth surrogate
losses with an $\sH$-consistency bound, under Massart's noise
assumption, we can obtain $\sH$-consistency bounds as favorable as
those of $\sfL = \sfL_{\rm{mae}}$, which always admit a linear
dependency, as shown in Theorem~\ref{thm:mae}.

For example, in the special case of a single-expert,
\citet[Theorem~4.2]{maorealizable} shows that square-root
$\sH$-consistency bounds hold ($s = \frac{1}{2}$) for $\sfL =
\sfL_{\Psi_q}$ with $q \in [0, 1)$. Theorem~\ref{Thm:MM-multi}
  provides refined $\sH$-consistency bounds for these surrogate
  losses: linear dependence when Massart's noise assumption holds
  ($\alpha \to 1$) and an intermediate rate between linear and
  square-root for other values of $\alpha \in (0, 1)$.

Furthermore, we know that the realizability assumption can be viewed
as a special case of Massart's noise assumption. Thus,
Theorem~\ref{Thm:MM-multi} provides intermediate guarantees between
realizable $\sH$-consistency and $\sH$-consistency bounds for
single-stage deferral surrogate losses, such as $\sfL_{\rm{L2D}}$
considered in \citep{maorealizable}.

\subsection{Two-Stage Scenario}
\label{sec:two-noise}

We now extend our results to the two-stage scenario. As in the
single-stage case, we begin by characterizing the best-in-class
conditional error and the conditional regret of the two-stage deferral
loss function $\tdef$ (Lemma~\ref{lemma:tdef} in Appendix~\ref{app:tdef}). In particular, the Bayes conditional error is $\sC^*_{\tdef}(\sR_{\rm{all}}, x) = \min_{j \in [\num]} \sum_{y\in \sY}  p(y | x) c_j(x, y)$.

By \citep[Lemma~2.1]{MaoMohriZhong2024universal}, there exists a measurable
function $r^*$ such that
\[
\sum_{y\in \sY} p(y | x) c_{\rr^*(x)}(x, y)
= \min_{j \in [\num]} \sum_{y\in \sY} p(y | x) c_j(x, y),
\]
for all $x \in \sX$. This function $r^*$ thus defines a Bayes
classifier for the two-stage deferral loss. Define the \emph{minimal
margin for a point $x \in \sX$} as follows: $\gamma(x) = $
\begin{align*}
\inf_{j \neq \rr^*(x)} \sum_{y\in \sY}  p(y | x) c_{j}(x, y) - \sum_{y\in \sY}  p(y | x) c_{\rr^*(x)}(x, y) \geq 0.   
\end{align*}
The \emph{two-stage Tsybakov noise assumption} can then be defined as
follows: there exist $B > 0$ and $\alpha \in [0, 1)$ such that the
  following inequality holds:
\begin{equation}
\forall t > 0, \quad \Pr[\gamma(X) \leq t] \leq B t^{\frac{\alpha}{1 - \alpha}}.
\end{equation}
\ignore{ The Tsybakov noise assumes that the probability of a small
  margin occurring is relatively low. Note that when $\alpha \to 1$,
  $t^{\frac{\alpha}{1 - \alpha}} \to 0$, this corresponds to Massart’s
  noise assumption. }
As in the single-stage setting, Lemma~\ref{lemma:MM-tdef} in Appendix~\ref{app:MM-tdef} establishes
fundamental properties of the Tsybakov noise assumption within the
context of the two-stage \ignore{deferral}scenario.\ignore{Similar to the single-stage setting,} We also derive the \ignore{corresponding}enhanced bounds (Theorems~\ref{Thm:multi-tdef} and \ref{Thm:MM-multi-tdef}) for the two-stage scenario, with their detailed presentation deferred to Appendix~\ref{app:enhanced-bounds-tdef} due to space limitations.

\section{Experiments}
\label{sec:experiments}

In this section, we evaluate the empirical performance of our proposed surrogate loss, comparing it against existing baselines in both single-stage and two-stage learning-to-defer scenarios with multiple experts. 
Our objective is to confirm that our loss functions match the best-known results in non-realizable settings for both single- and two-stage cases while outperforming them in realizable settings, as predicted by our theoretical analysis.

\textbf{Experimental setup}. For our experiments, we used four widely used datasets: CIFAR-10, CIFAR-100 \citep{Krizhevsky09learningmultiple}, SVHN \citep{Netzer2011}, and Tiny ImageNet \citep{le2015tiny}. Each dataset was trained for 200 epochs. We adopted ResNet-16 \citep{he2016deep} for the predictor/deferral models. The Adam optimizer \citep{kingma2014adam} was used with a weight decay of $1\times 10^{-3}$ and a batch size of $1024$. Following \citep{pmlr-v206-mozannar23a,MaoMohriZhong2024deferral}, we define the cost function as the expert's classification error: $c_j(x, y) = 1_{\expertexpert_j(x)
  \neq y}$ in our empirical analysis. We reported the means and standard deviations over five runs. For simplicity, we omitted the standard deviations of the deferral ratios.
  
\textbf{Single-stage multiple-expert deferral}.  In the single-stage scenario, we compared our proposed surrogate $\sfL_{\rm{\Psi}}$ \eqref{eq:def-sur} with $\Psi(u) = 1 - u$ with the baseline surrogate losses proposed in \citep{verma2023learning} and \citep{MaoMohriZhong2024deferral}. While these surrogate losses are Bayes-consistent, they are not realizable $\sH$-consistent. For both surrogate losses, we used the logistic loss function as their auxiliary function in our experiments. Building on the setup described in \citep{MaoMohriZhong2024deferral}, we used two experts, each with a distinct domain of expertise.
Expert 1 always predicts from the first 30\% of classes and is correct whenever the true label falls within this range. Similarly, Expert 2 predicts from the next 30\% of classes, following the same criterion for accuracy. As shown in Table~\ref{tab:comparison-single}, our method achieves comparable \emph{single-stage system accuracy (SA)}, defined as the average value of $\bracket*{1 -
\ldef(h, x, y)}$ on the test data, to the baselines. Due to the specific realizable form of our surrogate losses, it also defers significantly more and exhibits a greater tendency to choose expert 2 compared to the baselines. This highlights a key effect of our proposed surrogate: it induces deferral behavior that is qualitatively different from prior approaches, particularly on the SVHN and Tiny ImageNet datasets.


\begin{table}[t]
    \caption{Comparison of Our Method with Single-Stage Baselines.}
    \centering
    \resizebox{\columnwidth}{!}{
    \begin{tabular}{@{\hspace{0cm}}llllll@{\hspace{0cm}}}
    \toprule
    \multirow{3}{*}{Method} & \multirow{3}{*}{Dataset} & \multirow{3}{*}{SA (\%)} & \multicolumn{3}{c}{Ratio of Deferral (\%)} \\
    \cmidrule{4-6}
    & & & PRED & EXP 1 & EXP 2\\
    \midrule
    \citet{verma2023learning} & \multirow{3}{*}{CIFAR-10}  & 91.85 $\pm 0.08$ & 62.88 & 11.23 & 25.89  \\
    \citet{MaoMohriZhong2024deferral} & & 91.98 $\pm$ 0.09 & 66.63 & 13.65 & 19.72 \\
    Ours & & \underline{92.21 $\pm$ 0.18} & 38.79 & 30.45 & 30.76\\
    \midrule
    \citet{verma2023learning} & \multirow{3}{*}{CIFAR-100} & 64.13 $\pm$ 0.15 & 39.77 & 35.95 & 24.28 \\
    \citet{MaoMohriZhong2024deferral} & & 64.93 $\pm$ 0.22 & 49.19 & 27.69 & 23.12 \\
    Ours & & \underline{65.39 $\pm$ 0.31} & 34.38 & 30.52 & 35.10 \\
    \midrule
    \citet{verma2023learning} & \multirow{3}{*}{SVHN} & 95.91 $\pm$ 0.08 & 64.66 & 25.96 & 9.38 \\
    \citet{MaoMohriZhong2024deferral} & & 95.78 $\pm$ 0.06 & 78.93 & 10.09 & 10.98 \\
    Ours & & \underline{96.08 $\pm$ 0.11} & 27.11 & 43.42 & 29.47 \\
    \midrule
    \citet{verma2023learning} & \multirow{3}{*}{Tiny ImageNet} & 50.61 $\pm$ 0.50 & 46.96 & 25.59 & 27.45 \\
    \citet{MaoMohriZhong2024deferral} & & 50.89 $\pm$ 0.49 & 45.56 & 37.88 & 16.56 \\
    Ours & & \underline{51.13 $\pm$ 0.56} & 21.68 & 32.60 & 45.72 \\
    \bottomrule
    \end{tabular}
    }
    \label{tab:comparison-single}
\end{table}

\begin{table}[t]
    \caption{Comparison of Our Method with Two-Stage Baselines.}
    \centering
    \resizebox{\columnwidth}{!}{
    \begin{tabular}{@{\hspace{0cm}}llllll@{\hspace{0cm}}}
    \toprule
    \multirow{3}{*}{Method} & \multirow{3}{*}{Dataset} & \multirow{3}{*}{SA (\%)} & \multicolumn{3}{c}{Ratio of Deferral (\%)} \\
    \cmidrule{4-6}
    & & & EXP 1 & EXP 2 & EXP 3\\
    \midrule
    \citet{MaoMohriMohriZhong2023two} & \multirow{3}{*}{CIFAR-10} & 92.92 $\pm$ 0.14  & 30.23 & 30.49 & 39.28 \\
    Ours & & \underline{93.34 $\pm$ 0.18} & 29.34 & 30.90 & 39.76 \\
    \midrule
    \citet{MaoMohriMohriZhong2023two} & \multirow{3}{*}{CIFAR-100} & 67.99 $\pm$ 0.19 & 32.44 & 34.42 & 33.14 \\
    Ours & & \underline{68.70 $\pm$ 0.23} & 31.34 & 29.89 & 38.77 \\
    \midrule
    \citet{MaoMohriMohriZhong2023two} & \multirow{3}{*}{SVHN} & 96.30 $\pm$ 0.06 & 42.38 & 29.68 & 27.94 \\
    Ours & & \underline{96.47 $\pm$ 0.08} & 42.52 & 29.70 & 27.78 \\
    \midrule
    \citet{MaoMohriMohriZhong2023two} & \multirow{3}{*}{Tiny ImageNet} & 45.81 $\pm$ 0.17 & 36.64 & 28.64 & 34.72 \\
    Ours & & \underline{46.62 $\pm$ 0.24} & 26.82 & 38.15 & 35.03\\
    \bottomrule
    \end{tabular}
    }
    \label{tab:comparison-two}
\end{table}

\begin{figure}[t]
\centering
\includegraphics[scale=0.5]{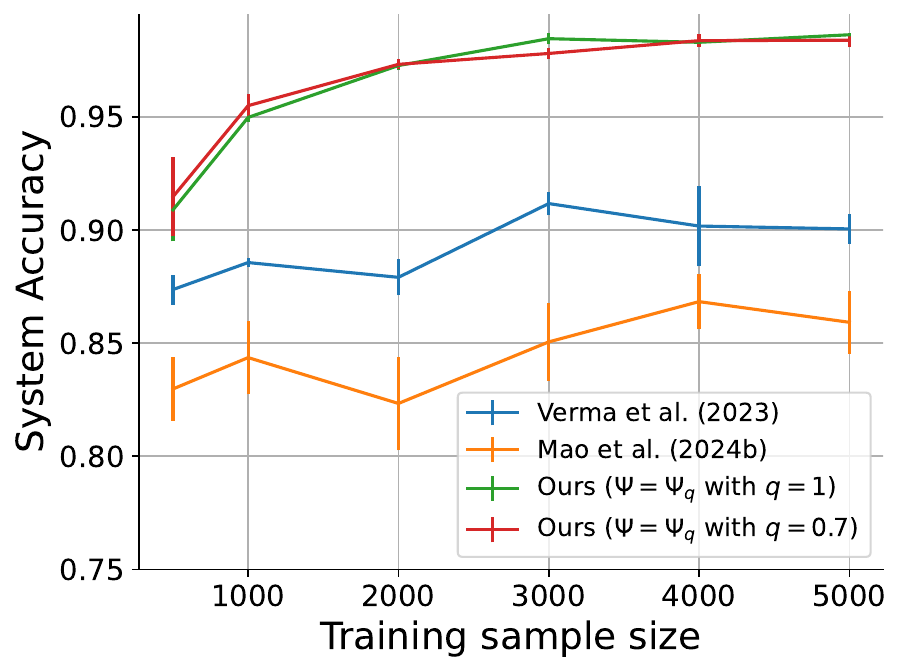}
\vskip -0.1in
\caption{System Accuracy vs. Training Sample Size.}
\label{fig:realizable}
\vskip -0.25in
\end{figure}

\textbf{Two-stage multiple-expert deferral}.  In the two-stage scenario, we compared our proposed surrogate loss $\sfL_{\Psi}$ \eqref{eq:two-stage-multiple-expert} with $\Psi(u) = -\log(u)$ with the baseline surrogate losses proposed in \citep{MaoMohriMohriZhong2023two}. While the baseline surrogate losses are Bayes-consistent, they are not realizable $\sH$-consistent with cost functions based on classification error: $c_j(x, y) = 1_{\expertexpert_j(x)
  \neq y}$. In our experiments, we used the multinomial logistic loss as the multi-class classification surrogate for the baseline surrogate losses. As in the single-stage scenario, we considered a setting where multiple experts are available, each with a clear domain of expertise. We used three experts, with two of them the same as those adopted in the single-stage setting. Additionally, we introduced a third expert that predicts from the remaining 40\% of classes and is accurate if the true label is within this range. As shown in Table~\ref{tab:comparison-two}, our method achieves comparable \emph{two-stage system accuracy (SA)}, defined as the average value of $\bracket*{1 -
\tdef(h, x, y)}$ on the test data, to the baselines. The different deferral ratios it achieves, compared to the baseline, demonstrate the effect of our proposed surrogate loss.


\textbf{Realizable multiple-expert deferral}. We also conducted an additional experiment in the realizable scenario, by extending the synthetic Mixture-of-Gaussians dataset from \citep{pmlr-v206-mozannar23a} to the multiple-expert setting. This extended dataset is realizable by linear functions, as there exists a randomly selected linear hypothesis $h^* \in \sH$ that achieves zero single-stage deferral loss,
$\sE_{\ldef}(h^*) = 0$. We set the true label to match the predictor's prediction when deferral does not occur according to $h^*$. Otherwise, the true label is generated uniformly at random. We used two experts whose predictions align with the true label only when they are chosen by $h^*$ to make the prediction.
For our surrogate loss $\sfL_{\Psi}$ \eqref{eq:def-sur}, we selected $\Psi = \Psi_q$ with $q  = 0.7$ and $q  = 1$. We compared these surrogate losses to those introduced in \citep{verma2023learning} and \citep{MaoMohriZhong2024deferral}. Figure~\ref{fig:realizable} shows system accuracy as a function of training samples on the realizable Mixture-of-Gaussians distribution, verifying our theoretical results that both choices of $\sfL_{\Psi}$ are realizable $\sH$-consistent, while the two baselines are not.

The behavior for lower values of $q$ is similar to that observed for $q = 0.7$ and $q = 1$. Note that for any $q > 0$, Theorem~\ref{thm:realizable} guarantees realizable 
$\sH$-consistency.  We chose 
$q = 0.7$ and $q = 1$ in Figure~\ref{fig:realizable}
 because they correspond to standard choices in prior work, $q = 0.7$
 is a commonly suggested value within the generalized cross-entropy family (e.g., \citep{zhang2018generalized}; \citep{MaoMohriZhong2024deferral}), and $q = 1$
 corresponds to the mean absolute error loss, as used in (\citep{ghosh2017robust}; \citep{MaoMohriZhong2024deferral}). In this realizable distribution, we observe that our proposed surrogate losses achieve close to $100\%$ system accuracy, while all other baselines fail to find a near-zero-error solution. This demonstrates that our surrogates are realizable  $\sH$-consistent, whereas the compared baselines are not.

Note that the goal of our experiments on non-realizable data is not to claim that our proposed surrogate losses outperform the baselines \citep{verma2023learning, MaoMohriZhong2024deferral, MaoMohriMohriZhong2023two} in such settings.  Rather, our results show that our surrogate losses match the performance of these state-of-the-art baselines in non-realizable settings  in terms of system accuracy, as both are supported by consistency bounds. The strength of our approach lies in the realizable setting, where our surrogate losses are guaranteed to be realizable $\sH$-consistent, unlike the baselines, as illustrated in Figure~\ref{fig:realizable}. 

\section{Future Work}

Our paper primarily focuses on theoretically principled surrogate losses for learning to defer with multiple experts, grounded in realizable 
$\sH$-consistency and 
$\sH$-consistency bounds, within the learning theory framework. While we have empirically shown that minimizing our proposed loss functions outperforms the baselines in realizable settings and matches the best-known results in synthetic non-realizable settings, as predicted by our theoretical analysis, we recognize the importance of further empirical exploration. Accordingly, we plan to dedicate future work to conducting a more extensive empirical analysis in real-world settings, particularly in important scenarios where expert domains overlap and the data is heterogeneous.

We also acknowledge the relevance of more realistic and challenging scenarios, such as those where expert predictions are unavailable during training. Extending our methods to these scenarios is a promising direction for future work. For example, it would be interesting to adapt our framework to handle previously unseen experts at test time, as studied by \citet{tailor2024learning} and to incorporate post-processing frameworks for learning to defer under constraints such as OOD detection and long-tail classification, as explored by \citet{Mohammad2024}.

Our deferral framework also offers promising applications in the context of large language models (LLMs). Given the significant resources required to retrain LLMs, a two-stage method is a practical and scalable solution \citep{MaoMohriMohriZhong2023two}. This does not require modifying LLMs or the loss function used to train them. Our proposed two-stage surrogate loss is particularly well-suited for use with pre-trained LLMs. In the presence of multiple LLMs (or experts), using our method, uncertain predictions can be deferred to more specialized or domain-specific pre-trained models, thereby both improving the reliability and accuracy of the overall system and improving efficiency.

\section{Conclusion}
\label{sec:conclusion}

We presented novel surrogate losses functions and algorithms for multiple-expert deferral,\ignore{learning to defer with multiple experts,} offering strong theoretical
guarantees for both single-\ignore{stage} and two-stage\ignore{deferral} scenarios. Our
results, including realizable $\sH$-consistency, $\sH$-consistency
bounds, and Bayes-consistency, extend to multiple-expert settings and
provide enhanced guarantees under low-noise assumptions. Our analysis\ignore{and proofs} can be leveraged to study and design algorithms for other
routing problems\ignore{and related scenarios}, for example when incorporating
constraints, such as limiting the frequency with which an expert can
be used.  Experimental results confirm the effectiveness of our
principled techniques.

\section*{Acknowledgements}

We thank the anonymous reviewers for their valuable feedback and constructive suggestions.

\section*{Impact Statement}

This paper presents work whose goal is to advance the field of Machine
Learning. There are many potential societal consequences of our work,
none which we feel must be specifically highlighted here.

\bibliography{rmdef}
\bibliographystyle{icml2025}

\newpage
\appendix
\onecolumn

\renewcommand{\contentsname}{Contents of Appendix}
\tableofcontents
\addtocontents{toc}{\protect\setcounter{tocdepth}{3}} 
\clearpage

\section{Related work}
\label{app:related-work}

Single-stage learning to defer, where a predictor and a deferral function are jointly trained, was pioneered by \citet*{CortesDeSalvoMohri2016,CortesDeSalvoMohri2016bis,
  CortesDeSalvoMohri2023} and further developed in subsequent work.
 This includes studies on abstention with constant costs \citep{charoenphakdee2021classification,caogeneralizing,li2023no,cheng2023regression,MaoMohriZhong2024score,MaoMohriZhong2024predictor,MohriAndorChoiCollinsMaoZhong2024learning,narasimhanlearning} and deferral with instance- and label-dependent costs \citep{mozannar2020consistent,verma2022calibrated,pmlr-v206-mozannar23a,verma2023learning,cao2023defense,MaoMohriZhong2024deferral,weiexploiting,maorealizable}.  In this paradigm, the deferral function decides how input instances should be optimally assigned to the most suitable expert from a diverse set. This method has demonstrated advantages over \emph{confidence-based} approaches, which rely solely on the predictor's output magnitude \citep{Chow1957,chow1970optimum,bartlett2008classification,
  yuan2010classification,WegkampYuan2011, ramaswamy2018consistent,
  NiCHS19,jitkrittum2023does}; and \emph{selective classification} methods, which use a fixed selection rate and cannot incorporate expert-modeled cost functions \citep{el2010foundations,el2012active,wiener2011agnostic,
  wiener2012pointwise,wiener2015agnostic,geifman2017selective,
  geifman2019selectivenet,acar2020budget,gangrade2021selective,
  zaoui2020regression,jiang2020risk,shah2022selective}.

The \emph{learning to defer} (L2D) problem, which incorporates human expert decisions into the cost function, was introduced by \citet{madras2018learning} and further studied by \citet{raghu2019algorithmic,wilder2021learning,pradier2021preferential,mozannar2020consistent,verma2022calibrated,charusaie2022sample,pmlr-v206-mozannar23a,MaoMohriMohriZhong2023two,MaoMohriZhong2024deferral,maorealizable}. It
is typically formulated using a deferral loss function that
incorporates instance-specific costs associated with each
expert. Directing optimizing this loss function is intractable for the
hypothesis sets commonly used in applications. Thus, learning-to-defer
algorithms rely on optimizing a surrogate loss function instead, that
serves as a proxy for the original target loss function. Yet, what
guarantees can we rely on when optimizing such surrogate loss
functions?

This question, which involves analyzing the consistency guarantees of
surrogate losses with respect to the deferral loss, has been studied
under two main scenarios \citep{mao2025theory}: the \emph{single-stage} scenario, where a
predictor and a deferral function are jointly learned
\citep{mozannar2020consistent,verma2022calibrated,charusaie2022sample,pmlr-v206-mozannar23a,MaoMohriZhong2024deferral}, and a
\emph{two-stage} scenario, where the predictor is pre-trained and
fixed as an expert, and while only the deferral function is
subsequently learned \citep{MaoMohriMohriZhong2023two}.

In particular, \citet{mozannar2020consistent},
\citet{verma2022calibrated}, and \citet{charusaie2022sample} proposed
surrogate loss functions for the single-stage \emph{single-expert}
case by generalizing the cross-entropy loss, the one-versus-all loss,
and, more generally, a broad family of surrogate losses for
multi-class classification in the context of learning to defer.
However, \citet{pmlr-v206-mozannar23a} later showed that these
surrogate loss functions do not satisfy \emph{realizable
$\sH$-consistency}. They suggested an alternative surrogate loss that
achieves this property but left open the question of whether it was
also Bayes-consistent. This was later resolved by
\citet{maorealizable}, who introduced a broader family of surrogate
losses that simultaneously achieve Bayes-consistency, realizable
$\sH$-consistency, and \emph{$\sH$-consistency bounds}.  For the single-stage \emph{multiple-expert} \citep{hemmer2022forming,keswani2021towards,kerrigan2021combining,straitouri2022provably,benz2022counterfactual} case,
\citet{verma2023learning} were the first to extend the surrogate loss proposed in \citep{verma2022calibrated} and \citep{mozannar2020consistent} to accommodate multiple experts. Building on this, \citet{MaoMohriZhong2024deferral} further generalized the surrogate loss from \citep{mozannar2020consistent}, introducing a broader family of surrogate losses tailored to the multiple-expert case. Furthermore, \citet{MaoMohriZhong2024deferral} proved that
their surrogate losses benefit from $\sH$-consistency bounds in the
multiple-expert case, thereby ensuring Bayes-consistency. However,
these loss functions are not realizable $\sH$-consistent even in the
single-expert case, as they are extensions of the earlier loss
functions. In the \emph{two-stage} scenario, \citet{MaoMohriMohriZhong2023two}
introduced surrogate losses that are Bayes-consistent and realizable
$\sH$-consistent for constant costs. However, their realizable
$\sH$-consistency does not extend to cost functions of interest, which
are based on classification error. Other extensions include single-stage and two-stage multiple-expert deferral in regression \citep{mao2024regression}, budgeted deferral for minimizing expert query costs during training \citep{desalvo2025budgeted}, single-stage learning to defer to a population \citep{tailor2024learning}, and two-stage multi-expert deferral in multi-task learning \citep{montreuil2025two}, adversarial robustness \citep{montreuil2025adversarial}, query optimization and allocation \citep{montreuil2024learning}, and top-$k$ prediction \citep{montreuil2025ask,montreuil2025one}.

Further research has explored post-hoc methods.  \citet{okati2021differentiable} proposed an alternative optimization approach for the predictor and rejector, while \citet{narasimhanpost} offered corrections for underfitting surrogate losses \citep{liu2024mitigating}. \citet{Mohammad2024} developed a unified post-processing framework for multi-objective L2D based on a generalized Neyman-Pearson Lemma \citep{neyman1933ix}. Additionally, \citet{cao2023defense} introduced an asymmetric softmax function for deriving valid probability estimates in L2D. \citet{weiexploiting} explored dependent Bayes optimality, which elucidates the dependencies involved in deferral decisions within the L2D framework. The L2D framework and its variants have been applied in various domains, including regression, multi-task learning, adversarial robustness, top-$k$ prediction, query optimization and allocation, human-in-the-loop systems and reinforcement learning \citep{de2020regression,de2021classification,zhao2021directing,mozannar2022teaching,straitouri2021reinforcement,joshi2021pre,gao2021human,hemmer2023learning,chen2024learning,palomba2024causal}.

\section{Single-stage multiple-expert deferral: proofs.}

\subsection{Proof of Lemma~\ref{lemma:ldef}}
\label{app:ldef}

\Ldef*
\begin{proof}
Observe that for any $(x, y) \in \sX \times \sY$, we have 
$1_{\hh(x) = n + j} = 1_{\hh(x) \neq y} 1_{\hh(x) = n + j}$, since
$\hh(x) = n + j$ implies $\hh(x) \neq y$. 
Thus, using additionally $1_{\hh(x)\in [n]} = \prod_{j = 1}^{\num} 1_{\hh(x) \neq n + j}$, 
the deferral loss can be rewritten as follows 
for all $(x, y) \in \sX \times \sY$:
\begin{align*}
\ldef(h, x, y)
& = 1_{\hh(x)\neq y} 1_{\hh(x)\in [n]}
+ \sum_{j = 1}^{\num} c_j(x, y) 1_{\hh(x) = n + j}\\
& = 1_{\hh(x) \neq y} \prod_{j = 1}^{\num} 1_{\hh(x) \neq n + j} + \sum_{j = 1}^{\num} c_j(x, y) 1_{\hh(x) \neq y}  1_{\hh(x) = n + j} \\
& = 1_{\hh(x) \neq y} \prod_{j = 1}^{\num} 1_{\hh(x) \neq n + j} + \sum_{j = 1}^{\num} c_j(x, y) 1_{\hh(x) \neq y} (1 - 1_{\hh(x) \neq n + j}) \\
& = \sum_{j = 1}^{\num} c_j(x, y) 1_{\hh(x) \neq y} + 1_{\hh(x) \neq y} \prod_{j = 1}^{\num} 1_{\hh(x) \neq n + j} - \sum_{j = 1}^{\num} c_j(x, y) 1_{\hh(x) \neq y} 1_{\hh(x) \neq n + j}  \\
& = \sum_{j = 1}^{\num} c_j(x, y) 1_{\hh(x) \neq y}  + \sum_{j = 1}^{\num} \bracket[\big]{1 - c_j(x, y)} 1_{\hh(x) \neq y} 1_{\hh(x) \neq n + j} + 1_{\hh(x) \neq y}\bracket*{\prod_{j = 1}^{\num} 1_{\hh(x) \neq n + j} - \sum_{j = 1}^{\num} 1_{\hh(x) \neq n + j}}
\tag{add and subtract $\sum_{j = 1}^{\num} 1_{\hh(x) \neq y} 1_{\hh(x) \neq n + j}$} \\
& = \sum_{j = 1}^{\num} c_j(x, y) 1_{\hh(x) \neq y}  + \sum_{j = 1}^{\num} \bracket[\big]{1 - c_j(x, y)}  1_{\hh(x) \neq y} 1_{\hh(x) \neq n + j} + 1_{\hh(x) \neq y}(1 - \num)\\
\tag{$\prod_{j = 1}^{\num} 1_{\hh(x) \neq n + j} = \begin{cases}
1 & \hh(x) \in [n]\\
0 & \text{otherwise}
\end{cases}$ and $\sum_{j = 1}^{\num} 1_{\hh(x) \neq n + j} = \begin{cases}
\num & \hh(x) \in [n]\\
\num - 1 & \text{otherwise}
\end{cases}$}\\
& = \bracket*{\sum_{j = 1}^{\num} c_j(x, y) + 1 - \num} 1_{\hh(x) \neq y} + \sum_{j = 1}^{\num} \bracket[\big]{1 - c_j(x, y)} 1_{\hh(x) \neq y} 1_{\hh(x) \neq n + j}.
\end{align*}  
This completes the proof.
\end{proof}
\subsection{Proof of Theorem~\ref{thm:realizable}}

\Realizable*
\begin{proof}
Let $h^*$ be a best-in-class predictor such that $\sE_{\ldef}(h^*) = 0$. Note that 
\begin{align*}
\sE^*_{\sfL}(\sH) 
& \leq \lim_{\alpha \to \plus \infty} \sE_{\sfL}(\alpha h^*)\\
& = \lim_{\alpha \to \plus \infty}  \E \bracket*{\sfL(\alpha h^*, x, y) \mid \hh^*(x) \in [n]} \mathbb{P}(\hh^*(x) \in [n])\\
& \qquad + \sum_{k = 1}^{\num}  \lim_{\alpha \to \plus \infty} \E \bracket*{\sfL(\alpha h^*, x, y) \mid \hh^*(x) = n + k}  \mathbb{P}(\hh^*(x) = n + k)   
\end{align*}
If $\hh^*(x) \in [n]$, then, we must have $\hh^*(x) = y$ for any $(x, y) \in \sX \times \sY$. In this case, 
\begin{align*}
& \lim_{\alpha \to \plus \infty}  \E \bracket*{\sfL(\alpha h^*, x, y) \mid \hh^*(x) \in [n]}\\
& \leq \E \bracket*{ \bracket*{\sum_{j = 1}^{\num} c_j(x, y) + 1 - \num}  \lim_{\alpha \to \plus \infty}  \Psi \paren*{\frac{e^{\alpha h^*(x, y)}}{\sum_{y' \in \ov \sY} e^{\alpha h^*(x, y')}}} }\\
& \qquad + \E \bracket*{ \sum_{j = 1}^{\num} \bracket[\big]{1 - c_j(x, y)} \lim_{\alpha \to \plus \infty} \Psi \paren*{\frac{e^{\alpha h^*(x, y)} + e^{\alpha h^*(x, n + j)}}{\sum_{y' \in \ov \sY} e^{\alpha h^*(x, y')}}} }\\
& = \E \bracket*{0} +  \E \bracket*{0}\\
& = 0.
\end{align*}
If $\hh^*(x) = n + k$, then, we must have $c_{k}(x, y) = 0$ for any $(x, y) \in \sX \times \sY$. In this case, 
\begin{align*}
& \lim_{\alpha \to \plus \infty} \E \bracket*{\sfL(\alpha h^*, x, y) \mid \hh^*(x) = n + k} \\
& \leq \E \bracket*{ \bracket*{\sum_{j = 1}^{\num} c_j(x, y) + 1 - \num} \lim_{\alpha \to \plus \infty}  \Psi \paren*{\frac{e^{\alpha h^*(x, y)}}{\sum_{y' \in \ov \sY} e^{\alpha h^*(x, y')}}} }\\
& \qquad + \E \bracket*{\sum_{j = 1}^{\num} \bracket[\big]{1 - c_j(x, y)} \lim_{\alpha \to \plus \infty} \Psi \paren*{\frac{e^{\alpha h^*(x, y)} + e^{\alpha h^*(x, n + j)}}{\sum_{y' \in \ov \sY} e^{\alpha h^*(x, y')}}} }\\
& \leq \E \bracket*{\sum_{j \neq k} c_j(x, y) + 1 - \num} + \E \bracket*{ \sum_{j \neq k} \bracket[\big]{1 - c_j(x, y)} } \\
& = 0.
\end{align*}
Therefore, we have $\sE^*_{\sfL}(\sH)  = 0$.
\end{proof}
\label{app:realizable}

\subsection{Proof of \texorpdfstring{$\sH$}{H}-consistency bounds}

\subsubsection{Simplified notation and proof of Lemma~\ref{lemma:def}}
\label{app:def}
Let $p(y | x) = \mathbb{P} \paren*{Y = y \mid X = x}$ be the conditional probability of $Y = y$ given $X = x$.
Let $y_{\max} = \argmax_{y \in \sY} p(y | x)$, $p_{y_{\max}} = p(y_{\max} | x)$ and $p_{n + j} = \sum_{y \in \sY} p(y | x) \paren*{1 - c_j(x, y)}$, $j \in [\num]$. We denote by $p_{\hh(x)} = \begin{cases}
p(\hh(x) | x) & \hh(x) \in [n] \\
p_{n + j} & \hh(x) = n + j.
\end{cases}$ We first characterize the conditional error and conditional regret of the deferral loss as follows.

\Def*
\begin{proof}
By definition,
\begin{align*}
\sC_{\ldef}(h, x) &= \sum_{y \in \sY} p(y | x) \ldef(h, x, y)\\
& = \sum_{y \in \sY} p(y | x) 1_{\hh(x)\neq y} 1_{\hh(x)\in [n]}
+ \sum_{j = 1}^{\num} \sum_{y \in \sY} p(y | x) c_j(x, y) 1_{\hh(x) = n + j}\\
& = (1 - p(\hh(x) | x)1_{\hh(x) \in [n]} +  \sum_{j = 1}^{\num} \paren*{1 - p(n + j | x)} 1_{\hh(x) = n + j}\\
& = 1 - p_{\hh(x)}.
\end{align*}
Therefore, the best-in-class conditional error and conditional regret can be expressed as follows:
\begin{equation*}
\sC^*_{\ldef}\paren*{\sH, x} = 1 - \max \curl*{p_{y_{\max}}, \max_{j \in [\num]} p_{n + j}}, \quad \Delta \sC^*_{\ldef , \sH}\paren*{h, x} = \max \curl*{p_{y_{\max}}, \max_{j \in [\num]} p_{n + j}} - p_{\hh(x)}.
\end{equation*}
\end{proof}

For convenience, in the following sections, we will omit the dependency on $x$ in the notation: $h_y = h(x, y)$ for any $y \in \ov \sY$, and $p_y = p(y | x)$ for any $y \in \sY$. We let $h_{\max} = \argmax_{y
  \in \sY} h_y$ and $q_{j, y} = p(y | x) c_j(x, y)$ for any $y \in \sY$.
We also let $y^* = \argmax_{j \in [\num]} p_{n + j}$ and $h^* = \argmax_{j \in [\num]} h_{n + j}$.

\subsubsection{Proof of Theorem~\ref{thm:mae}}
\label{app:mae}

\MAE*
\begin{proof}
The conditional error of the surrogate loss can be expressed as follows:
\begin{align*}
& \sC_{\sfL_{\rm{mae}}}(h, x)\\
& = \sum_{y \in \sY} p(y | x) \sfL_{\rm{mae}}(h, x, y)\\
&= \sum_{j = 1}^{\num} \bracket*{\sum_{y \in \sY} p(y | x)  c_j(x, y) \paren*{1 - \frac{e^{h(x, y)}}{\sum_{y' \in \ov \sY} e^{h(x, y')}}} +  \sum_{y \in \sY} p(y | x) \paren*{1 - c_j(x, y)}  \paren*{1 - \frac{e^{h(x, y)} + e^{h(x, n + j)}}{\sum_{y' \in \ov \sY} e^{h(x, y')}}}}\\
& \qquad + \paren*{1 - \num} \sum_{y \in \sY} p(y | x)\paren*{1  - \frac{e^{h(x, y)}}{\sum_{y' \in \ov \sY} e^{h(x, y')}}}\\
&= \sum_{j = 1}^{\num} \bracket*{ \sum_{y \in \sY} q_{j, y} \paren*{1 - \frac{e^{h_y}}{\sum_{y' \in \ov \sY} e^{h_{y'}}}} + \sum_{y \in \sY} \paren*{p_y - q_{j, y}} \paren*{1 - \frac{e^{h_y} + e^{h_{n + j}}}{\sum_{y' \in \ov \sY} e^{h_{y'}}}} } + \paren*{1 - \num} \sum_{y \in \sY} p_y \paren*{1 - \frac{e^{h_y}}{\sum_{y' \in \ov \sY} e^{h_{y'}}}}.
\end{align*}
By Lemma~\ref{lemma:def}, we can write the conditional regret of the deferral loss as
\begin{equation*}
\Delta \sC_{\ldef, \sH}(h, x) = \max \curl*{p_{y_{\max}}, \max_{j \in [\num]} p_{n + j}} - p_{\hh(x)}.
\end{equation*}
Next, we show that the surrogate conditional regret can be lower bounded by the target one:
\begin{equation}
\label{eq:cond-bound-mae}
\Delta \sC_{\sfL_{\rm{mae}}, \sH}(h, x) = \sC_{\sfL_{\rm{mae}}}(h) - \sC^*_{\sfL_{\rm{mae}}}(\sH) \geq \frac{1}{n + \num}
\paren*{\Delta \sC_{\ldef, \sH}(h, x)}.
\end{equation}
We first prove that for any hypothesis $h$ and $x \in \sX$, if $y_{\max} \neq h_{\max}$, then the conditional error of $h$ can be lower bounded by that of $\ov h$, which satisfies that
$\ov h(x, y) = \begin{cases}
h_{h_{\max}} &  y = y_{\max}\\
h_{y_{\max}} & y = h_{\max}\\
h_{y} & \text{otherwise}.
\end{cases}
$. Indeed,
\begin{align*}
\sC_{\sfL_{\rm{mae}}}(h) - \sC_{\sfL_{\rm{mae}}}(\ov h) 
& = \sum_{j = 1}^{\num} \bracket[\bigg]{ q_{j, y_{\max}} \paren*{1 - \frac{e^{h_{y_{\max}}}}{\sum_{y' \in \ov \sY} e^{h_{y'}}}} + \paren*{p_{y_{\max}} - q_{j, y_{\max}}} \paren*{1 - \frac{e^{h_{y_{\max}}} + e^{h_{n + j}}}{\sum_{y' \in \ov \sY} e^{h_{y'}}}} \\
& \qquad +q_{j, h_{\max}} \paren*{1 - \frac{e^{h_{h_{\max}}}}{\sum_{y' \in \ov \sY} e^{h_{y'}}}} + \paren*{p_{h_{\max}} - q_{j, h_{\max}}} \paren*{1 - \frac{e^{h_{h_{\max}}} + e^{h_{n + j}}}{\sum_{y' \in \ov \sY} e^{h_{y'}}}}\\
& \qquad - q_{j, y_{\max}} \paren*{1 - \frac{e^{h_{h_{\max}}}}{\sum_{y' \in \ov \sY} e^{h_{y'}}}} - \paren*{p_{y_{\max}} - q_{j, y_{\max}}} \paren*{1 - \frac{e^{h_{h_{\max}}} + e^{h_{n + j}}}{\sum_{y' \in \ov \sY} e^{h_{y'}}}} \\
& \qquad -q_{j, h_{\max}} \paren*{1 - \frac{e^{h_{y_{\max}}}}{\sum_{y' \in \ov \sY} e^{h_{y'}}}} - \paren*{p_{h_{\max}} - q_{j, h_{\max}}} \paren*{1 - \frac{e^{h_{y_{\max}}} + e^{h_{n + j}}}{\sum_{y' \in \ov \sY} e^{h_{y'}}}} }\\
& \quad \qquad + \paren*{1 - \num} p_{y_{\max}} \paren*{1 - \frac{e^{y_{\max}}}{\sum_{y' \in \ov \sY} e^{h_{y'}}}} + \paren*{1 - \num} p_{h_{\max}} \paren*{1 - \frac{e^{h_{\max}}}{\sum_{y' \in \ov \sY} e^{h_{y'}}}}\\
& \quad \qquad - \paren*{1 - \num} p_{y_{\max}} \paren*{1 - \frac{e^{h_{\max}}}{\sum_{y' \in \ov \sY} e^{h_{y'}}}} - \paren*{1 - \num} p_{h_{\max}} \paren*{1 - \frac{e^{y_{\max}}}{\sum_{y' \in \ov \sY} e^{h_{y'}}}}\\
& =\frac{1}{\sum_{y' \in \ov \sY} e^{h_{y'}}} \paren*{ p_{y_{\max}} - p_{h_{\max}}} \paren*{e^{h_{h_{\max}}} - e^{h_{y_{\max}}}}
\geq 0.
\end{align*}
We then prove that for any hypothesis $h$ and $x \in \sX$, if $y^* \neq h^*$, then the conditional error of $h$ can be lower bounded by that of $\wt h$, which satisfies that
$\wt h(x, y) = \begin{cases}
h_{n + h^*} &  y = n + y^*\\
h_{n + y^*} & y = n + h^*\\
h_{y} & \text{otherwise}.
\end{cases}
$. Indeed,
\begin{align*}
\sC_{\sfL_{\rm{mae}}}(h) - \sC_{\sfL_{\rm{mae}}}(\wt h)
& =  p_{n + y^*} \paren*{1 - \frac{e^{h_{n + y^*}}}{\sum_{y' \in \ov \sY} e^{h_{y'}}}} + p_{n + h^*} \paren*{1 - \frac{e^{h_{n + h^*}}}{\sum_{y' \in \ov \sY} e^{h_{y'}}}}\\
& \qquad - p_{n + y^*} \paren*{1 - \frac{e^{h_{n + h^*}}}{\sum_{y' \in \ov \sY} e^{h_{y'}}}} - p_{n + h^*} \paren*{1 - \frac{e^{h_{n + y^*}}}{\sum_{y' \in \ov \sY} e^{h_{y'}}}}\\
& = \frac{1}{\sum_{y' \in \ov \sY} e^{h_{y'}}} \paren*{ p_{n + y^*} - p_{n + h^*}} \paren*{e^{h_{n + y^*}} - e^{h_{n + y^*}}}
\geq 0.
\end{align*}
Therefore, we only need to lower bound the conditional regret of hypothesis $h$ satisfying both $y_{\max} = h_{\max}$ and $y^* = h^*$. Note that if $\paren*{p_{y_{\max}} - p_{n + y^*}} \paren*{h_{y_{\max}} - h_{n + y^*}} > 0$, then, $\Delta \sC_{\ldef, \sH}(h, x) = 0$.
Next, we will analyze case by case.
\begin{enumerate}
    \item \textbf{Case I}: If $p_{y_{\max}} -  p_{n + y^*} \geq 0$ and $h_{y_{\max}} - h_{n + y^*} \leq 0$: we define a new hypothesis $h_{\mu}$ such that 
    $h_{\mu}(x, y) = \begin{cases}
\log \paren*{e^{h_{n + y^*}} + \mu} &  y = y_{\max}\\
\log \paren*{e^{h_{y_{\max}}} - \mu} & y = n + y^*\\
h(x, y) & \text{otherwise}.
\end{cases}
$, where $e^{h_{y_{\max}}} \geq \mu \geq 0$. Then, we can  lower bound the conditional regret of $\sfL_{\rm{mae}}$ by using $\Delta \sC_{\sfL_{\rm{mae}}, \sH}(h, x) \geq \sC_{\sfL_{\rm{mae}}}(h) - \sC^*_{\sfL_{\rm{mae}}}(h_{\mu})$ for any  $e^{h_{y_{\max}}} \geq \mu \geq 0$:
\begin{align*}
& \Delta \sC_{\sfL_{\rm{mae}}, \sH}(h, x) \\
& \geq \sup_{e^{h_{y_{\max}}} \geq \mu \geq 0} \paren*{\sC_{\sfL_{\rm{mae}}}(h) - \sC^*_{\sfL_{\rm{mae}}}(h_{\mu})}\\
& \geq \sup_{e^{h_{y_{\max}}} \geq \mu \geq 0}\paren[\Bigg]{ \sum_{j \neq y^*}^{\num}\bracket[\Bigg]{   q_{j, y_{\max}} \paren*{1 - \frac{e^{h_{y_{\max}}}}{\sum_{y' \in \ov \sY} e^{h_{y'}}}} + \paren*{p_{y_{\max}} - q_{j, y_{\max}}} \paren*{1 - \frac{e^{h_{y_{\max}}} + e^{h_{n + j}}}{\sum_{y' \in \ov \sY} e^{h_{y'}}}}\\
& \qquad - q_{j, y_{\max}} \paren*{1 - \frac{e^{h_{n + y^*}} + \mu}{\sum_{y' \in \ov \sY} e^{h_{y'}}}} - \paren*{p_{y_{\max}} - q_{j, y_{\max}}} \paren*{1 - \frac{e^{h_{n + y^*}} + \mu + e^{h_{n + j}}}{\sum_{y' \in \ov \sY} e^{h_{y'}}}}} \\
& \qquad +    q_{y^*, y_{\max}} \paren*{1 - \frac{e^{h_{y_{\max}}}}{\sum_{y' \in \ov \sY} e^{h_{y'}}}} +  \paren*{p_{y_{\max}} - q_{y^*, y_{\max}}} \paren*{1 - \frac{e^{h_{y_{\max}}} + e^{h_{n + y^*}}}{\sum_{y' \in \ov \sY} e^{h_{y'}}}}\\
& \qquad - q_{y^*, y_{\max}} \paren*{1 - \frac{e^{h_{n + y^*}} + \mu}{\sum_{y' \in \ov \sY} e^{h_{y'}}}} - \paren*{p_{y_{\max}} - q_{y^*, y_{\max}}} \paren*{1 - \frac{e^{h_{n + y^*}} + e^{h_{y_{\max}}}}{\sum_{y' \in \ov \sY} e^{h_{y'}}}} \\
& \qquad + \sum_{y \neq y_{\max}} \paren*{p_{y} - q_{y^*, y}} \paren*{1 - \frac{e^{h_{y}} + e^{h_{n + y^*}}}{\sum_{y' \in \ov \sY} e^{h_{y'}}}} - \sum_{y \neq y_{\max}} \paren*{p_{y} - q_{y^*, y}} \paren*{1 - \frac{e^{h_{y}} + e^{h_{y_{\max}}} - \mu}{\sum_{y' \in \ov \sY} e^{h_{y'}}}} \\ 
& \qquad + (1 - \num) \frac{p_{y_{\max}}}{\sum_{y' \in \ov \sY} e^{h_{y'}}} \paren*{e^{h_{n + y^*}} + \mu - e^{h_{y_{\max}}}} } \\
& = \frac{1}{\sum_{y' \in \ov \sY} e^{h_{y'}}} \sup_{e^{h_{y_{\max}}} \geq \mu \geq 0} \paren*{\bracket*{ q_{y^*, y_{\max}} - \sum_{y \neq y_{\max}} \paren*{p_{y} - q_{y^*, y}} } \paren*{e^{h_{n + y^*}} + \mu - e^{h_{y_{\max}}}}}\\
& = \paren*{p_{y_{\max}} - p_{n + y^*}} \frac{e^{h_{n + y^*}}}{\sum_{y' \in \ov \sY} e^{h_{y'}}} \tag{$\mu = e^{h_{y_{\max}}}$ achieves the maximum}\\
&\geq \frac{1}{n + \num} \paren*{p_{y_{\max}} - p_{n + y^*}}
\tag{by the assumption $ h_{n + y^*} = h_{n + h^*} \geq h_{y_{\max}} = h_{h_{\max}}$}\\
& = \frac{1}{n + \num} \paren*{\Delta \sC_{\ldef, \sH}(h, x)} \tag{by the assumption $p_{y_{\max}} - p_{n + y^*} \geq 0$ and $h_{y_{\max}} - h_{n + y^*} \leq 0$}
\end{align*}
\item \textbf{Case II}: If $p_{y_{\max}} - p_{n + y^*} \leq 0$ and $h_{y_{\max}} - h_{n + y^*} \geq 0$: we define a new hypothesis $h_{\mu}$ such that 
    $h_{\mu}(x, y) = \begin{cases}
\log \paren*{e^{h_{n + y^*}} - \mu} &  y = y_{\max}\\
\log \paren*{e^{h_{y_{\max}}} + \mu} & y = n + y^*\\
h(x, y) & \text{otherwise}.
\end{cases}
$, where $e^{h_{n + y^*}} \geq \mu \geq 0$. Then, we can  lower bound the conditional regret of $\sfL_{\rm{mae}}$ by using $\Delta \sC_{\sfL_{\rm{mae}}, \sH}(h, x) \geq \sC_{\sfL_{\rm{mae}}}(h) - \sC^*_{\sfL_{\rm{mae}}}(h_{\mu})$ for any $e^{h_{n + y^*}} \geq \mu \geq 0$:
\begin{align*}
& \Delta \sC_{\sfL_{\rm{mae}}, \sH}(h, x) \\
& \geq \sup_{e^{h_{n + y^*}} \geq \mu \geq 0} \paren*{\sC_{\sfL_{\rm{mae}}}(h) - \sC^*_{\sfL_{\rm{mae}}}(h_{\mu})}\\
& \geq \sup_{e^{h_{n + y^*}} \geq \mu \geq 0}\paren[\Bigg]{ \sum_{j \neq y^*}^{\num}\bracket[\Bigg]{   q_{j, y_{\max}} \paren*{1 - \frac{e^{h_{y_{\max}}}}{\sum_{y' \in \ov \sY} e^{h_{y'}}}} + \paren*{p_{y_{\max}} - q_{j, y_{\max}}} \paren*{1 - \frac{e^{h_{y_{\max}}} + e^{h_{n + j}}}{\sum_{y' \in \ov \sY} e^{h_{y'}}}}\\
& \qquad - q_{j, y_{\max}} \paren*{1 - \frac{e^{h_{n + y^*}} - \mu}{\sum_{y' \in \ov \sY} e^{h_{y'}}}} - \paren*{p_{y_{\max}} - q_{j, y_{\max}}} \paren*{1 - \frac{e^{h_{n + y^*}} - \mu + e^{h_{n + j}}}{\sum_{y' \in \ov \sY} e^{h_{y'}}}}} \\
& \qquad +    q_{y^*, y_{\max}} \paren*{1 - \frac{e^{h_{y_{\max}}}}{\sum_{y' \in \ov \sY} e^{h_{y'}}}} +  \paren*{p_{y_{\max}} - q_{y^*, y_{\max}}} \paren*{1 - \frac{e^{h_{y_{\max}}} + e^{h_{n + y^*}}}{\sum_{y' \in \ov \sY} e^{h_{y'}}}}\\
& \qquad - q_{y^*, y_{\max}} \paren*{1 - \frac{e^{h_{n + y^*}} - \mu}{\sum_{y' \in \ov \sY} e^{h_{y'}}}} - \paren*{p_{y_{\max}} - q_{y^*, y_{\max}}} \paren*{1 - \frac{e^{h_{n + y^*}} + e^{h_{y_{\max}}}}{\sum_{y' \in \ov \sY} e^{h_{y'}}}} \\
& \qquad + \sum_{y \neq y_{\max}} \paren*{p_{y} - q_{y^*, y}} \paren*{1 - \frac{e^{h_{y}} + e^{h_{n + y^*}}}{\sum_{y' \in \ov \sY} e^{h_{y'}}}} - \sum_{y \neq y_{\max}} \paren*{p_{y} - q_{y^*, y}} \paren*{1 - \frac{e^{h_{y}} + e^{h_{y_{\max}}} + \mu}{\sum_{y' \in \ov \sY} e^{h_{y'}}}} \\ 
& \qquad + (1 - \num) \frac{p_{y_{\max}}}{\sum_{y' \in \ov \sY} e^{h_{y'}}} \paren*{e^{h_{n + y^*}} - \mu - e^{h_{y_{\max}}}} } \\
& = \frac{1}{\sum_{y' \in \ov \sY} e^{h_{y'}}} \sup_{e^{h_{y_{\max}}} \geq \mu \geq 0} \paren*{\bracket*{ \sum_{y \neq y_{\max}} \paren*{p_{y} - q_{y^*, y}} - q_{y^*, y_{\max}} } \paren*{ e^{h_{y_{\max}}} + \mu - e^{h_{n + y^*}} }}\\
& = \paren*{p_{n + y^*} - p_{y_{\max}} } \frac{e^{h_{y_{\max}}}}{\sum_{y' \in \ov \sY} e^{h_{y'}}} \tag{$\mu = e^{h_{n + y^*}}$ achieves the maximum}\\
&\geq \frac{1}{n + \num} \paren*{p_{n + y^*} - p_{y_{\max}} }
\tag{by the assumption $ h_{y_{\max}} = h_{h_{\max}} \geq h_{n + y^*} = h_{n + h^*}$}\\
& = \frac{1}{n + \num} \paren*{\Delta \sC_{\ldef, \sH}(h, x)} \tag{by the assumption $p_{y_{\max}} - p_{n + y^*} \leq 0$  and $h_{y_{\max}} - h_{n + y^*} \geq 0$}
\end{align*}
\end{enumerate}
This proves the inequality \eqref{eq:cond-bound-mae}. By taking the expectation on both sides of \eqref{eq:cond-bound-mae}, we complete the proof. 
\end{proof}

\section{Two-stage multiple-expert deferral: proofs.}

\subsection{Proof of Theorem~\ref{thm:realizable-two}}
\label{app:realizable-two}

\RealizableTwo*
\begin{proof}
Let $r^*$ be a best-in-class predictor such that $\sE_{\tdef}(r^*) = 0$. Note that 
\begin{align*}
\sE^*_{\sfL_{\Phi}}(\sR) 
& \leq \lim_{\alpha \to \plus \infty} \sE_{\sfL_{\Phi}}(\alpha r^*)\\
& = \lim_{\alpha \to \plus \infty}  \E \bracket*{\sfL_{\Phi}(\alpha r^*, x, y) \mid \rr^*(x) = 1} \mathbb{P}(\rr^*(x) = 1)\\
& \qquad + \E \bracket*{\sfL_{\Phi}(\alpha r^*, x, y) \mid \rr^*(x) = 2}  \mathbb{P}(\rr^*(x) = 2).
\end{align*}
If $\rr^*(x) = 1$, then, we must have $c_1(x, y) = 0$ for any $(x, y) \in \sX \times \sY$. In this case,
\begin{align*}
& \lim_{\alpha \to \plus \infty}  \E \bracket*{\sfL_{\Phi}(\alpha r^*, x, y) \mid \rr^*(x) = 1}\\
& \leq \E \bracket*{ c_2(x, y) \lim_{\alpha \to \plus \infty}  \Phi\paren*{\alpha \paren*{r(x, 1) - r(x, 2)} } }\\
& = \E \bracket*{0}\\
& = 0.
\end{align*}
If $\rr^*(x) = 2$, then, we must have $c_2(x, y) = 0$ for any $(x, y) \in \sX \times \sY$. In this case, 
\begin{align*}
& \lim_{\alpha \to \plus \infty} \E \bracket*{\sfL_{\Phi}(\alpha r^*, x, y) \mid \rr^*(x) = 2} \\
& \leq \E \bracket*{ c_1(x, y) \lim_{\alpha \to \plus \infty}  \Phi\paren*{\alpha \paren*{r(x, 2) - r(x, 1)} } }\\
& = \E \bracket*{0}\\
& = 0.
\end{align*}
Therefore, we have $\sE^*_{\sfL_{\Phi}}(\sR)  = 0$.
\end{proof}

\subsection{Proof of Theorem~\ref{thm:bound-two}}
\label{app:bound-two}

\BoundTwo*
\begin{proof}
Let $\eta(x) = \mathbb{P}(Y = 1 \mid X = x)$ be the conditional probability of $Y = 1$ given $ X = x$.
The conditional error of $\sfL_{\tdef}$ can be expressed as follows:
\begin{align*}
\sC_{\sfL_{\tdef}}(r, x) &= \eta(x) \bracket*{ c_1(x, 1) 1_{\rr(x) = 1} + c_2(x, 1) 1_{\rr(x) = 2}}\\
& \qquad + \paren*{1 - \eta(x)} \bracket*{ c_1(x, 2) 1_{\rr(x) = 1} + c_2(x, 2) 1_{\rr(x) = 2}}\\
& = \bracket*{\eta(x) c_1(x, 1) + \paren*{1 - \eta(x)} c_1(x, 2)} 1_{\rr(x) = 1}\\
& \qquad +  \bracket*{\eta(x) c_2(x, 1) + \paren*{1 - \eta(x)} c_2(x, 2)} 1_{\rr(x) = 2}
\end{align*}
The conditional error of $\sfL_{\Phi}$ can be expressed as follows:
\begin{align*}
\sC_{\sfL_{\Phi}}(r, x) &= \eta(x) \bracket*{ c_1(x, 1) \Phi(r(x, 2) - r(x, 1)) + c_2(x, 1) \Phi(r(x, 1) - r(x, 2))}\\
& \qquad + \paren*{1 - \eta(x)} \bracket*{ c_1(x, 2) \Phi(r(x, 2) - r(x, 1)) + c_2(x, 2) \Phi(r(x, 1) - r(x, 2))}\\
& = \bracket*{\eta(x) c_1(x, 1) + \paren*{1 - \eta(x)} c_1(x, 2)} \Phi(r(x, 2) - r(x, 1))\\
& \qquad +  \bracket*{\eta(x) c_2(x, 1) + \paren*{1 - \eta(x)} c_2(x, 2)} \Phi(r(x, 1) - r(x, 2))
\end{align*}
Consider a new distribution which satisfies that
\[\wt \eta(x) = \frac{\eta(x) c_1(x, 1) + \paren*{1 - \eta(x)} c_1(x, 2)}{\bracket*{\eta(x) c_1(x, 1) + \paren*{1 - \eta(x)} c_1(x, 2)} + \bracket*{\eta(x) c_2(x, 1) + \paren*{1 - \eta(x)} c_2(x, 2)}}.\]
Then, under the assumption, we have
\begin{align*}
& \wt \eta(x) 1_{\rr(x) = 1} + \paren*{1 - \wt \eta(x)} 1_{\rr(x) = 2} - \inf_{r \in \sR} \bracket*{\wt \eta(x) 1_{\rr(x) = 1} + \paren*{1 - \wt \eta(x)} 1_{\rr(x) = 2}}\\
&\leq \Gamma\paren[\Bigg]{\wt \eta(x) \Phi(r(x, 2) - r(x, 1)) + \paren*{1 - \wt \eta(x)} \Phi(r(x, 1) - r(x, 2))\\
& \qquad - \inf_{r \in \sR} \bracket*{\wt \eta(x) \Phi(r(x, 2) - r(x, 1)) + \paren*{1 - \wt \eta(x)} \Phi(r(x, 1) - r(x, 2))}}.
\end{align*}
Using the fact that $\bracket*{\eta(x) c_1(x, 1) + \paren*{1 - \eta(x)} c_1(x, 2)} + \bracket*{\eta(x) c_2(x, 1) + \paren*{1 - \eta(x)} c_2(x, 2)} \in [\uv c_1 + \uv c_2, \ov c_1 + \ov c_2] $, we obtain
\begin{align*}
\sC_{\sfL_{\tdef}}(r, x) - \inf_{r \in \sR} \sC_{\sfL_{\tdef}}(r, x) \leq \paren*{\ov c_1 + \ov c_2} \Gamma\paren*{\frac{\sC_{\sfL_{\Phi}}(r, x) - \inf_{r \in \sR} \sC_{\sfL_{\Phi}}(r, x)}{ \uv c_1 + \uv c_2}},
\end{align*}
where constant factors $\paren*{\uv c_1 + \uv c_2}$ and $\paren*{\ov c_1 + \ov c_2}$ can be removed when $\Gamma$ is linear.
By taking the expectation on both sides and applying Jensen's inequality, we conclude the proof.
\end{proof}

\subsection{Proof of Theorem~\ref{thm:realizable-two-multi}}
\label{app:realizable-two-multi}

\RealizableTwoMulti*
\begin{proof}
Let $r^*$ be a best-in-class predictor such that $\sE_{\tdef}(r^*) = 0$. Since $\sR$ is closed under 
scaling, for any $\alpha$, $\alpha r^*$ is in 
$\sR$, and we have $\sE^*_{\sfL_{\Psi}}(\sR) \leq \sE_{\sfL_{\Psi}}(\alpha r^*)$. This implies the following inequality: 
\begin{align*}
\sE^*_{\sfL_{\Psi}}(\sR) 
& \leq \lim_{\alpha \to \plus \infty} \sE_{\sfL_{\Psi}}(\alpha r^*)\\
& = \sum_{j = 1}^{\num} \lim_{\alpha \to \plus \infty}  \E \bracket*{\sfL_{\Psi}(\alpha r^*, x, y) \mid \rr^*(x) = i} \mathbb{P}(\rr^*(x) = i).
\end{align*}
If $\rr^*(x) = i$, then, we must have $c_i(x, y) = 0$ and $c_k(x, y) = 1$, for all $k \neq i$ and all $(x, y) \in \sX \times \sY$. Thus, we can write
\begin{align*}
& \lim_{\alpha \to \plus \infty}  \E \bracket*{\sfL_{\Psi}(\alpha r^*, x, y) \mid \rr^*(x) = i}\\
& \leq \E \bracket*{\sum_{j = 1}^{\num} \paren*{\sum_{j' \neq j} c_{j'}(x, y) - \num + 2} \lim_{\alpha \to \plus \infty} \Psi \paren*{\frac{e^{\alpha r^*(x, j)}}{\sum_{j' \in [\num]} e^{\alpha r^*(x, j')}}} \mid \rr^*(x) = i} \tag{By \eqref{eq:two-stage-multiple-expert}}\\
& = \E \bracket*{\lim_{\alpha \to \plus \infty} \Psi \paren*{\frac{e^{\alpha r^*(x, i)}}{\sum_{j' \in [\num]} e^{\alpha r^*(x, j')}}}} \tag{$c_i(x, y) = 0$ and $c_k(x, y) = 1, \forall k \neq i$}\\
& = \E \bracket*{0} \tag{$\rr^*(x) = i \implies r^*(x, i) > r^*(x, j'), \forall j' \neq i$, $\lim_{u \to 1^{-}} \Psi(u) = 0$}\\
& = 0.
\end{align*}
Therefore, we have $\sE^*_{\sfL_{\Psi}}(\sR)  = 0$.
\end{proof}

\subsection{Proof of \texorpdfstring{$\sH$}{H}-consistency bounds}
\label{app:bound-two-multi}
\subsubsection{Simplified notation and Lemma~\ref{lemma:tdef}}
\label{app:tdef}

Let $p(y | x) = \mathbb{P} \paren*{Y = y \mid X = x}$ be the conditional probability of $Y = y$ given $X = x$.
We first characterize the best-in class conditional error and the
conditional regret of the two-stage deferral loss function $\tdef$, which will be used in the analysis
of $\sH$-consistency bounds.
\begin{restatable}{lemma}{Tdef}
\label{lemma:tdef}
Assume that $\sR$ is symmetric and complete. Then, for any $r \in \sK$
and $x \in \sX$, the best-in class conditional error and the
conditional regret of the two-stage deferral loss function are given
by:
\ifdim\columnwidth=\textwidth {
\begin{align*}
   \sC^*_{\tdef}(\sR, x)
  = \min_{j \in [\num]} \sum_{y\in \sY}  p(y | x) c_j(x, y), \quad
\Delta \sC_{\tdef, \sR}(r, x)
 = \sum_{y\in \sY}  p(y | x) c_{\rr(x)}(x, y) - \min_{j \in [\num]} \sum_{y\in \sY}  p(y | x) c_j(x, y). 
\end{align*}
}\else
{
\begin{align*}
  \sC^*_{\tdef}(\sR, x)
  & = \min_{j \in [\num]} \sum_{y\in \sY}  p(y | x) c_j(x, y)\\
\Delta \sC_{\tdef, \sR}(r, x)
& = \sum_{y\in \sY}  p(y | x) c_{\rr(x)}(x, y)\\
& \quad - \min_{j \in [\num]} \sum_{y\in \sY}  p(y | x) c_j(x, y).
\end{align*}
}\fi
\end{restatable}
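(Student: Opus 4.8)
The plan is to compute the conditional error of $\tdef$ in closed form, then use the structural assumptions on $\sR$ to identify the best-in-class conditional error, and finally subtract; this mirrors the argument used for Lemma~\ref{lemma:def}. For the first step, I would observe that $\rr(x) = \argmax_{y \in [\num]} r(x, y)$ selects a single index, so the indicators $1_{\rr(x) = j}$, $j \in [\num]$, sum to one and only the $j = \rr(x)$ term survives. Thus, for any $r \in \sR_{\rm{all}}$,
\[
\sC_{\tdef}(r, x) = \E_{y \mid x}\bracket*{\tdef(r, x, y)} = \sum_{y \in \sY} p(y \mid x) \sum_{j = 1}^{\num} c_j(x, y) 1_{\rr(x) = j} = \sum_{y \in \sY} p(y \mid x)\, c_{\rr(x)}(x, y).
\]

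Next, for the best-in-class conditional error, the previous display shows that as $r$ ranges over $\sR$ the value $\sC_{\tdef}(r, x)$ lies in the finite set $\curl*{\sum_{y \in \sY} p(y \mid x) c_j(x, y) : j \in [\num]}$, which immediately gives $\sC^*_{\tdef}(\sR, x) \geq \min_{j \in [\num]} \sum_{y \in \sY} p(y \mid x) c_j(x, y)$. For the reverse inequality I would invoke that $\sR$ is symmetric and complete: symmetry yields a family $\sF$ with $\curl*{\paren*{r(x, 1), \dots, r(x, \num)} : r \in \sR} = \sF(x)^{\num}$ where $\sF(x) = \curl*{f(x) : f \in \sF}$, and completeness forces $\sF(x) = \Rset$; hence for any target index $j^* \in [\num]$ there is an $r \in \sR$ whose $j^*$-th score at $x$ is strictly larger than all its other scores at $x$, so that $\rr(x) = j^*$ with no tie. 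Choosing $j^* \in \argmin_{j \in [\num]} \sum_{y \in \sY} p(y \mid x) c_j(x, y)$ attains the minimum, giving $\sC^*_{\tdef}(\sR, x) = \min_{j \in [\num]} \sum_{y \in \sY} p(y \mid x) c_j(x, y)$.

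Finally, subtracting the two gives $\Delta \sC_{\tdef, \sR}(r, x) = \sum_{y \in \sY} p(y \mid x) c_{\rr(x)}(x, y) - \min_{j \in [\num]} \sum_{y \in \sY} p(y \mid x) c_j(x, y)$, which is the claimed formula, and taking expectations over $x$ would then be available for the downstream $\sR$-consistency bound. The only delicate point — and hence the main obstacle, though a mild one — is the step showing that symmetry together with completeness lets us realize an arbitrary prediction index $j^*$ at the fixed point $x$ independently of the hypothesis's behavior elsewhere, and in particular that the constructed scores can be taken distinct so that the fixed deterministic tie-breaking rule never intervenes; everything else is a routine computation.
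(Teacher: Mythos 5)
Your proposal is correct and follows essentially the same route as the paper's proof: compute $\sC_{\tdef}(r,x) = \sum_{y\in\sY} p(y\mid x)\, c_{\rr(x)}(x,y)$ directly from the definition, use symmetry and completeness of $\sR$ to show the infimum over $r$ attains $\min_{j\in[\num]}\sum_{y\in\sY} p(y\mid x) c_j(x,y)$, and subtract. You actually spell out the symmetry-plus-completeness step in more detail than the paper does (which simply asserts it), and your elaboration is sound.
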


\begin{proof}
By definition, for any $r \in \sR$ and $x \in \sX$, the conditional
error of the two-stage deferral loss function can be written as
\begin{equation*}
\sC_{ \tdef }(r, x) =  \sum_{y\in \sY} p(y | x) c_{\rr(x)}(x, y).
\end{equation*}
Since $\sR$ is symmetric and complete, we have
\begin{equation*}
  \sC^*_{\tdef}(\sR, x)
  = \inf_{r \in \sR} \sum_{y\in \sY} p(y | x) c_{\rr(x)}(x, y) = \min_{j \in [\num]} \sum_{y\in \sY}  p(y | x) c_j(x, y).
\end{equation*}
Furthermore, the conditional regret can be expressed as
\begin{equation*}
\Delta\sC_{\tdef, \sR}(r, x)  = \sC_{ \tdef }(r, x) - \sC^*_{ \tdef }(\sR, x) = \sum_{y\in \sY} p(y | x) c_{\rr(x)}(x, y) - \min_{j \in [\num]} \sum_{y\in \sY}  p(y | x) c_j(x, y),
\end{equation*}
which completes the proof.
\end{proof}

\subsubsection{Proof of Theorem~\ref{thm:bound-two-multi}}

For convenience, we let $\ov C = (\num - 1) \max_{j \in [\num ]} \ov c_j - \num + 2$, $\ov c_j(x, y) = \sum_{j' \neq j} c_{j'}(x, y) - \num + 2 \in [0, \ov C]$, $\ov q(x, j) = \sum_{y \in \sY} p(y | x) \ov c_j(x, y) \in [0, \ov C]$ and $\sS(x, j) = \frac{e^{r(x, j)}}{\sum_{j' \in [\num]} e^{r(x, j')}}$. We also let $j_{\min}(x)  = \argmin_{j \in [\num]} \sum_{y\in \sY}  p(y | x) c_j(x, y)$. 

\BoundTwoMulti*
\begin{proof}
\textbf{Case I: $q = 0$.} In this case,
for the surrogate loss $\sfL_{\Psi_q}$, the conditional error can be written as follows:
\begin{align*}
\sC_{\sfL_{\Psi_q} }(r, x) = -\sum_{y\in \sY} p(y | x) \sum_{j \in [\num]} \ov c_j(x, y) \log\paren*{\frac{e^{r(x, j)}}{\sum_{j' \in [\num]}e^{r(x, j')}}}  =   - \sum_{j \in [\num]}\log\paren*{\sS(x, j)}\ov q(x, j).
\end{align*}
The conditional regret can be written as 
\begin{align*}
& \Delta\sC_{\sfL_{\Psi_q}, \sR}(r, x)\\ 
& = - \sum_{j \in [\num]}\log\paren*{\sS(x, j)}\ov q(x, j) - \inf_{r \in \sR} \paren*{- \sum_{j \in [\num]}\log\paren*{\sS(x, j)}\ov q(x, j)}\\
& \geq - \sum_{j \in [\num]}\log\paren*{\sS(x, j)}\ov q(x, j) - \inf_{\mu \in \bracket*{-\sS(x, j_{\min}(x)), \sS(x, \rr(x))}} \paren*{- \sum_{j \in [\num]}\log\paren*{\sS_{\mu}(x, j)}\ov q(x, j)},
\end{align*}
where for any $x \in \sX$ and $j \in [\num]$, 
\[
\sS_{\mu}(x, j) = \begin{cases}
\sS(x, j), & j \notin \curl*{j_{\min}(x), \rr(x)}\\
\sS(x, j_{\min}(x)) + \mu & j = \rr(x)\\
\sS(x, \rr(x)) - \mu & j = j_{\min}(x).
\end{cases}
\]
Note that with such a choice of $\sS_{\mu}$ leads to the following equality holds:
\begin{equation*}
\sum_{j \notin \curl*{\rr(x), j_{\min}(x)}} \log\paren*{\sS(x, j)}\ov q(x, j) = \sum_{j \notin \curl*{\rr(x), j_{\min}(x)}}  \log\paren*{\sS_{\mu}(x, j)}\ov q(x, j).
\end{equation*}
Therefore, the conditional regret of the surrogate loss can be lower bounded as follows:
\begin{align*}
  & \Delta\sC_{\sfL_{\Psi_q}, \sR}(r, x)\\
  & \geq \sup_{\mu \in [-\sS(x, j_{\min}(x)),\sS(x,\rr(x))]} \bigg\{\ov q(x, j_{\min}(x))\bracket*{-\log\paren*{\sS(x, j_{\min}(x))} + \log\paren*{\sS(x,\rr(x)) - \mu}}\\
& \qquad + \ov q(x,\rr(x))\bracket*{-\log\paren*{\sS(x,\rr(x))}
    + \log\paren*{\sS(x, j_{\min}(x))+\mu}}\bigg\}.
\end{align*}
By leveraging the concavity of the function, we differentiate it with respect to $\mu$ and set the differential equal to zero to find the maximizing value 
\[
\mu^* = \frac{\ov q(x,\rr(x))\sS(x,\rr(x))-\ov q(x, j_{\min}(x))\sS(x,
  j_{\min}(x))}{\ov q(x, j_{\min}(x))+\ov q(x,\rr(x))}.
  \]
Plugging in the expression of $\mu^*$, we obtain
\begin{align*}
& \Delta\sC_{\sfL_{\Psi_q}, \sR}(r, x)\\ 
& \geq \ov q(x, j_{\min}(x))\log\frac{\paren*{\sS(x,\rr(x))+\sS(x, j_{\min}(x))}\ov q(x, j_{\min}(x))}{\sS(x, j_{\min}(x))\paren*{\ov q(x, j_{\min}(x))+\ov q(x,\rr(x))}}\\
& \qquad +\ov q(x,\rr(x))\log\frac{\paren*{\sS(x,\rr(x))+\sS(x, j_{\min}(x))}\ov q(x,\rr(x))}{\sS(x,\rr(x))\paren*{\ov q(x, j_{\min}(x))+\ov q(x,\rr(x))}}\\
& \geq \ov q(x, j_{\min}(x))\log\frac{2\ov q(x, j_{\min}(x))}{\ov q(x, j_{\min}(x))+\ov q(x,\rr(x))} +\ov q(x,\rr(x))\log\frac{2\ov q(x,\rr(x))}{\ov q(x, j_{\min}(x))+\ov q(x,\rr(x))}
\tag{minimum is achieved when $\sS(x, \rr(x)) = \sS(x, j_{\min}(x))$}\\
& \geq \frac{\paren*{\ov q(x,\rr(x))-\ov q(x, j_{\min}(x))}^2}{2\paren*{\ov q(x,\rr(x))+\ov q(x, j_{\min}(x))}}
\tag{$a\log \frac{2a}{a+b}+b\log \frac{2b}{a+b}\geq \frac{(a-b)^2}{2(a+b)}, \forall a,b\in[0,1]$ \citep[Proposition~E.7]{MohriRostamizadehTalwalkar2018}}\\
& \geq \frac{\paren*{\ov q(x,\rr(x))-\ov q(x, j_{\min}(x))}^2}{4 \ov C} \tag{$0\leq \ov q(x,\rr(x))+\ov q(x, j_{\min}(x))\leq 2 \ov C$}.
\end{align*}
Therefore, by Lemma~\ref{lemma:tdef}, the conditional regret of the two-stage deferral loss function can be upper bounded as follows:
\begin{equation*}
\Delta \sC_{\tdef, \sR}(r, x) =  \ov q(x, j_{\min}(x)) - \ov q(x, \rr(x)) \leq 2 \paren*{\ov C}^{\frac12} \paren*{\Delta\sC_{\sfL_{\Psi_q}, \sR}(r, x) }^{\frac12}.
\end{equation*}
By the concavity, taking expectations on both sides of the preceding equation, we obtain
\begin{equation*}
\sE_{\tdef}(r) - \sE^*_{\tdef}(\sR) + \sM_{\tdef}(\sR) \leq 2 \paren*{\ov C}^{\frac12} \paren*{ \sE_{\sfL_{\Psi_q}}(r) - \sE^*_{\sfL_{\Psi_q}}(\sR) + \sM_{\sfL_{\Psi_q}}(\sR) }^{\frac12}.
\end{equation*}

\textbf{Case II: $q \in (0, 1)$.} In this case,
for the surrogate loss $\sfL_{\Psi_q}$, the conditional error can be written as follows:
\begin{align*}
q \sC_{\sfL_{\Psi_q} }(r, x) = \sum_{y\in \sY} p(y | x) \sum_{j \in [\num]} \ov c_j(x, y) \paren*{1 -\paren*{\frac{e^{r(x, j)}}{\sum_{j' \in [\num]}e^{r(x, j')}}}^q}  = \sum_{j \in [\num]} \paren*{1 - \sS^q(x, j)} \ov q(x, j).
\end{align*}
The conditional regret can be written as 
\begin{align*}
& q \Delta\sC_{\sfL_{\Psi_q}, \sR}(r, x)\\ 
& = \sum_{j \in [\num]}\paren*{1 - \sS^q(x, j)}\ov q(x, j) - \inf_{r \in \sR} \paren*{ \sum_{j \in [\num]}\paren*{1 - \sS^q(x, j)}\ov q(x, j)}\\
& \geq \sum_{j \in [\num]}\paren*{1 - \sS^q(x, j)}\ov q(x, j) - \inf_{\mu \in \bracket*{\sS(x, j_{\min}(x)), \sS(x, \rr(x))}} \paren*{ \sum_{j \in [\num]} \paren*{1 - \sS^q_{\mu}(x, j)}\ov q(x, j)},
\end{align*}
where for any $x \in \sX$ and $j \in [\num]$, 
\[
\sS_{\mu}(x, j) = \begin{cases}
\sS(x, j), & j \notin \curl*{j_{\min}(x), \rr(x)}\\
\sS(x, j_{\min}(x)) + \mu & j = \rr(x)\\
\sS(x, \rr(x)) - \mu & j = j_{\min}(x).
\end{cases}
\]
Note that with such a choice of $\sS_{\mu}$ leads to the following equality holds:
\begin{equation*}
\sum_{j \notin \curl*{\rr(x), j_{\min}(x)}} \paren*{1 - \sS^q(x, j)}\ov q(x, j) = \sum_{j \notin \curl*{\rr(x), j_{\min}(x)}}  \paren*{1 - \sS^q_{\mu}(x, j)}\ov q(x, j).
\end{equation*}
Therefore, the conditional regret of the surrogate loss can be lower bounded as follows:
\begin{align*}
  & q \Delta\sC_{\sfL_{\Psi_q}, \sR}(r, x)\\
  & \geq \sup_{\mu \in [-\sS(x, j_{\min}(x)),\sS(x,\rr(x))]} \bigg\{\ov q(x, j_{\min}(x))\bracket*{-\sS^q(x, j_{\min}(x)) + \sS^q(x,\rr(x)) - \mu}\\
& \qquad + \ov q(x,\rr(x))\bracket*{-\sS^q(x,\rr(x))
    + \sS^q(x, j_{\min}(x))+\mu}\bigg\}.
\end{align*}
By leveraging the concavity of the function, we differentiate it with respect to $\mu$ and set the differential equal to zero to find the maximizing value 
\[
\mu^* = \frac{\ov q(x,\rr(x))^{\frac1{1 - q}}\sS(x,\rr(x))-\ov q(x, j_{\min}(x))^{\frac1{1 - q}} \sS(x,
  j_{\min}(x))}{\ov q(x, j_{\min}(x))^{\frac1{1 - q}} + \ov q(x,\rr(x))^{\frac1{1 - q}}}.
  \]
Plugging in the expression of $\mu^*$, we obtain
\begin{align*}
& q \Delta\sC_{\sfL_{\Psi_q}, \sR}(r, x)\\ 
& \geq \paren*{\sS(x,\rr(x))+\sS(x, j_{\min}(x))}^q \paren*{\ov q(x, j_{\min}(x))^{\frac1{1 - q}} + \ov q(x,\rr(x))^{\frac1{1 - q}}}^{1 - q}\\
& \qquad - \ov q(x, j_{\min}(x)) \sS^q(x, j_{\min}(x)) - \ov q(x,\rr(x)) \sS^q(x,\rr(x))\\
& \geq \frac{1}{\num^q}\bracket*{2^q \paren*{\ov q(x, j_{\min}(x))^{\frac1{1 - q}} + \ov q(x,\rr(x))^{\frac1{1 - q}}}^{1 - q} - \ov q(x, j_{\min}(x)) - \ov q(x,\rr(x))} 
\tag{minimum is achieved when $\sS(x, \rr(x)) = \sS(x, j_{\min}(x)) = \frac{1}{\num}$}\\
& \geq \frac{ q \paren*{\ov q(x,\rr(x))-\ov q(x, j_{\min}(x))}^2}{4 \num^q \ov C}
\tag{$\paren*{\frac{a^{\frac{1}{1- q}}+b^{\frac{1}{1- q}}}{2}}^{1- q}- \frac{a+b}{2}\geq \frac{q}{4}(a-b)^2, \forall a,b\in[0,1], 0 \leq a + b \leq 1$}.
\end{align*}
Therefore, by Lemma~\ref{lemma:tdef}, the conditional regret of the two-stage deferral loss function can be upper bounded as follows:
\begin{equation*}
\Delta \sC_{\tdef, \sR}(r, x) =  \ov q(x, j_{\min}(x)) - \ov q(x, \rr(x)) \leq 2(\num^q)^{\frac12} (\ov C)^{\frac12} \paren*{\Delta\sC_{\sfL_{\Psi_q}, \sR}(r, x) }^{\frac12}.
\end{equation*}
By the concavity, taking expectations on both sides of the preceding equation, we obtain
\begin{equation*}
\sE_{\tdef}(r) - \sE^*_{\tdef}(\sR) + \sM_{\tdef}(\sR) \leq 2(\num^q)^{\frac12} (\ov C)^{\frac12} \paren*{ \sE_{\sfL_{\Psi_q}}(r) - \sE^*_{\sfL_{\Psi_q}}(\sR) + \sM_{\sfL_{\Psi_q}}(\sR) }^{\frac12}.
\end{equation*}
\end{proof}
\begin{proof}
\textbf{Case III: $q = 1$.} In this case,
for the surrogate loss $\sfL_{\Psi_q}$, the conditional error can be written as follows:
\begin{align*}
\sC_{\sfL_{\Psi_q} }(r, x) = \sum_{y\in \sY} p(y | x) \sum_{j \in [\num]} \ov c_j(x, y) \paren*{1 -\frac{e^{r(x, j)}}{\sum_{j' \in [\num]}e^{r(x, j')}}}  = \sum_{j \in [\num]} \paren*{1 - \sS(x, j)} \ov q(x, j).
\end{align*}
The conditional regret can be written as 
\begin{align*}
& \Delta\sC_{\sfL_{\Psi_q}, \sR}(r, x)\\ 
& = \sum_{j \in [\num]}\paren*{1 - \sS(x, j)}\ov q(x, j) - \inf_{r \in \sR} \paren*{ \sum_{j \in [\num]}\paren*{1 - \sS(x, j)}\ov q(x, j)}\\
& \geq \sum_{j \in [\num]}\paren*{1 - \sS(x, j)}\ov q(x, j) - \inf_{\mu \in \bracket*{\sS(x, j_{\min}(x)), \sS(x, \rr(x))}} \paren*{ \sum_{j \in [\num]} \paren*{1 - \sS_{\mu}(x, j)}\ov q(x, j)},
\end{align*}
where for any $x \in \sX$ and $j \in [\num]$, 
\[
\sS_{\mu}(x, j) = \begin{cases}
\sS(x, j), & j \notin \curl*{j_{\min}(x), \rr(x)}\\
\sS(x, j_{\min}(x)) + \mu & j = \rr(x)\\
\sS(x, \rr(x)) - \mu & j = j_{\min}(x).
\end{cases}
\]
Note that with such a choice of $\sS_{\mu}$ leads to the following equality holds:
\begin{equation*}
\sum_{j \notin \curl*{\rr(x), j_{\min}(x)}} \paren*{1 - \sS(x, j)}\ov q(x, j) = \sum_{j \notin \curl*{\rr(x), j_{\min}(x)}}  \paren*{1 - \sS_{\mu}(x, j)}\ov q(x, j).
\end{equation*}
Therefore, the conditional regret of the surrogate loss can be lower bounded as follows:
\begin{align*}
  & \Delta\sC_{\sfL_{\Psi_q}, \sR}(r, x)\\
  & \geq \sup_{\mu \in [-\sS(x, j_{\min}(x)),\sS(x,\rr(x))]} \bigg\{\ov q(x, j_{\min}(x))\bracket*{-\sS(x, j_{\min}(x)) + \sS(x,\rr(x)) - \mu}\\
& \qquad + \ov q(x,\rr(x))\bracket*{-\sS(x,\rr(x))
    + \sS(x, j_{\min}(x))+\mu}\bigg\}.
\end{align*}
By leveraging the concavity of the function, we differentiate it with respect to $\mu$ and set the differential equal to zero to find the maximizing value 
\[
\mu^* = -\sS(x, j_{\min}(x)).
  \]
Plugging in the expression of $\mu^*$, we obtain
\begin{align*}
& \Delta\sC_{\sfL_{\Psi_q}, \sR}(r, x)\\ 
& \geq \ov q(x, j_{\min}(x)) \sS(x,\rr(x)) - \ov q(x,\rr(x)) \sS(x,\rr(x))\\
& \geq \frac{1}{\num} \paren*{\ov q(x,\rr(x))-\ov q(x, j_{\min}(x))}
\tag{minimum is achieved when $\sS(x, \rr(x)) = \frac{1}{\num}$}.
\end{align*}
Therefore, by Lemma~\ref{lemma:tdef}, the conditional regret of the two-stage deferral loss function can be upper bounded as follows:
\begin{equation*}
\Delta \sC_{\tdef, \sR}(r, x) =  \ov q(x, j_{\min}(x)) - \ov q(x, \rr(x)) \leq \num \paren*{\Delta\sC_{\sfL_{\Psi_q}, \sR}(r, x) }.
\end{equation*}
By the concavity, taking expectations on both sides of the preceding equation, we obtain
\begin{equation*}
\sE_{\tdef}(r) - \sE^*_{\tdef}(\sR) + \sM_{\tdef}(\sR) \leq \num \paren*{ \sE_{\sfL_{\Psi_q}}(r) - \sE^*_{\sfL_{\Psi_q}}(\sR) + \sM_{\sfL_{\Psi_q}}(\sR) }.
\end{equation*}
\end{proof}

\section{Enhanced bounds under low-noise assumptions: proofs}

\ignore{
\subsection{Proof of Theorem~\ref{Thm:new-bound-power}}
\label{app:new-bound-power}

\NewBoundPower*

\begin{proof}
For any $h\in \sH$, we can write
\begin{align*}
  \sE_{\sfL_2}(h) - \sE^*_{\sfL_2}(\sH) + \sM_{\sfL_2}(\sH)
  & = \E_{X} \bracket*{\Delta\sC_{\sfL_2,\sH}(h, x)}\\
  & \leq \E_{X} \bracket*{\frac{\beta(h, x)}{\E_X\bracket*{\beta(h, x)}} \alpha^{\frac{1}{s}}(h, x) \,\Delta \sC^{\frac{1}{s}}_{\sfL_1,\sH}(h, x)}
  \tag{assumption}\\
  & \leq \E_X \bracket*{\frac{\alpha^{\frac{t}{s}}(h, x) \beta^t(h, x)}{\E_X\bracket*{\beta(h, x)}^t}}^{\frac{1}{t}} \E_X \bracket*{\Delta\sC_{\sfL_1,\sH}(h, x)}^{\frac{1}{s}} 
  \tag{H\"older’s ineq.}\\
  & =  \E_X \bracket*{\frac{\alpha^{\frac{t}{s}}(h, x) \beta^t(h, x)}{\E_X\bracket*{\beta(h, x)}^t}}^{\frac{1}{t}} \paren*{\sE_{\sfL_1}(h) - \sE^*_{\sfL_1}(\sH) + \sM_{\sfL_1}(\sH)}^{\frac{1}{s}}
  \tag{def. of $\E_X \bracket*{\Delta\sC_{\sfL_1,\sH}(h, x)}$}.
\end{align*}
This completes the proof.
\end{proof}
}

\subsection{Single-stage: proofs}

\subsubsection{Proof of Lemma~\ref{lemma:MM}}
\label{app:MM}

\MM*
\begin{proof}
The second inequality follows directly from the definition of
$\gamma(x)$ and Lemma~\ref{lemma:def}. The proof of the first
inequality follows the same steps as the first part of the proof of
Lemma 18 in \citep{mao2025enhanced}.
\ignore{
  We prove the first inequality, the second one follows immediately the definition
  of $\gamma(x)$ and Lemma~\ref{lemma:def}.
  By definition of the expectation and the Lebesgue integral, for
  any $u > 0$, we can write
\begin{align*}
    \E[\gamma(X) 1_{\hh(X) \neq \hh^*(X)}]
    & = \int_0^{+\infty} \Pr[\gamma(X) 1_{\hh(X) \neq \hh^*(X)} > t] \, dt\\
    & \geq \int_0^{u} \Pr[\gamma(X) 1_{\hh(X) \neq \hh^*(X)} > t] \, dt\\
    & = \int_0^{u} \E[1_{\gamma(X) > t} \, 1_{\hh(X) \neq \hh^*(X)}] \, dt\\
    & = \int_0^{u} \paren*{\E[1_{\gamma(X) > t}] - \E[1_{\gamma(X) > t} 1_{\hh(X) = \hh^*(X)}]} \, dt\\
    & \geq \int_0^{u} \paren*{\E[1_{\gamma(X) > t}] - \E[1_{\hh(X) = \hh^*(X)}]} \, dt\\
    & = \int_0^{u} \paren*{\Pr[\hh(X) \neq \hh^*(X)] - \Pr[\gamma(X) \leq t]} \, dt\\
    & \geq u \Pr[\hh(X) \neq \hh^*(X)] - \int_0^{u} B t^{\frac{\alpha}{1 - \alpha}}  \, dt\\
    & = u \Pr[\hh(X) \neq \hh^*(X)] - B (1 - \alpha) u^{\frac{1}{1 - \alpha}}.
  \end{align*}
 Taking the derivative and choosing $u$ to maximize the above gives $u
 = \bracket*{\frac{\Pr[\hh(X) \neq \hh^*(X)]}{B}}^{\frac{1-
     \alpha}{\alpha}}$. Plugging in this choice of $u$ gives
 \[
 \E[\gamma(X) 1_{\hh(X) \neq \hh^*(X)}]
 \geq \bracket*{\frac{1}{B}}^{\frac{1 - \alpha}{\alpha}} \alpha \Pr[\hh(X)
   \neq \hh^*(X)]^{\frac{1}{\alpha}},
 \]
which can be rewritten as $\Pr[\hh(X) \neq \hh^*(X)] \leq c
\E[\gamma(X) 1_{\hh(X) \neq \hh^*(X)}]^\alpha$ for $c = \frac{B^{1 -
    \alpha}}{\alpha^{\alpha}}$.
}
\end{proof}

\subsubsection{Proof of Theorem~\ref{Thm:multi}}
\label{app:multi}

We will leverage the following result of \citet{mao2025enhanced},
which serves as a general tool for obtaining enhanced bounds under
low-noise assumptions.\ignore{ For completeness, we include its proof
  in Appendix~\ref{app:new-bound-power}.}

\begin{restatable}{theorem}{NewBoundPower}
\label{Thm:new-bound-power}
Assume that there exist two positive functions $\alpha\colon \sH
\times \sX \to \Rset^*_+$ and $\beta\colon \sH \times \sX \to
\Rset^*_+$ with $\sup_{x \in \sX} \alpha(h, x) < +\infty$ and $\E_{x
  \in \sX} \bracket*{\beta(h, x)} < +\infty$ for all $h \in \sH$ such
that the following holds for all $h\in \sH$ and $x\in \sX$: $
\frac{\Delta\sC_{\sfL_2,\sH}(h, x) \E_X\bracket*{\beta(h,
    x)}}{\beta(h, x)} \leq \paren*{\alpha(h, x) \, \Delta
  \sC_{\sfL_1,\sH}(h, x)}^{\frac{1}{s}}$, for some $s \geq 1$ with
conjugate number $t \geq 1$, that is $\frac{1}{s} + \frac{1}{t} =
1$. Then, for $\gamma(h) = \E_X
\bracket*{\frac{\alpha^{\frac{t}{s}}(h, x) \beta^t(h,
    x)}{\E_X\bracket*{\beta(h, x)}^t}}^{\frac{1}{t}}$, the following
inequality holds for any $h \in \sH$:
\ifdim\columnwidth=\textwidth {
\begin{equation*}
  \sE_{\sfL_2}(h) - \sE_{\sfL_2}^*(\sH) + \sM_{\sfL_2}(\sH)
  \leq \gamma(h) \bracket*{\sE_{\sfL_1}(h) - \sE^*_{\sfL_1}(\sH)
    + \sM_{\sfL_1}(\sH)}^{\frac{1}{s}}.
\end{equation*}
}\else
{
\begin{align*}
& \sE_{\sfL_2}(h) - \sE_{\sfL_2}^*(\sH) + \sM_{\sfL_2}(\sH)\\
& \qquad \leq \gamma(h) \bracket*{\sE_{\sfL_1}(h) - \sE^*_{\sfL_1}(\sH)
    + \sM_{\sfL_1}(\sH)}^{\frac{1}{s}}.
\end{align*}
}\fi
\end{restatable}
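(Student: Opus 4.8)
The plan is to express both sides of the claimed bound as expectations of conditional regrets over $X$, invoke the assumed pointwise inequality, and then close with a single application of H\"older's inequality with the conjugate pair $(t,s)$.

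First I would record the identity that, for any loss $\sfL$ and any $h \in \sH$,
\[
\sE_{\sfL}(h) - \sE^*_{\sfL}(\sH) + \sM_{\sfL}(\sH)
= \sE_{\sfL}(h) - \E_X\bracket*{\sC^*_{\sfL}(\sH, X)}
= \E_X\bracket*{\Delta\sC_{\sfL,\sH}(h, X)},
\]
which follows directly from the definition of the minimizability gap $\sM_{\sfL}(\sH) = \sE^*_{\sfL}(\sH) - \E_X\bracket*{\inf_{h\in\sH}\sC_{\sfL}(h, X)}$ and of the conditional regret. Applying this to $\sfL_2$ reduces the left-hand side to $\E_X\bracket*{\Delta\sC_{\sfL_2,\sH}(h, X)}$, and applying it to $\sfL_1$ identifies $\E_X\bracket*{\Delta\sC_{\sfL_1,\sH}(h, X)}$ with the quantity appearing (raised to the $\tfrac1s$ power) on the right-hand side.

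Next I would rewrite the hypothesis, which is legitimate since $\beta$ is strictly positive and $\E_X\bracket*{\beta(h, X)} < +\infty$, in the equivalent form
\[
\Delta\sC_{\sfL_2,\sH}(h, x)
\leq \frac{\beta(h, x)}{\E_X\bracket*{\beta(h, X)}}\,\alpha^{\frac1s}(h, x)\,\Delta\sC_{\sfL_1,\sH}^{\frac1s}(h, x).
\]
Taking expectations over $X$ and applying H\"older's inequality to the product of $f(x) = \frac{\beta(h, x)}{\E_X[\beta(h, X)]}\alpha^{\frac1s}(h, x)$ and $g(x) = \Delta\sC_{\sfL_1,\sH}^{\frac1s}(h, x)$ with exponents $t$ and $s$ gives $\E_X[fg] \le \E_X[f^t]^{\frac1t}\,\E_X[g^s]^{\frac1s}$. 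Here $\E_X[g^s]^{\frac1s} = \E_X\bracket*{\Delta\sC_{\sfL_1,\sH}(h, X)}^{\frac1s}$, while $\E_X[f^t]^{\frac1t} = \E_X\bracket*{\tfrac{\alpha^{t/s}(h, x)\beta^t(h, x)}{\E_X[\beta(h, X)]^t}}^{\frac1t} = \gamma(h)$, exactly as defined; the conditions $\sup_x \alpha(h, x) < +\infty$ and $\E_X[\beta(h, X)] < +\infty$ ensure $\gamma(h)$ is well-defined and finite. Substituting the two identities from the first step then yields the stated inequality.

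I do not anticipate a genuine obstacle here; the proof is short. The only point requiring care is the bookkeeping in the H\"older step — checking that $g^s$ reproduces $\Delta\sC_{\sfL_1,\sH}$ exactly, that $f^t$ has integrand matching the definition of $\gamma(h)$, and that the finiteness/positivity hypotheses are precisely what make the normalization by $\E_X[\beta(h,X)]$ and the resulting constant $\gamma(h)$ legitimate.
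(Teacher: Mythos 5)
Your proposal is correct and matches the paper's own argument essentially line for line: both rewrite the left-hand side as $\E_X[\Delta\sC_{\sfL_2,\sH}(h,X)]$, apply the pointwise hypothesis in the normalized form, and finish with a single H\"older step with exponents $(t,s)$ that produces $\gamma(h)$ and the surrogate regret term. No gaps; the bookkeeping you flag is exactly the content of the paper's proof.
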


\Multi*
\begin{proof}
  Fix $\e > 0$ and define $\beta(h, x) = 1_{\hh(x) \neq \hh^*(x)} +
  \e $.  By Lemma~\ref{lemma:def}, \[\Delta\sC_{\ldef,\sH}(h, x) = \max \curl*{p_{y_{\max}}, \max_{j \in [\num]} p_{n + j}} - p_{\hh(x)} =  p_{\hh^*(x)} - p_{\hh(x)} ,\] we have \[\frac{\Delta\sC_{\ldef,\sH}(h, x)
    \E_X[\beta(h, x)]}{\beta(h, x)} \leq \Delta\sC_{\ldef,\sH}(h, x)
  \E_X[\beta(h, x)],\] thus the following inequality holds
\[
\frac{\Delta\sC_{\ldef,\sH}(h, x) \E_X[\beta(h, x)]}{\beta(h, x)}
\leq \E_X[\beta(h, x)] \Delta \sC^{\frac{1}{s}}_{\sfL,\sH}(h, x).
\]
By Theorem~\ref{Thm:new-bound-power}, with $\alpha(h, x) =
\E_X[\beta(h, x)]^s$, we have
\begin{align*}
  \sE_{\ldef}(h) - \sE^*_{\ldef}(\sH) + \sM_{\ldef}(\sH)
  & \leq \E_X[\beta^t(h, x)]^{\frac{1}{t}} \paren*{\sE_{\sfL}(h) - \sE^*_{\sfL}(\sH) + \sM_{\sfL}(\sH)}^{\frac{1}{s}}.
\end{align*}
Since the inequality holds for any $\e > 0$, it implies:
\begin{align*}
  \sE_{\ldef}(h) - \sE^*_{\ldef}(\sH) + \sM_{\ldef}(\sH)
  & \leq \E_X\bracket*{\paren*{1_{\hh(X) \neq \hh^*(X)}}^t}^{\frac{1}{t}} \paren*{\sE_{\sfL}(h) - \sE^*_{\sfL}(\sH) + \sM_{\sfL}(\sH)}^{\frac{1}{s}}\\
  & = \E_X[1_{\hh(X) \neq \hh^*(X)}]^{\frac{1}{t}} \paren*{\sE_{\sfL}(h) - \sE^*_{\sfL}(\sH) + \sM_{\sfL}(\sH)}^{\frac{1}{s}} \tag{$\paren*{1_{\hh(X) \neq \hh^*(X)}}^t = 1_{\hh(X) \neq \hh^*(X)}$}.
\end{align*}
This completes the proof.
\end{proof}

\subsubsection{Proof of Theorem~\ref{Thm:MM-multi}}
\label{app:MM-multi}

\MMMulti*
\begin{proof}
  Fix $\e > 0$ and define $\beta(h, x) = 1_{\hh(x) \neq \hh^*(x)} +
  \e$.  By Lemma~\ref{lemma:def}, \[\Delta\sC_{\ldef,\sH}(h, x) = \max \curl*{p_{y_{\max}}, \max_{j \in [\num]} p_{n + j}} - p_{\hh(x)} =  p_{\hh^*(x)} - p_{\hh(x)} ,\] we have \[\frac{\Delta\sC_{\ldef,\sH}(h, x)
    \E_X[\beta(h, x)]}{\beta(h, x)} \leq \Delta\sC_{\ldef,\sH}(h, x)
  \E_X[\beta(h, x)],\] thus the following inequality holds
\[
\frac{\Delta\sC_{\ldef,\sH}(h, x) \E_X[\beta(h, x)]}{\beta(h, x)}
\leq \E_X[\beta(h, x)] \Delta \sC^{\frac{1}{s}}_{\sfL,\sH}(h, x).
\]
By Theorem~\ref{Thm:new-bound-power}, with $\alpha(h, x) =
\E_X[\beta(h, x)]^s$, we have
\begin{align*}
  \sE_{\ldef}(h) - \sE^*_{\ldef}(\sH)
  & \leq \E_X[\beta^t(h, x)]^{\frac{1}{t}} \paren*{\sE_{\sfL}(h) - \sE^*_{\sfL}(\sH) + \sM_{\sfL}(\sH)}^{\frac{1}{s}}.
\end{align*}
Since the inequality holds for any $\e > 0$, it implies:
\begin{align*}
  \sE_{\ldef}(h) - \sE^*_{\ldef}(\sH)
  & \leq \E_X\bracket*{\paren*{1_{\hh(X) \neq \hh^*(X)}}^t}^{\frac{1}{t}} \paren*{\sE_{\sfL}(h) - \sE^*_{\sfL}(\sH) + \sM_{\sfL}(\sH)}^{\frac{1}{s}}\\
  & =  \E_X[1_{\hh(X) \neq \hh^*(X)}]^{\frac{1}{t}} \paren*{\sE_{\sfL}(h) - \sE^*_{\sfL}(\sH) + \sM_{\sfL}(\sH)}^{\frac{1}{s}} \tag{$\paren*{1_{\hh(X) \neq \hh^*(X)}}^t = 1_{\hh(x) \neq \hh^*(x)}$}\\
& \leq c^{\frac{1}{t}} \bracket*{\sE_{\ldef}(h) - \sE^*_{\ldef}(\sH)}^{\frac{\alpha}{t}} \paren*{\sE_{\sfL}(h) - \sE^*_{\sfL}(\sH) + \sM_{\sfL}(\sH)}^{\frac{1}{s}}
  \tag{Tsybakov noise assumption}
\end{align*}
The result follows after dividing both sides by $\bracket*{\sE_{\ldef}(h) - \sE^*_{\ldef}(\sH)}^{\frac{\alpha}{t}}$.
\end{proof}

\subsection{Two-stage: proofs}

\subsubsection{Lemma~\ref{lemma:MM-tdef} and Proof}
\label{app:MM-tdef}

\begin{restatable}{lemma}{MMTdef}
\label{lemma:MM-tdef}
  The two-stage Tsybakov noise assumption implies that there exists a
  constant $c = \frac{B^{1 - \alpha}}{\alpha^{\alpha}} > 0$ such that
  the following inequalities hold for any $h \in \sR$:
\ifdim\columnwidth=\textwidth
{
\begin{equation*}
\E[1_{\rr(x) \neq \rr^*(x)}]
  \leq c \E[\gamma(X) 1_{\rr(x) \neq \rr^*(x)}]^\alpha
  \leq c [\sE_{\tdef}(r) - \sE_{\tdef}(r^*)]^\alpha.
\end{equation*}
}\else
{
\begin{align*}
 \E[1_{\rr(x) \neq \rr^*(x)}]
 &\leq c \E[\gamma(X) 1_{\rr(x) \neq \rr^*(x)}]^\alpha\\
 &\leq c [\sE_{\tdef}(r) - \sE_{\tdef}(r^*)]^\alpha.
  \end{align*}
}\fi  
\end{restatable}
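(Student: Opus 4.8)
The plan is to follow exactly the two-step structure of the single-stage result, Lemma~\ref{lemma:MM}, working from the rightmost inequality back to the leftmost. First I would establish the second inequality, $\E[\gamma(X) 1_{\rr(X) \neq \rr^*(X)}] \leq \sE_{\tdef}(r) - \sE_{\tdef}(r^*)$, using the characterization of the conditional regret of $\tdef$ from Lemma~\ref{lemma:tdef} together with the definition of the minimal margin $\gamma$. Then I would derive the first inequality, $\E[1_{\rr(X) \neq \rr^*(X)}] \leq c\,\E[\gamma(X) 1_{\rr(X) \neq \rr^*(X)}]^\alpha$, by the standard Tsybakov-noise optimization argument, transcribed essentially verbatim from the proof of Lemma~\ref{lemma:MM} (itself the first part of the proof of Lemma~18 in \citet{mao2024enhanced}) with $h, \hh, \hh^*$ replaced by $r, \rr, \rr^*$.

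For the second inequality, fix a point $x$. Since $r^*$ is a Bayes classifier for $\tdef$, its conditional error equals $\min_{j \in [\num]} \sum_{y \in \sY} p(y \mid x) c_j(x,y)$, so by Lemma~\ref{lemma:tdef} the conditional regret of $r$ is $\Delta\sC_{\tdef,\sR}(r,x) = \sum_{y \in \sY} p(y \mid x) c_{\rr(x)}(x,y) - \min_{j \in [\num]} \sum_{y \in \sY} p(y \mid x) c_j(x,y)$. When $\rr(x) = \rr^*(x)$ this quantity is nonnegative, so $\gamma(x) 1_{\rr(x) \neq \rr^*(x)} = 0 \leq \Delta\sC_{\tdef,\sR}(r,x)$; when $\rr(x) \neq \rr^*(x)$, the index $\rr(x)$ belongs to the set $\{j : j \neq \rr^*(x)\}$ over which $\gamma(x)$ takes an infimum, hence $\Delta\sC_{\tdef,\sR}(r,x) \geq \gamma(x)$. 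In either case $\gamma(x) 1_{\rr(x) \neq \rr^*(x)} \leq \Delta\sC_{\tdef,\sR}(r,x)$ pointwise. Taking expectations over $X$ and using $\E[\Delta\sC_{\tdef,\sR}(r,X)] = \sE_{\tdef}(r) - \sE_{\tdef}(r^*)$ (which holds because $r^*$ attains the minimal conditional cost at every $x$) gives the claim.

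For the first inequality, for any $u > 0$ I would write
\[
\E[\gamma(X) 1_{\rr(X) \neq \rr^*(X)}] = \int_0^{+\infty} \Pr[\gamma(X) 1_{\rr(X) \neq \rr^*(X)} > t]\,dt \geq \int_0^u \paren*{\Pr[\rr(X) \neq \rr^*(X)] - \Pr[\gamma(X) \leq t]}\,dt,
\]
using $1_{\gamma(X) > t}\, 1_{\rr(X) \neq \rr^*(X)} \geq 1_{\gamma(X) > t} - 1_{\rr(X) = \rr^*(X)}$, then apply the two-stage Tsybakov noise assumption $\Pr[\gamma(X) \leq t] \leq B t^{\alpha/(1-\alpha)}$ and integrate to obtain $\E[\gamma(X) 1_{\rr(X) \neq \rr^*(X)}] \geq u \Pr[\rr(X) \neq \rr^*(X)] - B(1-\alpha) u^{1/(1-\alpha)}$; optimizing the right-hand side over $u$ (at $u = (\Pr[\rr(X) \neq \rr^*(X)]/B)^{(1-\alpha)/\alpha}$) and rearranging yields $\Pr[\rr(X) \neq \rr^*(X)] \leq c\,\E[\gamma(X) 1_{\rr(X) \neq \rr^*(X)}]^\alpha$ with $c = B^{1-\alpha}/\alpha^\alpha$. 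There is no serious obstacle in this argument, as it is a direct transcription of the single-stage proof; the only point needing a moment's care is verifying the pointwise bound $\gamma(x) 1_{\rr(x) \neq \rr^*(x)} \leq \Delta\sC_{\tdef,\sR}(r,x)$ when the Bayes classifier $r^*$ is not unique, but then a second index attains the minimal conditional cost, forcing $\gamma(x) = 0$, so the bound still holds trivially.
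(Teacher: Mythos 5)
Your proposal is correct and follows exactly the route the paper takes: the second inequality from the pointwise bound $\gamma(x)\,1_{\rr(x)\neq\rr^*(x)}\leq\Delta\sC_{\tdef,\sR}(r,x)$ via Lemma~\ref{lemma:tdef}, and the first inequality by transcribing the Lebesgue-integral/optimization argument of Lemma~\ref{lemma:MM} (Lemma~18 of \citet{mao2024enhanced}) with $h,\hh,\hh^*$ replaced by $r,\rr,\rr^*$. The paper's printed proof simply cites these two ingredients, so your write-up is in fact a more explicit version of the same argument, including the correct handling of non-unique Bayes classifiers.
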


\begin{proof}
The second inequality follows directly from the definition of
$\gamma(x)$ and Lemma~\ref{lemma:def}. The proof of the first
inequality follows the same steps as the first part of the proof of
Lemma 18 in \citep{mao2025enhanced}.
\ignore{  
  We prove the first inequality, the second one follows immediately the definition
  of $\gamma(x)$ and Lemma~\ref{lemma:def}.
  By definition of the expectation and the Lebesgue integral, for
  any $u > 0$, we can write
\begin{align*}
    \E[\gamma(X) 1_{\rr(X) \neq \rr^*(X)}]
    & = \int_0^{+\infty} \Pr[\gamma(X) 1_{\rr(X) \neq \rr^*(X)} > t] \, dt\\
    & \geq \int_0^{u} \Pr[\gamma(X) 1_{\rr(X) \neq \rr^*(X)} > t] \, dt\\
    & = \int_0^{u} \E[1_{\gamma(X) > t} \, 1_{\rr(X) \neq \rr^*(X)}] \, dt\\
    & = \int_0^{u} \paren*{\E[1_{\gamma(X) > t}] - \E[1_{\gamma(X) > t} 1_{\rr(X) = \rr^*(X)}]} \, dt\\
    & \geq \int_0^{u} \paren*{\E[1_{\gamma(X) > t}] - \E[1_{\rr(X) = \rr^*(X)}]} \, dt\\
    & = \int_0^{u} \paren*{\Pr[\rr(X) \neq \rr^*(X)] - \Pr[\gamma(X) \leq t]} \, dt\\
    & \geq u \Pr[\rr(X) \neq \rr^*(X)] - \int_0^{u} B t^{\frac{\alpha}{1 - \alpha}}  \, dt\\
    & = u \Pr[\rr(X) \neq \rr^*(X)] - B (1 - \alpha) u^{\frac{1}{1 - \alpha}}.
  \end{align*}
 Taking the derivative and choosing $u$ to maximize the above gives $u = \bracket*{\frac{\Pr[\rr(X) \neq \rr^*(X)]}{B}}^{\frac{1- \alpha}{\alpha}}$. Plugging in this choice of $u$ gives
  \[
  \E[\gamma(X) 1_{\rr(X) \neq \rr^*(X)}] \geq \bracket*{\frac{1}{B}}^{\frac{1 - \alpha}{\alpha}} \alpha \Pr[\rr(X) \neq \rr^*(X)]^{\frac{1}{\alpha}},
  \]
which can be rewritten as
$\Pr[\rr(X) \neq \rr^*(X)] \leq c \E[\gamma(X) 1_{\rr(X) \neq \rr^*(X)}]^\alpha$
for $c = \frac{B^{1 - \alpha}}{\alpha^{\alpha}}$.
}
\end{proof}

\subsubsection{Enhanced bounds in two-stage scenario (Theorem~\ref{Thm:multi-tdef} and Theorem~\ref{Thm:MM-multi-tdef})}
\label{app:enhanced-bounds-tdef}

The following result gives an
$\sR$-consistency bound based on the quantity $1_{\rr(x) \neq
  \rr^*(x)}$.
\begin{restatable}{theorem}{MultiTdef}
  \label{Thm:multi-tdef}
  Consider the setting of two-stage multiple-expert deferral. Assume
  that the following holds for all $h \in \sR$ and $x \in \sX$:
  $\Delta\sC_{\tdef,\sR}(r, x) \leq \Gamma \paren*{\Delta
    \sC_{\sfL,\sR}(r, x)}$, with $\Gamma(x) = x^{\frac{1}{s}}$, for
  some $s \geq 1$ with conjugate number $t \geq 1$, that is
  $\frac{1}{s} + \frac{1}{t} = 1$.  Then, for any $r \in \sR$,
\ifdim\columnwidth=\textwidth {
\begin{equation*}
  \sE_{\tdef}(r) - \sE^*_{\tdef}(\sR) + \sM_{\tdef}(\sR)
  \leq \E_X[1_{\rr(X) \neq \rr^*(X)}]^{\frac{1}{t}} \paren*{\sE_{\sfL}(r)
    - \sE^*_{\sfL}(\sR) + \sM_{\sfL}(\sR)}^{\frac{1}{s}}.     
\end{equation*}
}\else
{
\begin{align*}
 & \sE_{\tdef}(r) - \sE^*_{\tdef}(\sR) + \sM_{\tdef}(\sR)\\
  & \qquad \leq \E_X[1_{\rr(X) \neq \rr^*(X)}]^{\frac{1}{t}} \paren*{\sE_{\sfL}(r)
    - \sE^*_{\sfL}(\sR) + \sM_{\sfL}(\sR)}^{\frac{1}{s}}.
\end{align*}
}\fi
\end{restatable}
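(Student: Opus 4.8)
The plan is to transcribe the single-stage argument behind Theorem~\ref{Thm:multi}, invoking Theorem~\ref{Thm:new-bound-power} as a black box with $\sfL_1 = \sfL$ and $\sfL_2 = \tdef$. First I would fix $\e > 0$ and set $\beta(r, x) = 1_{\rr(x) \neq \rr^*(x)} + \e$. This function is strictly positive and bounded above by $1 + \e$, so the conditions $\sup_{x \in \sX} \alpha(r, x) < +\infty$ and $\E_x[\beta(r, x)] < +\infty$ required by Theorem~\ref{Thm:new-bound-power} hold automatically. The observation that makes this choice of $\beta$ work is that the conditional regret $\Delta\sC_{\tdef,\sR}(r, x)$ is supported on the event $\{\rr(x) \neq \rr^*(x)\}$: since $r^*$ is the Bayes classifier for $\tdef$ (Lemma~\ref{lemma:tdef}), we have $0 \leq \Delta\sC_{\tdef,\sR}(r, x) \leq \sum_{y} p(y \mid x)\big(c_{\rr(x)}(x, y) - c_{\rr^*(x)}(x, y)\big)$, and the right-hand side vanishes whenever $\rr(x) = \rr^*(x)$. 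On this support, $\beta(r, x) \geq 1$.

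Given this, I would chain the pointwise inequalities, valid for all $r \in \sR$ and $x \in \sX$:
\[
\frac{\Delta\sC_{\tdef,\sR}(r, x)\,\E_X[\beta(r, x)]}{\beta(r, x)}
\leq \Delta\sC_{\tdef,\sR}(r, x)\,\E_X[\beta(r, x)]
\leq \E_X[\beta(r, x)]\,\Delta\sC^{\frac{1}{s}}_{\sfL,\sR}(r, x),
\]
where the first step uses $\beta(r, x) \geq 1$ on the support of $\Delta\sC_{\tdef,\sR}(r, x)$ (and is the trivial $0 \leq 0$ elsewhere), and the second is exactly the hypothesis $\Delta\sC_{\tdef,\sR}(r, x) \leq \Gamma(\Delta\sC_{\sfL,\sR}(r, x)) = \Delta\sC^{1/s}_{\sfL,\sR}(r, x)$. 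This is precisely the hypothesis of Theorem~\ref{Thm:new-bound-power} with $\alpha(r, x) = \E_X[\beta(r, x)]^{s}$, for which $\alpha^{t/s}(r, x) = \E_X[\beta(r, x)]^{t}$ and hence $\gamma(r) = \E_X\big[\beta^{t}(r, x)\big]^{1/t}$. Applying that theorem gives
\[
\sE_{\tdef}(r) - \sE^*_{\tdef}(\sR) + \sM_{\tdef}(\sR)
\leq \E_X\big[\beta^{t}(r, x)\big]^{\frac{1}{t}}\paren*{\sE_{\sfL}(r) - \sE^*_{\sfL}(\sR) + \sM_{\sfL}(\sR)}^{\frac{1}{s}}.
\]
Since this holds for every $\e > 0$, I would let $\e \to 0^{+}$ and use $\big(1_{\rr(X) \neq \rr^*(X)}\big)^{t} = 1_{\rr(X) \neq \rr^*(X)}$ to conclude.

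I expect the only step needing attention — and it is a mild one — to be the verification that $\Delta\sC_{\tdef,\sR}(r, x)$ is supported on $\{\rr(x) \neq \rr^*(x)\}$; this uses Lemma~\ref{lemma:tdef} and the characterization of $r^*$ in the same role that Lemma~\ref{lemma:def} plays in the single-stage proof, so that $\Delta\sC_{\tdef,\sR}(r,x)=0$ whenever $\rr(x)=\rr^*(x)$. Everything else is a direct translation of the single-stage argument, with the H\"older-type bookkeeping absorbed into Theorem~\ref{Thm:new-bound-power} and the $\e \to 0^{+}$ limit handled exactly as there.
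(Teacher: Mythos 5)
Your proposal is correct and follows essentially the same route as the paper's proof in Appendix~\ref{app:multi-tdef}: the same choice of $\beta(r,x) = 1_{\rr(x)\neq\rr^*(x)} + \e$, the same invocation of Theorem~\ref{Thm:new-bound-power} with $\alpha(r,x) = \E_X[\beta(r,x)]^s$, and the same $\e \to 0^{+}$ limit. Your explicit justification that $\Delta\sC_{\tdef,\sR}(r,x)$ vanishes on $\{\rr(x)=\rr^*(x)\}$ (so that $\beta \geq 1$ on the support of the regret) is the step the paper leaves implicit, and you have it right.
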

The proof of Theorem~\ref{Thm:multi-tdef} is included in
Appendix~\ref{app:multi-tdef}. As noted for a similar result in the
single-stage scenario, Theorem~\ref{Thm:multi-tdef} offers a more
favorable theoretical guarantee than standard $\sH$-consistency
bounds, assuming $\Delta\sC_{\tdef,\sR}(r, x) \leq \Gamma
\paren*{\Delta \sC_{\sfL,\sR}(r, x)}$.

Next, we assume that the Tsybakov noise assumption holds and also
that there is no approximation error, that is $\sM_{\ldef}(\sH)
= 0$. 

\begin{restatable}{theorem}{MMMultiTdef}
  \label{Thm:MM-multi-tdef}
  Consider the setting of two-stage multiple-expert deferral where the
  Tsybakov noise assumption holds, and no approximation error occurs,
  i.e., $\sE^*_{\tdef}(\sR) = \sE^*_{\tdef}(\sR_{\rm{all}})$. Assume
  that the following holds for all $r \in \sR$ and $x \in \sX$:
  $\Delta\sC_{\tdef,\sR}(r, x) \leq \Gamma \paren*{\Delta
    \sC_{\sfL,\sR}(r, x)}$, with $\Gamma(x) = x^{\frac{1}{s}}$, for
  some $s \geq 1$.  Then, for any $r \in \sR$,
\ifdim\columnwidth=\textwidth {
\begin{equation*}
  \sE_{\tdef}(r) - \sE^*_{\tdef}(\sR)
  \leq c^{\frac{s - 1}{s - \alpha(s - 1)}}\bracket*{\sE_{\sfL}(r)
    - \sE^*_{\sfL}(\sR) + \sM_{\sfL}(\sR)}^{\frac{1}{s - \alpha(s - 1)}}.
\end{equation*}
}\else
{
\begin{align*}
  &\sE_{\tdef}(r) - \sE^*_{\tdef}(\sR)\\
  &\qquad \leq c^{\frac{s - 1}{s - \alpha(s - 1)}}\bracket*{\sE_{\sfL}(r)
    - \sE^*_{\sfL}(\sR) + \sM_{\sfL}(\sR)}^{\frac{1}{s - \alpha(s - 1)}}.
\end{align*}
}\fi
\end{restatable}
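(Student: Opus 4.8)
The plan is to mirror the structure of the single-stage proof of Theorem~\ref{Thm:MM-multi} (Appendix~\ref{app:MM-multi}), transposing each ingredient from the single-stage deferral setting to the two-stage one. The essential tools are already in place: Theorem~\ref{Thm:multi-tdef} provides an $\sR$-consistency bound with the hypothesis-dependent factor $\E_X[1_{\rr(X) \neq \rr^*(X)}]^{1/t}$, and Lemma~\ref{lemma:MM-tdef} translates the two-stage Tsybakov noise assumption into the bound $\E[1_{\rr(X) \neq \rr^*(X)}] \leq c\,[\sE_{\tdef}(r) - \sE_{\tdef}(r^*)]^{\alpha}$. The no-approximation-error hypothesis $\sE^*_{\tdef}(\sR) = \sE^*_{\tdef}(\sR_{\rm all})$ identifies $\sE^*_{\tdef}(\sR)$ with $\sE_{\tdef}(r^*)$, so the right-hand side of the noise inequality is exactly $c\,[\sE_{\tdef}(r) - \sE^*_{\tdef}(\sR)]^{\alpha}$, and moreover $\sM_{\tdef}(\sR) \geq 0$ can be dropped on the left of Theorem~\ref{Thm:multi-tdef}'s bound.

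First, I would invoke Theorem~\ref{Thm:multi-tdef} to write
\[
\sE_{\tdef}(r) - \sE^*_{\tdef}(\sR)
\leq \E_X[1_{\rr(X) \neq \rr^*(X)}]^{\frac{1}{t}}
\paren*{\sE_{\sfL}(r) - \sE^*_{\sfL}(\sR) + \sM_{\sfL}(\sR)}^{\frac{1}{s}}.
\]
Then I would apply Lemma~\ref{lemma:MM-tdef} together with the no-approximation-error assumption to upper bound $\E_X[1_{\rr(X) \neq \rr^*(X)}]^{1/t}$ by $c^{1/t}\,[\sE_{\tdef}(r) - \sE^*_{\tdef}(\sR)]^{\alpha/t}$, yielding
\[
\sE_{\tdef}(r) - \sE^*_{\tdef}(\sR)
\leq c^{\frac{1}{t}} \bracket*{\sE_{\tdef}(r) - \sE^*_{\tdef}(\sR)}^{\frac{\alpha}{t}}
\paren*{\sE_{\sfL}(r) - \sE^*_{\sfL}(\sR) + \sM_{\sfL}(\sR)}^{\frac{1}{s}}.
\]
Finally, dividing both sides by $[\sE_{\tdef}(r) - \sE^*_{\tdef}(\sR)]^{\alpha/t}$ and raising to the power $\frac{1}{1 - \alpha/t}$ gives the stated bound once one simplifies the exponents: using $\frac{1}{t} = 1 - \frac{1}{s}$, the exponent on the surrogate term becomes $\frac{1/s}{1 - \alpha/t} = \frac{1}{s - \alpha(s-1)}$ and the constant becomes $c^{\frac{1/t}{1 - \alpha/t}} = c^{\frac{s-1}{s - \alpha(s-1)}}$. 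A small point to handle cleanly is the degenerate case $\sE_{\tdef}(r) - \sE^*_{\tdef}(\sR) = 0$, where the division is not needed and the inequality holds trivially; I would dispatch this separately at the start.

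The main obstacle is essentially bookkeeping rather than conceptual: verifying that the exponent algebra $\frac{1/s}{1 - \alpha/t} = \frac{1}{s - \alpha(s-1)}$ is carried out correctly using the conjugacy relation $\frac{1}{s} + \frac{1}{t} = 1$, and being careful that the division step is legitimate (handled by the degenerate-case remark). The one structural subtlety worth double-checking is that Lemma~\ref{lemma:MM-tdef} as stated bounds $\E[\cdots]$ in terms of $\sE_{\tdef}(r) - \sE_{\tdef}(r^*)$, and that under the no-approximation-error assumption $\sE_{\tdef}(r^*)$ coincides with $\sE^*_{\tdef}(\sR)$ — this is exactly where the hypothesis $\sE^*_{\tdef}(\sR) = \sE^*_{\tdef}(\sR_{\rm all})$ enters, since $r^*$ is the Bayes classifier for $\tdef$ with $\sE_{\tdef}(r^*) = \sE^*_{\tdef}(\sR_{\rm all})$. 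Everything else follows the single-stage template verbatim.
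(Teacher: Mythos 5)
Your proposal is correct and follows essentially the same route as the paper's proof: the paper re-derives the intermediate bound of Theorem~\ref{Thm:multi-tdef} inline via Theorem~\ref{Thm:new-bound-power}, then applies the Tsybakov noise inequality of Lemma~\ref{lemma:MM-tdef} and divides by $\bracket*{\sE_{\tdef}(r) - \sE^*_{\tdef}(\sR)}^{\alpha/t}$, exactly as you do. Your explicit exponent bookkeeping ($\tfrac{1/s}{1-\alpha/t} = \tfrac{1}{s-\alpha(s-1)}$, $\tfrac{1/t}{1-\alpha/t} = \tfrac{s-1}{s-\alpha(s-1)}$) and the separate treatment of the degenerate case are both correct and slightly more careful than the paper's one-line conclusion.
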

The proof can be found in Appendix~\ref{app:MM-multi}.  As in the
single-stage scenario, in the case of $\alpha \to 1$, where Tsybakov
noise corresponds to Massart's noise assumption, this provides an
$\sH$-consistency bound with a linear functional form, improving upon
the standard $\sH$-consistency bounds, which have the functional form
$\Gamma(x) = x^{\frac{1}{s}}$, if they exist. This demonstrates that
for any smooth surrogate loss with an $\sH$-consistency bound, under
Massart's noise assumption, we can obtain $\sH$-consistency bounds as
favorable as those of $\sfL = \sfL_{\Psi_q}$ with $q = 1$, which always
admits a linear dependency, as shown in
Theorem~\ref{thm:bound-two-multi}.

For example, Theorem~\ref{thm:bound-two-multi} shows that square-root
$\sH$-consistency bounds hold ($s = \frac{1}{2}$) for $\sfL =
\sfL_{\Psi_q}$ with $q \in [0, 1)$. Thus,
  Theorem~\ref{Thm:MM-multi-tdef} provides refined $\sH$-consistency
  bounds for these surrogate losses: a linear dependence when
  Massart's noise assumption holds ($\alpha \to 1$) and an
  intermediate rate between linear and square-root for other values of
  $\alpha$ within the range $(0, 1)$.

Furthermore, the realizability assumption can be viewed as a special
case of Massart's noise assumption. Thus,
Theorem~\ref{Thm:MM-multi-tdef} provides intermediate guarantees
between realizable $\sH$-consistency and $\sH$-consistency bounds for
two-stage deferral surrogate losses, such as $\sfL_{\Phi}$ in
\eqref{eq:two-stage-two-expert} for the two-expert case and
$\sfL_{\Psi}$ in \eqref{eq:two-stage-multiple-expert} for the
multiple-expert case.

\subsubsection{Proof of Theorem~\ref{Thm:multi-tdef}}
\label{app:multi-tdef}

\MultiTdef*
\begin{proof}
  Fix $\e > 0$ and define $\beta(r, x) = 1_{\rr(x) \neq \rr^*(x)} +
  \e$.  By Lemma~\ref{lemma:tdef}, \[\Delta\sC_{\tdef,\sR}(r, x) = \sum_{y\in \sY}  p(y | x) c_{\rr(x)}(x, y) - \sum_{y\in \sY}  p(y | x) c_{\rr^*(x)}(x, y),\] we have \[\frac{\Delta\sC_{\tdef,\sR}(r, x)
    \E_X[\beta(r, x)]}{\beta(r, x)} \leq \Delta\sC_{\tdef,\sR}(r, x)
  \E_X[\beta(r, x)],\] thus the following inequality holds
\[
\frac{\Delta\sC_{\tdef,\sR}(r, x) \E_X[\beta(r, x)]}{\beta(r, x)}
\leq \E_X[\beta(r, x)] \Delta \sC^{\frac{1}{s}}_{\sfL,\sR}(r, x).
\]
By Theorem~\ref{Thm:new-bound-power}, with $\alpha(r, x) =
\E_X[\beta(r, x)]^s$, we have
\begin{align*}
  \sE_{\tdef}(r) - \sE^*_{\tdef}(\sR) + \sM_{\tdef}(\sR)
  & \leq \E_X[\beta^t(r, x)]^{\frac{1}{t}} \paren*{\sE_{\sfL}(r) - \sE^*_{\sfL}(\sR) + \sM_{\sfL}(\sR)}^{\frac{1}{s}}.
\end{align*}
Since the inequality holds for any $\e > 0$, it implies:
\begin{align*}
  \sE_{\tdef}(r) - \sE^*_{\tdef}(\sR) + \sM_{\tdef}(\sR)
  & \leq \E_X\bracket*{\paren*{1_{\rr(X) \neq \rr^*(X)}}^t}^{\frac{1}{t}} \paren*{\sE_{\sfL}(r) - \sE^*_{\sfL}(\sR) + \sM_{\sfL}(\sR)}^{\frac{1}{s}}\\
  & = \E_X[1_{\rr(X) \neq \rr^*(X)}]^{\frac{1}{t}} \paren*{\sE_{\sfL}(r) - \sE^*_{\sfL}(\sR) + \sM_{\sfL}(\sR)}^{\frac{1}{s}} \tag{$\paren*{1_{\rr(X) \neq \rr^*(X)}}^t = 1_{\rr(X) \neq \rr^*(X)}$}.
\end{align*}
This completes the proof.
\end{proof}

\subsubsection{Proof of Theorem~\ref{Thm:MM-multi-tdef}}
\label{app:MM-multi-tdef}

\MMMultiTdef*
\begin{proof}
  Fix $\e > 0$ and define $\beta(r, x) = 1_{\rr(x) \neq \rr^*(x)} + \e$.  By Lemma~\ref{lemma:def}, \[\Delta\sC_{\tdef,\sR}(r, x) = \sum_{y\in \sY}  p(y | x) c_{\rr(x)}(x, y) - \sum_{y\in \sY}  p(y | x) c_{\rr^*(x)}(x, y),\] we have \[\frac{\Delta\sC_{\tdef,\sR}(r, x)
    \E_X[\beta(r, x)]}{\beta(r, x)} \leq \Delta\sC_{\tdef,\sR}(r, x)
  \E_X[\beta(r, x)],\] thus the following inequality holds
\[
\frac{\Delta\sC_{\tdef,\sR}(r, x) \E_X[\beta(r, x)]}{\beta(r, x)}
\leq \E_X[\beta(r, x)] \Delta \sC^{\frac{1}{s}}_{\sfL,\sR}(r, x).
\]
By Theorem~\ref{Thm:new-bound-power}, with $\alpha(r, x) =
\E_X[\beta(r, x)]^s$, we have
\begin{align*}
  \sE_{\tdef}(r) - \sE^*_{\tdef}(\sR)
  & \leq \E_X[\beta^t(r, x)]^{\frac{1}{t}} \paren*{\sE_{\sfL}(r) - \sE^*_{\sfL}(\sR) + \sM_{\sfL}(\sR)}^{\frac{1}{s}}.
\end{align*}
Since the inequality holds for any $\e > 0$, it implies:
\begin{align*}
  \sE_{\tdef}(r) - \sE^*_{\tdef}(\sR)
  & \leq \E_X\bracket*{\paren*{1_{\rr(X) \neq \rr^*(X)}}^t}^{\frac{1}{t}} \paren*{\sE_{\sfL}(r) - \sE^*_{\sfL}(\sR) + \sM_{\sfL}(\sR)}^{\frac{1}{s}}\\
  & = \E_X[1_{\rr(X) \neq \rr^*(X)}]^{\frac{1}{t}} \paren*{\sE_{\sfL}(r) - \sE^*_{\sfL}(\sR) + \sM_{\sfL}(\sR)}^{\frac{1}{s}} \tag{$\paren*{1_{\mu(r, X) > 0}}^t = 1_{\mu(r, X) > 0}$}\\
& \leq c^{\frac{1}{t}} \bracket*{\sE_{\tdef}(r) - \sE^*_{\tdef}(\sR)}^{\frac{\alpha}{t}} \paren*{\sE_{\sfL}(r) - \sE^*_{\sfL}(\sR) + \sM_{\sfL}(\sR)}^{\frac{1}{s}}
  \tag{Tsybakov noise assumption}
\end{align*}
The result follows after dividing both sides by $\bracket*{\sE_{\tdef}(r) - \sE^*_{\tdef}(\sR)}^{\frac{\alpha}{t}}$.
\end{proof}

\end{document}